\documentclass[11pt]{article}
\usepackage{times}
\usepackage{natbib}
\usepackage{amsmath,bm}
\usepackage{microtype}
\usepackage{url}
\usepackage{booktabs}
\usepackage{mathtools}
\usepackage{amsthm}
\usepackage{graphicx}
\usepackage{subcaption}
\usepackage{titletoc}
\usepackage{thmtools}
\usepackage{amssymb}
\usepackage[margin=1in]{geometry}
\usepackage[colorlinks=true, allcolors=blue, pagebackref]{hyperref}
\renewcommand*{\backrefalt}[4]{%
    \ifcase #1 \footnotesize{(not cited)}%
    \or        \footnotesize{(cited on page~#2)}%
    \else      \footnotesize{(cited on pages~#2)}%
    \fi}

\newtheorem{theorem}{Theorem}
\newtheorem{corollary}[theorem]{Corollary}
\newtheorem{lemma}[theorem]{Lemma}
\newtheorem{proposition}[theorem]{Proposition}

\newtheorem{remark}[theorem]{Remark}

\newcommand{\bb}[1]{\mathbb{#1}}
\renewcommand{\P}{\bb{P}}
\newcommand{\E}{\bb{E}}
\newcommand{\R}{\bb{R}}
\newcommand{\Var}{\mathrm{Var}}
\newcommand{\Cov}{\mathrm{Cov}}
\newcommand{\I}{\operatorname{I}}
\newcommand{\dd}{\,\mathrm{d}}
\newcommand{\Nml}{\mathcal{N}}
\newcommand{\eps}{\varepsilon}
\newcommand{\Ker}{\mathrm{Ker}}
\newcommand{\li}{\lambda_i}
\DeclareMathOperator{\Tr}{Tr}
\DeclarePairedDelimiter{\abs}{\lvert}{\rvert}
\DeclarePairedDelimiter{\Norm}{\lVert}{\rVert}

\title{Diffusion Removes Langevin's Conditioning Dependence:  \\ A Sharp Gaussian Analysis}

\author{Adam Perbost, Francis Bach, Pierre Marion \\
Inria, École normale supérieure -- PSL Research University\\
Paris, France\\
\texttt{$\{$adam.perbost,francis.bach,pierre.marion$\}$@inria.fr} \\
}

\begin{document}

\maketitle

\begin{abstract}
Despite their empirical success, why diffusion models overcome the bottlenecks of classical score-based samplers remains unclear. In this work, we leverage Gaussian distributions to isolate this phenomenon. We establish 2-Wasserstein convergence bounds for optimized hyperparameters, showing that diffusion processes achieve a sampling error of $O(\sqrt{d\lambda_{\max}}\log N/N)$, where $d$ is the dimension, $N$ the number of sampling steps, and $\lambda_{\max}$ the largest eigenvalue of the target covariance matrix. Unadjusted and underdamped Langevin dynamics suffer from an additional $\sqrt\kappa$ factor, where $\kappa$ is the condition number. These rates follow from spectral bounds which are sharp: we confirm them via matching first-order asymptotics as $N\rightarrow\infty$. Our analysis provides a rigorous characterization, in the Gaussian setting, of how time-dependent score trajectories remove condition-number dependence during sampling. By contrast, in the learning phase, we show that estimating the unnoised score by gradient descent leads to essentially the same estimator as estimating a noisy score, which suggests that the benefits of noising do not come from the learning phase.
\end{abstract}

\section{Introduction}
Since their introduction, diffusion models \citep{sohl2015deep,song2019generative,ho2020denoising,song2021scorebased} have achieved state-of-the-art performance across a wide range of domains \citep{dhariwal2021diffusion,kong2020diffwave,hoogeboom2022equivariant,watson2023novo}. Training diffusion models requires estimating the gradient of the log-density (the so-called \textit{score}) of a noisy version of the target. Then samples are produced by iterative denoising through a formula involving this score.
Taking a step back, iterative score-based sampling strategies long predate modern diffusion models, as epitomized by the celebrated Langevin algorithm \citep{pavliotis2014stochastic,roberts1996exponential} and its many variants such as underdamped Langevin \citep{leimkuhler2015molecular}.
The key factors explaining their success compared to those more classical algorithms are still unclear: what exact role does the noising process in diffusion models play, and how does it interact with the learning and sampling phases?

Our primary goal in this work is to prove a separation in sampling efficiency between diffusion models and Langevin dynamics in terms of their dependence on the condition number. This sheds light on the practical effectiveness of diffusion models on ill-conditioned distributions such as natural images \citep{Guth2022wave,field1987relations}. 
To do so, we consider the class of (centered) Gaussian targets. Naturally, sampling from a Gaussian distribution is not our primary concern here. Rather, we use this analytically tractable setting to derive sharp convergence rates accompanied by matching asymptotics, in order to highlight how diffusion models overcome limitations of other score-based sampling methods. As discussed below, deriving tight condition-number dependence for more general target classes such as log-concave distributions is a long-standing open problem \citep{cheng2018underdamped,dalalyan2020sampling}, which is outside the scope of the present paper. Additionally, we focus for most of the paper on the sampling phase, comparing the sampling convergence rates of diffusion, unadjusted Langevin algorithm (ULA) and underdamped Langevin dynamics (ULD). Assuming access to the true score yields a clean analysis and the separation between diffusion and Langevin, hence for simplicity we focus on this case. The connection with the estimation of the score in the learning phase is discussed in the latter part of our work.

\paragraph{Contributions.}
We establish upper bounds on the 2-Wasserstein error between the target distribution and the distribution sampled from the diffusion, ULA and ULD. Convergence rates are derived in Section~\ref{results} by optimizing these bounds with respect to hyperparameters (see Table~\ref{tab:summary_rates} for summary). The bounds for diffusion and for Langevin differ by a factor proportional to the square root of the condition number. Thanks to explicit computations of the Wasserstein errors, we confirm these rates via matching first-order asymptotics. 
By contrast, in the learning phase, we show in Section~\ref{learning} that estimating the unnoised score by gradient descent leads to essentially the same estimator as estimating a noisy score, the two differing only in how they regularize the directions unseen in the data, which suggests that the benefits of noising do not come from the learning phase.

\begin{table}[t]
\caption{Convergence rates and $\eps$-complexities for the Gaussian target. Rates are the orders of $W_2(\P_N,\P)$ as $N\to\infty$ for fixed $\Sigma$ (Theorems~\ref{thm:Langevin_rates} and~\ref{thm:diffusion_spectral_bound}). The $\eps$-complexity is the iteration budget $N$ needed to achieve $W_2(\P_N,\P)\leq\eps$, where $\tilde O$ hides logarithmic factors. Diffusion models remove both the conditioning factor $\sqrt{\kappa}$ of Langevin samplers and the burn-in of order $\kappa$.}

\vspace*{-.5cm}

\label{tab:summary_rates}
\begin{center}
\begin{tabular}{lcc}
\toprule
\textbf{Algorithm} & \textbf{Asymptotic equivalent} & \textbf{$\bm{\eps}$-complexity} \\
\midrule
ULA       & $\Theta\!\left(\lambda_{\max}\sqrt{\Tr(\Sigma^{-1})}\,\frac{\log N}{N}\right)$ & $\tilde{O}\!\left(\kappa + \frac{\sqrt{d\kappa\lambda_{\max}}}{\eps}\right)$ \\
ULD       & $\Theta\!\left(\lambda_{\max}\sqrt{\Tr(\Sigma^{-1})}\,\frac{\log N}{N}\right)$ & $\tilde{O}\!\left(\kappa+\frac{\sqrt{d\kappa\lambda_{\max}}}{\eps}\right)$ \\
Diffusion & $\Theta\!\left(\sqrt{\Tr\Sigma}\,\frac{\log N}{N}\right)$                    & $\tilde{O}\!\left(\frac{\sqrt{d\lambda_{\max}}}{\eps}\right)$ \\
\bottomrule
\end{tabular}
\end{center}
\end{table}

\paragraph{Related work.}
For ULA, a similar analysis of the two error terms was already performed, for example, by~\cite{durmus2019analysis,wibisono2018sampling} and specifically in the Gaussian case by \cite{pedregosa2023ulaq}. Building upon these works, we perform hyperparameter optimization and compute the asymptotic equivalent (rather than upper bounds).
Many works have studied ULD in the log-concave case, see, \textit{e.g.}, \cite{cheng2018underdamped,dalalyan2020sampling,leimkuhler2024contraction,kim2026analysis} for studies of the stochastic Euler scheme and Euler--Maruyama (EM) integrators. This line of work yields loose dependence on the condition number due to the bias term induced by non-vanishing step-sizes (see, \textit{e.g.}, Eq.~(11) in \cite{dalalyan2020sampling} with a $\kappa^{3/2}$ in the complexity bound compared to our $\sqrt{\kappa}$). A line of work has proved improved condition dependence for Gaussian targets  \citep{shen2019randomized,apers2022hamiltonian,gouraud2025hmc}, albeit with more complex integrators (splitting, random step-sizes, or Metropolis corrections). To our knowledge, the tight study of the bias of ULD for a Gaussian target and the EM integrator is a novel contribution.

For diffusion models, \cite{pierret2025diffusion} derive and numerically analyze error terms for Gaussian targets, but do not analytically derive optimal hyperparameters, discuss the dependence on the condition number, nor derive asymptotic equivalents. Closer are the works by \cite{hurault2025score} and \cite{hurault2026geometry}, which derive Wasserstein bounds for Gaussian data and discuss optimal hyperparameter choices depending on the covariance spectrum. A key difference with our work is that their analysis holds asymptotically for small step-sizes, whereas we make a fully non-asymptotic analysis (albeit in less generality), which enables us to capture the interaction between the step-size and other parameters, and to derive optimal step complexity with respect to our bounds.
Wasserstein bounds beyond the Gaussian case are given for instance by \cite{strasman2025an,beyler2025convergence}. 

\section{Sampling: diffusion removes the condition-number dependence}
\label{results}

\subsection{Problem setup}
\label{problem_setup}
We denote by $\P$ the target $d$-dimensional centered Gaussian distribution $\Nml(0,\Sigma)$. Let $(\lambda_i)_{1\leq i \leq d}$ denote the eigenvalues of the positive-definite covariance matrix $\Sigma$, and $\lambda_{\min}$ and $\lambda_{\max}$ be respectively the smallest and largest eigenvalues. The ratio 
\[
\kappa := \frac{\lambda_{\max}}{\lambda_{\min}}
\]
is the condition number of $\Sigma$, which measures the degree of anisotropy of $\P$. Ill-conditioning is a classical source of difficulty for Langevin-based sampling methods, as their convergence rates typically deteriorate with $\kappa$ \citep{cheng2018underdamped,dalalyan2020sampling}.

Let $p$ denote the Lebesgue density of $\P$. We define the score $s$ of the target distribution as
\[
s: x \mapsto \nabla \log p(x) = -\Sigma^{-1}x.
\]
We assume that the score is known exactly in this section; the learning problem is studied separately in Section~\ref{learning}.

We measure sampling error in the 2-Wasserstein distance, denoted by $W_2$. For two centered Gaussians whose covariance matrices $\Sigma_1$ and $\Sigma_2$ commute, the distance reduces to the Fréchet distance $\Norm{\Sigma_1^{1/2}-\Sigma_2^{1/2}}_F$, with $\Norm{\cdot}_F$ the Frobenius norm \citep{gelbrich1990formula}.
For each sampling scheme, we denote by $\P_k$ the law of its $k$-th iterate.

\paragraph{Time-homogeneous Langevin samplers.}
We first consider the Langevin process associated with~$\P$,
\begin{equation}
\label{eq:SDE_Langevin}
\dd X_t = s(X_t)\dd t+\sqrt{2}\dd W_t,\, t\in\R_+,\, X_0=0,
\end{equation}
where $(W_t)$ is a standard Brownian motion.

The unadjusted Langevin algorithm (ULA) \citep{dalalyan2017theoretical} is simply the Euler--Maruyama discretization of the SDE in \eqref{eq:SDE_Langevin}. Given a step-size $\eta$, an iteration budget $N$, and a sequence $(\xi_k)$ of independent standard Gaussian vectors in $\R^d$, it is defined by
\begin{equation}
\label{eq:ULA}
\begin{cases}
    X_0 = 0, \\
    \forall k \in \{0,\dots, N-1\},\,X_{k+1} = X_k + \eta s(X_k) + \sqrt{2\eta}\xi_k.
\end{cases}
\end{equation}

We next consider underdamped Langevin dynamics (ULD) \citep{cheng2018underdamped}, which introduces an auxiliary velocity variable into the Langevin dynamics with the aim of accelerating convergence:
\begin{equation}
\label{eq:SDE_UnderdampedLangevin}
\begin{cases}
    \dd X_t = V_t\dd t, \\
    \dd V_t = -\gamma V_t \dd t + s(X_t)\dd t + \sqrt{2\gamma} \dd W_t,
\end{cases}
t \in \R_+,
\end{equation}
with initialization $X_0 = V_0 = 0$, where $\gamma>0$ is the friction coefficient. We discretize these dynamics using the Euler--Maruyama scheme, yielding with the same notation as above
\begin{equation}
\label{eq:Euler_ULD}
\begin{cases}
    X_{k+1} = X_k + \eta V_k, \\
    V_{k+1} = (1-\eta\gamma) V_k +\eta s(X_k) + \sqrt{2\eta\gamma}\xi_k.
\end{cases}
\end{equation}

\paragraph{Time-inhomogeneous diffusion sampler.}
Diffusion-based sampling follows a different principle \citep{sohl2015deep,song2021scorebased}: rather than sampling directly with the score of the target distribution, it constructs a family of perturbed distributions and uses a time-dependent score along this family. Many conventions exist; we study the following one because it leads to telescoping cancellations in the computations, greatly simplifying our analysis (see Appendix~\ref{app:diffusion_closed_form}). Concretely, for a general (centered) target $\bb{Q}$, consider the forward noising process
\[
\begin{cases}
    \dd X_t = \dd W_t,\,t\in [0,T], \\
    X_0 = X \sim \bb{Q},
\end{cases}
\]
whose marginal $\bb{Q}_t$ at time $t$ has Lebesgue density $q_t$ and score $s_t := \nabla \log q_t$. The forward diffusion gradually smooths the target distribution, and $T$ is chosen large enough so that $\bb{Q}_T \approx \Nml(0,T\I)$, a distribution that is straightforward to sample from. Sampling proceeds by reversing this transformation,
\textit{i.e.}, by simulating the reverse-time SDE \citep{anderson1982reverse}
\begin{equation}
\label{eq:reverse_SDE}
\begin{cases}
    \dd Y_t = s_{T-t}(Y_t)\dd t + \dd W_t,\,t\in[0,T-\delta],
    \\ Y_0 \sim \Nml(0,T\I),
\end{cases}
\end{equation}
where $\Nml(0,T\I)$ replaces the intractable true marginal $\bb{Q}_T$ and $\delta>0$ is an early-stopping threshold, standard in the analysis of diffusion models, where it prevents a possible blow-up of the score as $t\to0$ \citep{debortoli2022convergence,benton2024nearly}.
In our Gaussian setting $\bb{Q} = \P$, all these quantities are explicit: the marginal at time $t$ is $\bb{Q}_t = \Nml(0,C_t)$ with $C_t=\Sigma+t\I$, and
\[
s_t: x \mapsto -C_t^{-1}x,
\]
whose Lipschitz constant $(\lambda_{\min}+t)^{-1}$ is bounded by $(\lambda_{\min}+\delta)^{-1}$ on $[\delta,T]$.

Since information about the target is concentrated near $t=T-\delta$, we discretize \eqref{eq:reverse_SDE} using a decreasing sequence of step-sizes $(\eta_k)$ instead of a constant $\eta$. We fix a decreasing grid $T=\tau_0>\tau_1>\cdots>\tau_N=\delta$ defining $\tau_k = \tau_{k-1}-\eta_k$ for $k\in\{1,\dots,N\}$. The diffusion algorithm we study is then
\begin{equation}
\label{eq:diffusion_algorithm}
\begin{cases}
Y_0 \sim \Nml(0,T\I), \\
\forall k= 1,\dots,N,\,\, Y_{k}= Y_{k-1}+\eta_ks_{\tau_{k-1}}(Y_{k-1})+\sqrt{\eta_k}\xi_k.
\end{cases}
\end{equation}
The main difference between diffusion and Langevin lies in the use of a time-dependent score $s_t$ in the former. Langevin methods only use the target score $s$, and therefore operate under a fixed geometry governed by $\kappa$. This leads, as we will show in Theorem~\ref{thm:Langevin_rates}, to a penalizing $\sqrt{\kappa}$ factor in the convergence rate. 
In contrast, diffusion progressively modifies the geometry of the score along the sampling trajectory. Indeed, note that the condition number associated with $s_t$ is
\[
\kappa_t = \frac{t+\lambda_{\max}}{t+\lambda_{\min}} \underset{t\rightarrow\infty}{\longrightarrow} 1,
\]
decreasing from $\kappa_0=\kappa$ to 1. Sampling therefore begins in a near-isotropic regime, and only gradually recovers the original conditioning of the target as the trajectory approaches $t=0$. This intuition is formalized in Theorem~\ref{thm:diffusion_spectral_bound}, showing that the sampling complexity of diffusion is indeed independent of $\kappa$.

\subsection{Langevin samplers: the condition-number dependence}
\label{results_Langevin}
Table~\ref{tab:summary_rates} summarizes our guarantees, with hyperparameters optimized for each method, and Figures~\ref{fig:convergence} and~\ref{fig:ratio-complexity} illustrate them. Proofs are given in the appendices.
\begin{figure}[t]
  \centering
  \includegraphics{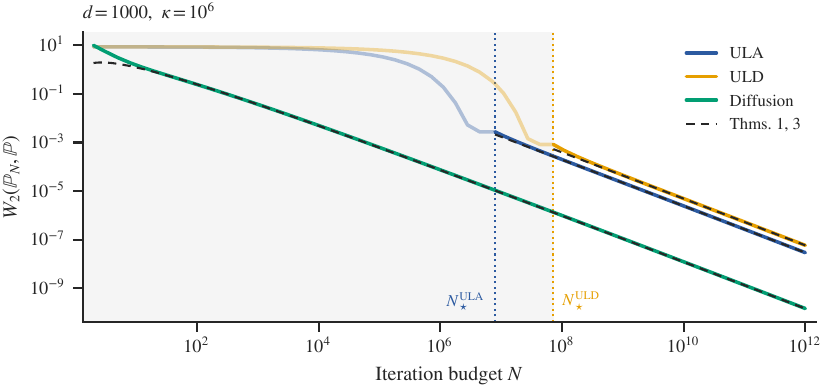}
  \caption{Sampling error $W_2(\P_N,\P)$ as a function of the iteration budget $N$ (log--log scale) on an ill-conditioned Gaussian target: $d=1000$, log-uniform spectrum $\li=\kappa^{-(i-1)/(d-1)}$ with $\kappa=10^6$. For this spectrum, $\lambda_{\max}\sqrt{\Tr(\Sigma^{-1})/\Tr\Sigma}=\sqrt\kappa$. Each point is a separate run: for every $N$, the sampler is tuned as in Theorems~\ref{thm:Langevin_rates} and~\ref{thm:diffusion_spectral_bound}, run for $N$ steps, and the error is evaluated in closed form. Dashed lines: asymptotic equivalents of the two theorems, drawn from $N_\star$ onwards for the Langevin samplers. Segments $N<N_\star$: the step-size of Theorem~\ref{thm:Langevin_rates} is inadmissible (Remark~\ref{rem:burnin}) and the largest admissible one is used instead, $\eta=\lambda_{\min}$ for ULA and $\eta=1/\gamma$ for ULD; see the discussion after Remark~\ref{rem:burnin}.
  }
    \label{fig:convergence}
\end{figure}
We first consider ULA and ULD. The ULA bound builds on the Gaussian analysis of \citet{pedregosa2023ulaq}. The convergence rates for ULA and ULD differ only by numerical constants.

\begin{figure}[t]
  \centering
  \begin{subfigure}[t]{0.48\textwidth}
    \centering
    \includegraphics[width=\linewidth]{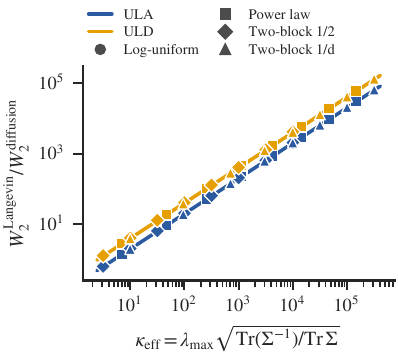}
    \caption{Error ratio versus $\kappa_{\mathrm{eff}}$, $N=10^{12}$.}
    \label{fig:ratios}
  \end{subfigure}\hfill
  \begin{subfigure}[t]{0.48\textwidth}
    \centering
    \includegraphics[width=\linewidth]{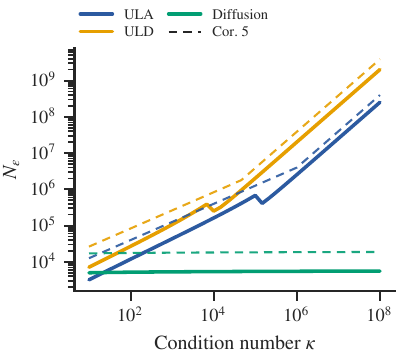}
    \caption{$\eps$-complexity versus $\kappa$, $\eps=10^{-3}\sqrt{\Tr\Sigma}$.}
    \label{fig:complexity}
  \end{subfigure}
    \caption{Tightness of the theory in dimension $d=1000$. \textbf{(a)} Ratio $W_2^{\mathrm{Langevin}}/W_2^{\mathrm{diffusion}}$ at budget $N=10^{12}$, as a function of $\kappa_{\mathrm{eff}}:=\lambda_{\max}\sqrt{\Tr(\Sigma^{-1})/\Tr\Sigma}$, for four spectral families with $\lambda_{\max}=1$ and $\kappa\in\{10,10^2,\dots,10^8\}$: ‘‘log-uniform'' ($\li=\kappa^{-(i-1)/(d-1)}$); ‘‘power law'' ($\li=i^{-a}$ with $a=\log\kappa/\log d$); ‘‘two-block $1/2$'' (half of the eigenvalues equal to $1$ and half to $1/\kappa$); ‘‘two-block $1/d$'' (one eigenvalue equal to $1$ and $d-1$ equal to $1/\kappa$). Langevin samplers are tuned as in Theorem~\ref{thm:Langevin_rates}; the diffusion uses the spectrum-agnostic schedule of Proposition~\ref{prop:diffusion_coarse_bound} ($T=\sqrt N$, $\delta = 1/N^2$). Lines are the quotients of the asymptotic equivalents, $\tfrac15\kappa_{\mathrm{eff}}$ (ULA) and $\tfrac25\kappa_{\mathrm{eff}}$ (ULD), with no fitted constant: all four families collapse onto them within $3.4\%$, consistent with a finite-$N$ correction of order $1/\log N \approx 3.6\%$.
    \textbf{(b)} Budget $N_\eps$ after which the error stays below $\eps=10^{-3}\sqrt{\Tr\Sigma}$, where $\sqrt{\Tr\Sigma}=W_2(\delta_0,\P)$ is the scale of the target, for the log-uniform spectrum. Dashed: the sufficient budgets of Corollary~\ref{cor:complexity} ($r=10$ in Remark~\ref{rem:variation_complexity} for the diffusion), within a factor $3$ of the observed ones. The diffusion budget does not depend on $\kappa$. The Langevin budgets follow the $\sqrt\kappa$ rate of Table~\ref{tab:summary_rates} as long as $\eps\lesssim\lambda_{\min}\sqrt{\Tr(\Sigma^{-1})}=\sqrt{\Tr\Sigma/\kappa}$; beyond, the burn-in $N_\star=\Theta(\kappa\log\kappa)$ of Remark~\ref{rem:burnin} takes over and the budget grows linearly in $\kappa$. Kinks on the ULA curve mark the switch between the two branches of its step-size.
    }
  \label{fig:ratio-complexity}
\end{figure}

\begin{restatable}[Langevin samplers]{theorem}{LangevinRates}
\label{thm:Langevin_rates}
Let $N\ge2$ be the iteration budget, and set the step-size $\eta$ (and the friction $\gamma$ for ULD) as
\begin{itemize}
    \item ULA: $\eta = \frac12\lambda_{\max}\frac{\log N}{N}$, provided $N\geq\frac12\kappa\log N$;        
    \item ULD: $\eta = \sqrt{2\lambda_{\max}\kappa}\,\frac{\log N}{N}$ and $\gamma = \sqrt{8/\lambda_{\min}}$, provided $N\geq4\kappa\log N$.
\end{itemize}
Then
\begin{equation*}
W_2(\P_N,\P)\leq\frac{\lambda_{\max}\sqrt{\Tr(\Sigma^{-1})}}{N}\left(C_1\log N+C_2\right),
\end{equation*}
with $C_1=\frac14$, $C_2 = \sqrt2$ for ULA and $C_1=\frac37$, $C_2=8$ for ULD. Furthermore, for fixed $\Sigma$, these bounds are tight up to a constant factor:
\[
W_2(\P_N,\P)\underset{N\to\infty}{\sim} C_3\,\lambda_{\max}\sqrt{\Tr(\Sigma^{-1})}\,\frac{\log N}{N},
\]
with $C_3 = \frac18$ for ULA and $C_3 = \frac14$ for ULD.
\end{restatable}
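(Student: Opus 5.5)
Everything commutes with $\Sigma$, so the plan is to work in an eigenbasis of $\Sigma$. There both samplers decouple into $d$ independent scalar (ULA) or $2\times2$ (ULD) linear Gaussian recursions started at $0$. Hence $\P_N=\Nml(0,\mathrm{diag}(v_N^{(i)}))$ with the same eigenvectors as $\Sigma$, and by the Fréchet formula
\[
W_2(\P_N,\P)^2=\sum_{i=1}^d\bigl(\sqrt{v_N^{(i)}}-\sqrt{\li}\bigr)^2 .
\]
For each coordinate I will write $v_N=v_\infty(1-\rho_N)$, where $v_\infty$ is the stationary variance of the discretized chain and $\rho_N$ is a transient factor. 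By the triangle inequality in $\ell^2$ over coordinates, this splits $W_2$ into a bias term $\bigl(\sum_i(\sqrt{v_\infty^{(i)}}-\sqrt{\li})^2\bigr)^{1/2}$ and a transient term $\bigl(\sum_i v_\infty^{(i)}\rho_N^{(i)}\bigr)^{1/2}$, using $\sqrt{v_\infty}-\sqrt{v_\infty(1-\rho)}\le\sqrt{v_\infty}\,\rho$.

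For ULA, coordinate $i$ obeys $x_{k+1}=a_ix_k+\sqrt{2\eta}\,\xi$ with $a_i=1-\eta/\li$. The condition $N\ge\frac12\kappa\log N$ is exactly $\eta\le\lambda_{\min}$, so $a_i\in[0,1)$.

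Summing the geometric series gives
\[
v_\infty=\frac{2\eta}{1-a_i^2}=\frac{\li}{1-\eta/(2\li)}\le 2\li, \qquad \rho_N=a_i^{2N}.
\]
The bias is $\sqrt{v_\infty}-\sqrt{\li}=\sqrt{\li}\bigl((1-u)^{-1/2}-1\bigr)$ with $u=\eta/(2\li)\le\frac12$. I will bound this by an explicit multiple of $\eta/\sqrt{\li}$, with leading term $\eta/(4\sqrt{\li})$. With $\eta=\frac12\lambda_{\max}\log N/N$, summing squares gives leading order $\frac18\lambda_{\max}\sqrt{\Tr(\Sigma^{-1})}\log N/N$. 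The stated $C_1=\frac14$ should absorb the higher-order correction, using $u\le\frac12$ or a sharper use of $N\ge\frac12\kappa\log N$.

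For the transient, $a_i^{2N}\le e^{-2N\eta/\li}\le e^{-2N\eta/\lambda_{\max}}=1/N$. So that term is at most $\sqrt{2\Tr\Sigma}/N$, and $\li\le\lambda_{\max}^2/\li$ gives $\sqrt{\Tr\Sigma}\le\lambda_{\max}\sqrt{\Tr(\Sigma^{-1})}$, which yields $C_2=\sqrt2$. For the asymptotics at fixed $\Sigma$, the transient is $O(1/N)$ and the bias is exactly $\frac{\eta}{4\sqrt{\li}}(1+O(\eta))$, so the ratio converges to $C_3=\frac18$.

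For ULD, coordinate $i$ follows $z_{k+1}=A_iz_k+\sqrt{2\eta\gamma}\,(0,\xi)^\top$ with
\[
A_i=\begin{pmatrix}1&\eta\\-\eta/\li&1-\eta\gamma\end{pmatrix}.
\]
I will proceed in two steps.

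First, I solve the discrete Lyapunov equation $S=A_iSA_i^\top+2\eta\gamma\,e_2e_2^\top$ in closed form. Its $(1,1)$ entry is a rational function of $(\eta,\gamma,\li)$, and expanding it gives $v_\infty=\li\bigl(1+c\,\eta\gamma+O(\eta/\sqrt{\li})\bigr)$-type corrections. The target is an explicit bound $|\sqrt{v_\infty}-\sqrt{\li}|\le C\eta/\sqrt{\lambda_{\min}\li}\cdot\sqrt{\lambda_{\min}}$ with the correct leading constant. With $\gamma=\sqrt{8/\lambda_{\min}}$ and $\eta=\sqrt{2\lambda_{\max}\kappa}\log N/N$, this should produce $\frac14\lambda_{\max}\sqrt{\Tr(\Sigma^{-1})}\log N/N$.

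Second, for the transient, $\rho_N$ involves $A_i^N$ applied to the stationary covariance. The choice $\gamma^2\ge 8/\li$ puts every mode in the overdamped regime: $A_i$ has real eigenvalues, and the slowest one is about $1-\eta/(\gamma\li)$. Then $N\eta/(\gamma\lambda_{\max})=\frac12\log N$ gives a contraction of order $N^{-1}$ in the squared norm. The condition $N\ge4\kappa\log N$ should guarantee $\eta\gamma\le1$ and positivity of the eigenvalues.

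The main obstacle is the transient term for ULD, because $A_i$ is non-normal: I need a uniform constant bounding $\|A_i^N\|$ by the slow eigenvalue to the power $N$. I would first diagonalize $A_i$ explicitly and bound the eigenvector condition number using the gap between the two real eigenvalues, which is at least about $\eta\gamma/2$ in the overdamped regime. The constant $C_2=8$ suggests this is where looseness is paid. The asymptotic equivalent then follows as for ULA, since the bias is exactly linear in $\eta$ to first order and the transient is $O(1/N)$.
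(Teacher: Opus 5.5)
Your reduction and the ULA argument are the same as the paper's and are correct. The displayed transient term should carry $(\rho_N^{(i)})^2$, which is what your inequality actually gives; with $\rho_N^{(i)}$ alone you would only get $O(N^{-1/2})$. The chord bound $(1-u)^{-1/2}-1\le2(\sqrt2-1)u$ on $[0,\frac12]$ gives a constant below $\frac14$.

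The ULD half has two concrete problems. First, the bias expansion you write has the wrong shape: there is no relative correction of the form $c\,\eta\gamma$ in $v_\infty/\li$. With $\eta\gamma=4\kappa\log N/N$, such a term would contribute $\sqrt{\li}\,\kappa\log N/N$ per mode. That is an error of order $\kappa\sqrt{\Tr\Sigma}\log N/N$, with the wrong spectral prefactor, so your announced $\frac14\lambda_{\max}\sqrt{\Tr(\Sigma^{-1})}\log N/N$ could not come out. Likewise, your target simplifies to $C\eta/\sqrt{\li}$; it is not homogeneous under $\Sigma\mapsto c\Sigma$ and is missing a factor $1/\gamma$. What the Lyapunov equation actually gives is
\[
\lambda^\infty-\lambda=\frac{\eta}{\gamma-\eta\lambda^{-1}}\Big(1+\frac{\eta\gamma}{4-2\eta\gamma+\eta^2\lambda^{-1}}\Big).
\]
So the bias is governed by $\eta/\gamma$ uniformly in $\lambda$, and $\eta\gamma$ enters only as a relative correction to it. This identity is the missing ingredient. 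Using $\eta\gamma\le1$ and $\eta/(\gamma\lambda)\le1/(\gamma^2\lambda)\le\frac18$, it yields $\lambda^\infty-\lambda<\frac{12}{7}\frac\eta\gamma$. Hence $W_2(\P_\infty,\P)\le\frac67\sqrt{\Tr(\Sigma^{-1})}\,\eta/\gamma$ and $C_1=\frac37$, while the leading term $\eta/\gamma$ gives $C_3=\frac14$.

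Second, for the transient, the eigenvalue gap $\eta\sqrt{\gamma^2-4/\li}$ does not by itself control the conditioning of the non-normal $A_i$. Its eigenvectors are $(1,-\frac12(\gamma\mp s_i))$ with $s_i=\sqrt{\gamma^2-4/\li}$. In the raw $(x,v)$ coordinates they become nearly parallel when $\lambda_{\min}$ is very large or very small in absolute units. Also, $S_\infty$ mixes a position variance $\approx\li$ with a velocity variance $\approx1$. The bound $e_1^\top A_i^NS_\infty(A_i^N)^\top e_1\le\|A_i^N\|^2\|S_\infty\|$ therefore carries a factor that depends on the absolute scale of $\Sigma$, not a numerical constant. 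There are two ways to fix this:
\begin{itemize}
\item Measure the position of mode $i$ in units of $\sqrt{\li}$. Then $A_i$ depends only on $\eta/\sqrt{\li}$ and $\gamma\sqrt{\li}\ge\sqrt8$, its eigenvectors stay at an angle of at least $\pi/4$, and the rescaled $S_\infty$ is $O(1)$.
\item More simply, and as the paper does, avoid operator norms and use the exact tail $v_\infty-v_N=2\eta\gamma\sum_{j\ge N}\beta_j^2$. Here $\beta_j=\eta(r_+^j-r_-^j)/(r_+-r_-)$ is the $(1,2)$ entry of $A_i^j$, and $0<r_-<r_+<1$. Bound $\beta_j\le\eta r_+^j/(r_+-r_-)$, and use $(r_+-r_-)^2\ge\eta^2\gamma^2/2$ and $1-r_+\ge\eta/(\gamma\li)$. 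This gives $v_\infty-v_N\le4\li\,r_+^{2N}\le4\li/N$, comfortably within $C_2=8$.
\end{itemize}
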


Both Langevin samplers deteriorate with the condition number through a $\sqrt{\kappa}$ factor, which can be made explicit by noting that $\lambda_{\max}\sqrt{\Tr(\Sigma^{-1})} = \sqrt{\lambda_{\max}} \sqrt{\sum_{i=1}^d \frac{\lambda_{\max}}{\li}} \in [\sqrt{\lambda_{\max} \kappa}, \sqrt{\lambda_{\max} d \kappa}]$.
The proof of this theorem, given in Appendix~\ref{app:thm1_proof}, exploits the Gaussian structure of the target, reducing the analysis to the eigenvalues of the covariance matrix, for which a closed-form expression can be obtained. This yields a Wasserstein error bound consisting of two terms: a discretization error, arising from the discretization of the continuous-time Langevin dynamics, and an early-stopping error, resulting from terminating the algorithm at time $N\eta$ rather than letting it run indefinitely. The resulting hyperparameter choices and convergence rates therefore reflect a trade-off between these two errors.

The ratio $\phi=\eta/\gamma$ of ULD equals the step-size $\eta$ of ULA. This is expected: for large friction, the position approximately follows the overdamped Langevin dynamics slowed down by a factor $\gamma$ \citep{pavliotis2014stochastic}, so that $\phi$ acts as an effective step-size. The small-friction regime yields the same rate (Corollary~\ref{cor:ULD_underdamped}): with the EM scheme, neither brings the acceleration expected from ULD.

\begin{remark}[Admissibility and burn-in]
\label{rem:burnin}
The conditions on $N$ in Theorem~\ref{thm:Langevin_rates} are exactly the requirements under which our analysis operates, namely $\eta\leq\lambda_{\min}$ for ULA and $\eta\gamma\leq1$ for ULD (Appendix~\ref{app:thm1_proof}). They amount to a burn-in phase: writing
\[
N_\star^{\mathrm{ULA}}:=\min\{n\geq3 : n\geq\tfrac12\kappa\log n\}
\quad\text{and}\quad
N_\star^{\mathrm{ULD}}:=\min\{n\geq3 : n\geq4\kappa\log n\},
\]
provided $N\ge3$, the condition on $N$ for ULA (resp.\ ULD) holds if and only if $N\geq N_\star^{\mathrm{ULA}}$ (resp.\ $N\geq N_\star^{\mathrm{ULD}}$), and both thresholds grow as $\Theta(\kappa\log\kappa)$. The guarantee thus applies only after $\Theta(\kappa\log\kappa)$ iterations. Ill-conditioning penalizes Langevin samplers twice: through the $\sqrt\kappa$ factor in the rate, and through the budget required before the guarantee applies. Neither penalty has a counterpart for diffusion: Theorem~\ref{thm:diffusion_spectral_bound} holds for every $N\geq1$.
\end{remark}

Figure~\ref{fig:convergence} illustrates this burn-in. Below $N_\star$, the step-size of Theorem~\ref{thm:Langevin_rates} violates the admissibility conditions of Remark~\ref{rem:burnin}. Since our bounds are convex (with respect to the variable $\eta$ for ULA, $\phi:=\eta/\gamma$ for ULD), their constrained minimizer is the largest admissible value, $\eta=\lambda_{\min}$ and $\eta\gamma=1$ respectively (Appendix~\ref{app:thm1_proof}). This step is dictated by the stiffest direction of the target, so the flattest ones barely move: the error stays at its initial value $W_2(\P_0,\P)=\sqrt{\Tr\Sigma}$ for $N\lesssim\kappa$, then decays like $e^{-cN/\kappa}$ ($c=2$ for ULA, $c=1/4$ for ULD) down to the discretization bias of that step, reached at $N\approx N_\star=\Theta(\kappa\log\kappa)$. Only then does the $\log N/N$ regime of Theorem~\ref{thm:Langevin_rates} begin, whereas the diffusion sampler follows its asymptote from $N\approx10^2$ onwards.

\subsection{Diffusion models: circumventing the condition-number dependence}
\label{results_diffusion}
In contrast, diffusion models follow the scores $s_t$, whose conditioning $\kappa_t$ tends to 1 at large times, and achieve rates that depend on neither $\kappa$ nor $\lambda_{\min}$.
\begin{restatable}[Diffusion sampler]{theorem}{DiffusionRates}
\label{thm:diffusion_spectral_bound}
    Consider an iteration budget $N\geq1$. Set horizon $T=(\Tr(\Sigma^5)/\Tr\Sigma)^{1/4}\sqrt{N}$, early-stopping threshold $\delta = \Tr\Sigma/(dN^2)$, and geometric time discretization grid $\tau_k = T(\delta/T)^{k/N}$ for $k \in \{0, \dots, N\}$. Then the diffusion algorithm \eqref{eq:diffusion_algorithm} yields
\[
W_2(\P_N,\P)\leq \frac{\sqrt{\Tr\Sigma}}{N}\Big(\tfrac58\,a_N\,\rho^{1/N}\log(\rho^{2/5}N)+1+\sqrt8\Big),
\quad \rho := \frac{d\lambda_{\max}}{\Tr\Sigma}\in[1,d],
\]
where $a_N := (1+4N^{-2})^{1/2}N^{5/(2N)}$ satisfies $a_N\le 2^{7/4}$ for all $N\ge1$ and $a_N\to1$. Moreover, this bound is asymptotically exact: under the same hyperparameters,
\begin{equation} \label{eq:spectral_equivalent}
    W_2(\P_N,\P)\underset{N\to\infty}{\sim}\frac58\sqrt{\Tr\Sigma}\,\frac{\log N}{N},
\end{equation}
so that, for fixed $\Sigma$, the ratio of the bound to the actual error tends to $1$.
\end{restatable}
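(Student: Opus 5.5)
The plan is to diagonalize $\Sigma$ and work eigen-direction by eigen-direction. Since $Y_0\sim\Nml(0,T\I)$ and every update of \eqref{eq:diffusion_algorithm} is linear in $Y_{k-1}$ with a matrix that is a function of $\Sigma$, the law $\P_N$ is a centered Gaussian whose covariance commutes with $\Sigma$. The Fréchet formula then gives
\[
W_2(\P_N,\P)^2=\sum_i\big(\sqrt{v_N^{(i)}}-\sqrt{\li}\big)^2 ,
\]
where $v_k$ is the variance in the direction of eigenvalue $\lambda$. That variance follows the scalar recursion
\[
v_k=a_k^2v_{k-1}+\eta_k,\qquad a_k:=1-\frac{\eta_k}{\lambda+\tau_{k-1}}=\frac{\lambda+\tau_k}{\lambda+\tau_{k-1}},\qquad v_0=T .
\]

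\textbf{Telescoping.} I would compare $v_k$ with the exact marginal variance $\lambda+\tau_k$ by setting $e_k:=v_k-(\lambda+\tau_k)$. A one-line computation gives
\[
e_k=a_k^2e_{k-1}+\frac{\eta_k^2}{\lambda+\tau_{k-1}},\qquad e_0=-\lambda .
\]
Because the products $\prod_{j>k}a_j^2=(\lambda+\delta)^2/(\lambda+\tau_k)^2$ telescope, this yields the closed form
\[
v_N-\lambda=\delta-\frac{\lambda(\lambda+\delta)^2}{(\lambda+T)^2}+(\lambda+\delta)^2\sum_{k=1}^N\frac{\eta_k^2}{(\lambda+\tau_{k-1})(\lambda+\tau_k)^2}.
\]
The three pieces are the early-stopping error, the initialization error and the discretization error.

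\textbf{Bounding each piece.} I would use $\abs{\sqrt v-\sqrt\lambda}\le\abs{v-\lambda}/(\sqrt v+\sqrt\lambda)$ and Minkowski's inequality in $\ell^2$ over directions, handling the three pieces separately.
\begin{itemize}
\item The $\delta$ piece contributes at most $\sqrt{\delta}$ per direction, hence at most $\sqrt{d\delta}=\sqrt{\Tr\Sigma}/N$ in total. This is the term ``$1$'' in the bound.
\item The initialization piece contributes at most $\lambda^{5/2}/T^2$ per direction. Its $\ell^2$ norm is $\sqrt{\Tr(\Sigma^5)}/T^2$, which equals $\sqrt{\Tr\Sigma}/N$ by the choice of $T$. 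Together with the cross terms and the choice of denominators, this produces the $\sqrt8$.
\item For the discretization piece, the geometric grid gives $\eta_k=(1-r)\tau_{k-1}$ with $r=(\delta/T)^{1/N}$. The sum then becomes $(1-r)^2$ times a geometric Riemann sum. I would compare it with $\frac{1}{1-r}\int_0^\infty u\lambda^2(\lambda+u)^{-3}\dd u=\frac{\lambda}{2(1-r)}$, with monotonicity corrections bounded by factors $r^{-1}$ and $r^{-2}$. This gives $e\lesssim \lambda(1-r)/2\le \lambda\log(T/\delta)/(2N)$, hence a per-direction error of about $\sqrt\lambda\log(T/\delta)/(4N)$.
\end{itemize}
Summing the squares of the discretization errors gives $\sqrt{\Tr\Sigma}\,\log(T/\delta)/(4N)$. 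Here
\[
\log(T/\delta)=\tfrac52\log N+\log\!\big(d(\Tr\Sigma^5)^{1/4}(\Tr\Sigma)^{-5/4}\big)\le\tfrac52\log(\rho^{2/5}N),
\]
using $\Tr\Sigma^5\le\lambda_{\max}^4\Tr\Sigma$. This yields the leading term $\frac58\log(\rho^{2/5}N)$. The factors $\rho^{1/N}$ and $a_N$ come from $r^{-1}=(T/\delta)^{1/N}$ and from replacing $\lambda+\delta$ by $\lambda$.

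\textbf{Asymptotics.} For fixed $\Sigma$ and $N\to\infty$, I expect $T\to\infty$, $\delta\to0$ and $1-r\sim\frac{5\log N}{2N}$. The Riemann sum converges to $\lambda/2$ exactly, because the mesh in $\log\tau$ vanishes. Hence $N(\sqrt{v_N}-\sqrt\lambda)/\log N\to \frac58\sqrt\lambda$. The other two pieces are $O(1/N)$ and so negligible. Summing over the finitely many directions gives \eqref{eq:spectral_equivalent}.

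\textbf{Main obstacle.} I expect the main difficulty to be the non-asymptotic control of the geometric sum with the sharp constant $\frac58$, since it must also hold uniformly for small $N$. I would first try to bound each summand by an integral over $[\tau_k,\tau_{k-1}]$ of $u\lambda^2(\lambda+u)^{-3}$ in the log-variable, paying $r^{-2}$. I would then verify separately the elementary inequality $a_N\le2^{7/4}$.
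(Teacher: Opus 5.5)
Your telescoped closed form is correct and coincides with the paper's:
\[
v_N-\lambda=\delta-\lambda\Big(\frac{\lambda+\delta}{\lambda+T}\Big)^2+(\lambda+\delta)^2\Delta,\qquad \Delta=\sum_k\frac{\eta_k^2}{(\lambda+\tau_{k-1})(\lambda+\tau_k)^2}.
\]
Your bound $T/\delta\le\rho N^{5/2}$ and the asymptotic picture also match the paper. The gap is the non-asymptotic inequality with the stated constants, and two of your steps would fail there.

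First, comparing the sum with $\int_0^\infty u\lambda^2(\lambda+u)^{-3}\dd u$ silently replaces $(\lambda+\delta)^2$ by $\lambda^2$ and drops the lower limit $\delta$. For $\lambda\lesssim\delta$ the discretization piece is of order $\delta(r^{-1}-1)$, not $\lesssim\lambda(1-r)/2$. Even the best version of that comparison, $\Delta\le(r^{-1}-1)/(2\lambda)$, leaves a factor $(\lambda+\delta)^2/\lambda$ that blows up as $\lambda_{\min}\to0$, which is exactly the dependence the theorem rules out. Paying $r^{-2}$ would also give $(T/\delta)^{2/N}\le\rho^{2/N}N^{5/N}$ instead of $a_N\rho^{1/N}$. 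The missing idea is the exact rewriting, using $\eta_j=(r^{-1}-1)\tau_j$,
\[
\Delta=(r^{-1}-1)\sum_{j=1}^N f(\tau_j)\big(G(\tau_j)-G(\tau_{j-1})\big),\qquad f(\tau)=\frac{\tau}{\lambda+\tau},\quad G(\tau)=\frac{1}{\lambda+\tau}.
\]
Since $f$ is increasing, $\Delta\le(r^{-1}-1)\int_\delta^T\tau(\lambda+\tau)^{-3}\dd\tau\le(r^{-1}-1)\frac{\lambda+2\delta}{2(\lambda+\delta)^2}$. This has a single factor $r^{-1}$ and no $\lambda_{\min}$. Then $(\lambda+2\delta)^2\le(\lambda+\delta)(\lambda+4\delta)$ gives $\sqrt{\Tr\Sigma+4d\delta}=(1+4N^{-2})^{1/2}\sqrt{\Tr\Sigma}$, and $r^{-1}-1\le\frac1N(T/\delta)^{1/N}\log\frac T\delta$ supplies the rest of $a_N\rho^{1/N}$. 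The same rewriting, with $f(\tau_j)\ge rf(\tau_{j-1})$, gives the matching lower bound $(1-r)\int_\delta^T$, which is what makes your asymptotic step rigorous.

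Second, the prefactor $\frac14$ (hence $\frac58$) needs a denominator $2\sqrt{\lambda+\delta}$ in front of the discretization piece. Your single common denominator $\sqrt{v_N}+\sqrt\lambda$ does not provide it, because the negative initialization piece can dominate and make $v_N<\lambda$. For instance, take $\lambda_{\max}=1$ and $10^4$ eigenvalues equal to $0.1$. Then $T\approx0.18\sqrt N$, and in the top direction $v_1\approx0.24$ and $v_4\approx0.50$. So ``divide by $2\sqrt\lambda$'' is only valid asymptotically. The paper instead applies the triangle inequality through two intermediate laws: $\tilde\P_N$, the output from the exact initialization $\Nml(0,\Sigma+T\I)$, and $\Nml(0,\Sigma+\delta\I)$. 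Your own recursion started at $e_0=0$ gives $\tilde v_N=\lambda+\delta+(\lambda+\delta)^2\Delta>\lambda+\delta$. Each error term therefore gets its own denominator: at least $2\sqrt{\lambda+\delta}$ for discretization and at least $\sqrt{\lambda+\delta}$ for initialization. Relatedly, the initialization term is $\lambda(\lambda+\delta)^{3/2}/T^2$ per direction, not $\lambda^{5/2}/T^2$. The $\sqrt8$ does not come from cross terms, since Minkowski leaves none. It comes from H\"older: with $\delta=\Tr\Sigma/(dN^2)$ one has $\delta^j\Tr(\Sigma^{5-j})\le N^{-2j}\Tr(\Sigma^5)$. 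Hence $\Tr\big((\Sigma+\delta\I)^3\Sigma^2\big)\le(1+N^{-2})^3\Tr(\Sigma^5)$, and $(1+N^{-2})^{3/2}\le\sqrt8$.
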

The previous theorem gives a spectrum-adaptive bound. The simplified $O(\sqrt{\Tr\Sigma}\log N/N)$ rate is obtained when $d$ grows at most polynomially with $N$. The following proposition shows that this growth assumption can be removed at the price of a coarser dependence on the spectrum.
\begin{restatable}{proposition}{DiffusionRatesCoarse}
\label{prop:diffusion_coarse_bound}
    With the alternative choice $T = \lambda_{\max}\sqrt{N}$, early-stopping threshold $\delta = \lambda_{\max}/N^2$, and geometric time discretization grid $\tau_k = T(\delta/T)^{k/N}$ for $k \in \{0, \dots, N\}$, the diffusion algorithm~\eqref{eq:diffusion_algorithm} satisfies, for every $N\geq1$,
    \[
    W_2(\P_N,\P) \leq \frac{\sqrt{d\lambda_{\max}}}{N}\bigg(\frac5{2^{5/4}}\log N +\Big(\frac54\Big)^{3/2}+1\bigg).
    \]
\end{restatable}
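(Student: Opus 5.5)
We diagonalize $\Sigma$ and reduce to scalar recursions. Every iterate of \eqref{eq:diffusion_algorithm} is a centered Gaussian whose covariance is a polynomial in $\Sigma$, so it commutes with $\Sigma$. Hence $W_2(\P_N,\P)^2=\sum_{i=1}^d(\sqrt{v_N^{(i)}}-\sqrt{\li})^2$. Here $v_k=v_k^{(i)}$ solves
\[
v_k=\Big(1-\frac{\eta_k}{\li+\tau_{k-1}}\Big)^2v_{k-1}+\eta_k,\qquad v_0=T.
\]
The strategy is to bound each scalar error $\abs{\sqrt{v_N}-\sqrt{\lambda}}$ by $\frac{\sqrt{\lambda_{\max}}}{N}\,g$, with $g$ independent of $\lambda\in(0,\lambda_{\max}]$. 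Summing the squares over the $d$ directions then gives the prefactor $\sqrt{d\lambda_{\max}}/N$. This uniformity is the whole point of the coarse bound: we never use $\lambda_{\min}$ or $\Tr\Sigma$.

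\textbf{Step 1 (splitting).} By the triangle inequality,
\[
\abs{\sqrt{v_N}-\sqrt\lambda}\le\abs{\sqrt{v_N}-\sqrt{\tilde v_N}}+\abs{\sqrt{\tilde v_N}-\sqrt{\lambda+\delta}}+\abs{\sqrt{\lambda+\delta}-\sqrt\lambda}.
\]
Here $\tilde v_N$ is the variance reached by the same discrete recursion started from the correct law, $\tilde v_0=\lambda+T$.
\begin{itemize}
\item The last term is at most $\sqrt\delta=\sqrt{\lambda_{\max}}/N$. This produces the constant $1$.
\item The first term is the initialization error. The map $v_{k-1}\mapsto v_k$ is affine with slope $\prod_k(1-\eta_k/(\lambda+\tau_{k-1}))^2$. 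With the telescoping structure of Appendix~\ref{app:diffusion_closed_form}, this product is comparable to $((\lambda+\delta)/(\lambda+T))^2$. The difference $v_N-\tilde v_N$ is therefore of order $\lambda\,(\lambda+\delta)^2/(\lambda+T)^2$. Dividing by $\sqrt{v_N}+\sqrt{\tilde v_N}\gtrsim\sqrt{\lambda}$ gives an error of order $\lambda^{3/2}\lambda/T^2\le\lambda_{\max}^{5/2}/T^2$ up to the factors involving $\delta$.
\end{itemize}
With $T=\lambda_{\max}\sqrt N$, the initialization term becomes $\sqrt{\lambda_{\max}}/N$ times a constant. Maximizing the exact expression over $x=\lambda/\lambda_{\max}\in(0,1]$, together with the slack coming from the discrete product, should produce the constant $(5/4)^{3/2}$.

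\textbf{Step 2 (discretization error, the main obstacle).} On the geometric grid every step has the same relative size:
\[
\frac{\eta_k}{\tau_{k-1}}=1-r,\qquad r=(\delta/T)^{1/N},\qquad \log(T/\delta)=\tfrac52\log N,
\]
so $1-r\le\frac{5}{2}\frac{\log N}{N}$. I would use the closed form of Appendix~\ref{app:diffusion_closed_form}. The exact reverse flow maps variance $\lambda+\tau_{k-1}$ to $\lambda+\tau_k$. Its local one-step defect is computed explicitly: it is quadratic in $\eta_k/(\lambda+\tau_{k-1})$, with a prefactor involving $\lambda/(\lambda+\tau_{k-1})$. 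These defects are propagated with the contracting factors and summed, and the telescoping keeps the sum geometric.

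The hard part is obtaining a bound uniform in $\lambda$. I would bound the summed defect by $(1-r)\,\sup_{x>0}h(x)$, where $h(x)$ is the per-scale contribution with $x=\lambda/\tau$. I would then check that $\sup_x h\le 2^{-5/4}\cdot 2\sqrt{\lambda_{\max}}$ after converting the variance error into a standard-deviation error. The conversion divides by $\sqrt{\tilde v_N}+\sqrt{\lambda+\delta}\ge\sqrt{\lambda+\delta}$, which is where $\lambda\le\lambda_{\max}$ enters. This should give the term $\frac{\sqrt{\lambda_{\max}}}{N}\cdot\frac5{2^{5/4}}\log N$.

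If the defects do not sum cleanly, I would fall back to bounding the product $\prod(1-\eta_k/(\lambda+\tau_{k-1}))$ against the exponential $\exp(-\int\dd t/(\lambda+t))$. I would control the logarithmic error through $\log(1-u)\ge -u-u^2/(2(1-u))$, using $u\le 1-r$.

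\textbf{Step 3 (assembly).} Adding the three contributions gives, for each direction,
\[
\abs{\sqrt{v_N}-\sqrt\lambda}\le\frac{\sqrt{\lambda_{\max}}}{N}\Big(\tfrac5{2^{5/4}}\log N+(\tfrac54)^{3/2}+1\Big).
\]
Squaring, summing over the $d$ eigen-directions and taking square roots yields the claim for every $N\ge1$. No admissibility condition arises, since $1-r<1$ always. For $N=1$ the $\log N$ term vanishes, so the remaining constants have to be checked directly in that case.
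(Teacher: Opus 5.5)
\textbf{Assessment: the overall structure is right, but Step~2 (the discretization error) is only asserted, and the mechanism you describe for it would lose a $\log N$ factor.}

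\textbf{What works.} Your reduction to scalar recursions, the three-term splitting and the early-stopping term are correct and follow the paper. The paper applies the triangle inequality to the laws rather than per direction; with uniform per-direction bounds this makes no difference.

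In Step~1 nothing is merely ``comparable''. Since $1-\eta_k/(\lambda+\tau_{k-1})=(\lambda+\tau_k)/(\lambda+\tau_{k-1})$, the product telescopes exactly, and $\tilde v_N-v_N=\lambda\big(\frac{\lambda+\delta}{\lambda+T}\big)^2$. Divide by $\sqrt{\tilde v_N}\ge\sqrt{\lambda+\delta}$, which holds because the accumulated defect below is positive. This gives at most $\lambda(\lambda+\delta)^{3/2}/T^2\le(1+N^{-2})^{3/2}\sqrt{\lambda_{\max}}/N$. The constant $(5/4)^{3/2}$ is just this factor at $N=2$; no maximization over $x$ is involved.

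For $N=1$ the case is degenerate: $T=\delta$, $\eta_1=0$ and $\P_1=\Nml(0,\lambda_{\max}\I)$, so the bound is immediate. For the same reason, your fallback in Step~2 (comparing the product with an exponential) addresses a non-issue.

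\textbf{The gap in Step~2.} The value $2^{-1/4}\sqrt{\lambda_{\max}}$ for $\sup h$ is read off from the target statement rather than derived.

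Propagating the one-step defects $\eta_k^2/(\lambda+\tau_{k-1})$ with the exact contraction factors gives $\tilde v_N-(\lambda+\delta)=(\lambda+\delta)^2\Delta$, where $\Delta=\sum_{j=1}^N\eta_j^2/\big((\lambda+\tau_j)^2(\lambda+\tau_{j-1})\big)$. Each term is of order $(1-r)^2/\lambda$ when $\tau_j\asymp\lambda$, and $N(1-r)\asymp\log(T/\delta)\asymp\log N$. So if $h$ is a per-scale contribution, a supremum over scales only yields $\Delta\lesssim(1-r)\log N/\lambda$. That is an error of order $\log^2N/N$.

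The missing idea is a sum--integral comparison. Write $\eta_j=(r^{-1}-1)\tau_j$, $f(\tau)=\tau/(\lambda+\tau)$ (increasing) and $G(\tau)=1/(\lambda+\tau)$. Then $\Delta=(r^{-1}-1)\sum_j f(\tau_j)\big(G(\tau_j)-G(\tau_{j-1})\big)$, so
\[
\Delta\le(r^{-1}-1)\int_\delta^T\frac{\tau\,\mathrm{d}\tau}{(\lambda+\tau)^3}\le(r^{-1}-1)\,\frac{\lambda+2\delta}{2(\lambda+\delta)^2}.
\]
Since the denominator $\sqrt{\tilde v_N}+\sqrt{\lambda+\delta}$ is at least $2\sqrt{\lambda+\delta}$, the per-direction error is at most $\frac14(r^{-1}-1)\sqrt{\lambda+4\delta}$.

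The natural small parameter is therefore $r^{-1}-1=(1-r)/r$, not $1-r$. Using $r^{-1}-1\le\frac1N(T/\delta)^{1/N}\log(T/\delta)$ with $T/\delta=N^{5/2}$ gives $\frac58a_N\sqrt{\lambda_{\max}}\frac{\log N}N$, where $a_N=(1+4N^{-2})^{1/2}N^{5/(2N)}$. The constant $5/2^{5/4}$ is $\frac58\max_Na_N=\frac58a_2$.

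Consequently your $h$ cannot depend on $x=\lambda/\tau$ alone. The factor $\sqrt{1+4N^{-2}}/(4r)=a_N/4$ depends on $N$, and $2^{-1/4}$ is its maximum, attained at $N=2$.
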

On the one hand, Theorem~\ref{thm:diffusion_spectral_bound} provides our sharpest, spectrum-adaptive guarantee, with a prefactor governed by the effective scale $\sqrt{\Tr\Sigma}$. However, achieving this sharp rate requires choosing the horizon $T$ and early-stopping threshold $\delta$ using spectral information about $\Sigma$, and its simplified $O\big(\sqrt{\Tr\Sigma}\frac{\log N}{N}\big)$ form is stated under the growth condition $d \leq P(N)$ for some polynomial $P$. On the other hand, Proposition~\ref{prop:diffusion_coarse_bound} holds unconditionally in $d$, at the price of a coarser prefactor. Since $\Tr\Sigma \leq d\lambda_{\max}$ (with equality if and only if the target is isotropic), the coarse bound exceeds the spectral bound by a factor $\sqrt{\rho} \in [1, \sqrt{d}]$.
Importantly, this gap is merely a finite-$N$ phenomenon reflecting specific hyperparameter choices rather than a different asymptotic sampling regime. 
We show in Appendix~\ref{app:diffusion_equivalent_asymptotique} that any geometric schedule with $T \propto \sqrt{N}$ and $\delta \propto N^{-2}$ yields the same asymptotic equivalent \eqref{eq:spectral_equivalent}: the rate is robust to schedule tuning. This covers the parameter choices of both Theorem~\ref{thm:diffusion_spectral_bound} and Proposition~\ref{prop:diffusion_coarse_bound}.

As in Theorem~\ref{thm:Langevin_rates}, the proof of Theorem~\ref{thm:diffusion_spectral_bound} and Proposition~\ref{prop:diffusion_coarse_bound}, given in Appendix~\ref{app:thm2_proof}, exploits the Gaussian structure of the target distribution. Since the resulting distribution $\P_N$ is also Gaussian, the analysis reduces to the eigenvalues of its covariance matrix, which can be characterized explicitly. This leads to a Wasserstein error bound with three distinct contributions: a discretization error, an early-stopping error controlled by the threshold $\delta$, and an initialization error arising from starting the process at $\P_0 = \Nml(0,T\I)$ rather than at the true marginal $\Nml(0,\Sigma+T\I)$, which is not directly tractable. The resulting hyperparameter choices and convergence rates reflect a trade-off between these three sources of error. Consistently with the intuition of Section~\ref{problem_setup}, the geometric grid $\tau_k=T(\delta/T)^{k/N}$ takes coarse steps while $s_t$ remains close to isotropic, and only refines them as $t\to\delta$, when the anisotropy of the true score re-emerges. Such geometrically decreasing step-sizes near the data are standard in the analysis of diffusion models: the schedule of \citet{benton2024nearly}, for instance, combines constant steps far from the data with steps decaying geometrically as $t\to0$. We adopt the geometric grid over the whole horizon $[\delta,T]$, which makes the computations of Appendix~\ref{app:thm2_proof} easier.

The rates above describe the sampling error for a fixed iteration budget $N$. Sampling guarantees are, however, more commonly stated in the reverse direction in the literature: given a target accuracy, how many iterations are required? We follow this convention below.
\begin{restatable}[$\eps$-complexity]{corollary}{complexity}
\label{cor:complexity}
    Let $0<\eps\le\sqrt{\Tr\Sigma}$. In order for the sampling error to satisfy $W_2(\P_N,\P)\leq \eps$, it suffices
    \begin{itemize}
        \item for ULA with $\eta = \min\big(\lambda_{\min},\eps/\sqrt{\Tr(\Sigma^{-1})}\big)$, to take
        \[
        N\geq\frac\kappa2\max\bigg(1,\frac{\lambda_{\min}\sqrt{\Tr(\Sigma^{-1})}}{\eps}\bigg)\log\frac{\sqrt{8\Tr\Sigma}}{\eps},
        \]
        \item for ULD with $\eta = \min\big(\sqrt{\lambda_{\min}/8},\frac{7\sqrt2}{6}\eps/\sqrt{\lambda_{\min}\Tr(\Sigma^{-1})}\big)$ and $\gamma = \sqrt{8/\lambda_{\min}}$, to take
        \[
        N\geq 4\kappa\max\bigg(1, \frac{3}{14}\frac{\lambda_{\min}\sqrt{\Tr(\Sigma^{-1})}}\eps\bigg)\log\frac{16\sqrt{\Tr\Sigma}}\eps,
        \]
        \item for the diffusion algorithm with $\delta = \eps^2/(9d)$, $T=\sqrt{3/\eps}\Tr\big((\Sigma+\delta\I)^3\Sigma^2\big)^{1/4}$, and the geometric grid $\tau_k = T(\delta/T)^{k/N}$, to take
        \[
        N\ge\frac{3e}{4}\frac{\sqrt{\Tr\Sigma+4\eps^2/9}}\eps\log\frac T\delta.
        \]
    \end{itemize}
\end{restatable}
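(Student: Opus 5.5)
The plan is to go back to the explicit spectral error decompositions that underlie Theorems~\ref{thm:Langevin_rates} and~\ref{thm:diffusion_spectral_bound}, rather than to the optimized bounds. Those optimized bounds fix the hyperparameters as functions of $N$, whereas here the step-size and schedule depend on $\eps$. In every case $\P_N$ is a centered Gaussian whose covariance commutes with $\Sigma$. Hence $W_2(\P_N,\P)^2=\sum_i(\sqrt{\mu_i}-\sqrt{\li})^2$, where $\mu_i$ is the $i$-th limiting or iterated variance. For each method I split this error, using the triangle inequality in $\ell^2$ over eigen-directions, into two parts. The first is a bias (discretization) term, which depends on the step-size but not on $N$. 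The second is a transient (early-stopping or initialization) term, which decays geometrically in $N$. I then require each part to be at most a fixed fraction of $\eps$.

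\textbf{ULA.} The recursion $v_{k+1}=(1-\eta/\li)^2v_k+2\eta$ with $v_0=0$ gives the stationary variance $v_\infty=\li/(1-\eta/(2\li))$. The bias is therefore $\sqrt{\li}\,\big((1-\eta/(2\li))^{-1/2}-1\big)\le \eta/(2\sqrt{\li})$ when $\eta\le\li$. Summing the squares gives a bias of at most $\tfrac12\eta\sqrt{\Tr(\Sigma^{-1})}\le\eps/2$ for the chosen $\eta$. The transient term is bounded by $(1-\eta/\lambda_{\max})^{N}\sqrt{\Tr\Sigma}$, up to an absolute constant coming from the admissibility bound $\eta\le\lambda_{\min}$ on $v_\infty/\li$. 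Using $1-x\le e^{-x}$, this is at most $\eps/2$ once $N\ge(\lambda_{\max}/\eta)\log(\sqrt{8\Tr\Sigma}/\eps)$, up to the factor $1/2$ obtained from squared contraction. Substituting $\lambda_{\max}/\eta=\kappa\max\big(1,\lambda_{\min}\sqrt{\Tr(\Sigma^{-1})}/\eps\big)$ recovers the stated budget.

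\textbf{ULD.} For ULD I work with the $2\times2$ per-eigenvalue linear recursion on $(X,V)$ and use the stationary covariance computed in Appendix~\ref{app:thm1_proof}. Its position variance exceeds $\li$ by a relative amount of order $\phi/\li$, where $\phi=\eta/\gamma$. This gives a bias of at most $\tfrac{3}{7}\phi\sqrt{\Tr(\Sigma^{-1})}$, which produces the constant $\tfrac{7\sqrt2}{6}$ after substituting $\gamma=\sqrt{8/\lambda_{\min}}$. For the transient, the spectral radius of the iteration matrix is at most $1-\phi/(4\lambda_{\max})$ under $\eta\gamma\le1$, which yields the factor $4\kappa$ and the logarithm $\log(16\sqrt{\Tr\Sigma}/\eps)$. \textbf{I expect this to be the main obstacle:} the iteration matrix is not normal, so its powers must be bounded by an explicit constant times the spectral radius to the power $N$, and the constant $16$ has to absorb that prefactor.

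\textbf{Diffusion.} I use the closed form of Appendix~\ref{app:diffusion_closed_form}, in which the telescoping gives a product formula for each final variance. The error splits into three terms. The early-stopping term is $\sum_i(\sqrt{\li+\delta}-\sqrt{\li})^2\le d\delta$, which is at most $\eps^2/9$ with $\delta=\eps^2/(9d)$. The initialization term is at most of order $\Tr\big((\Sigma+\delta\I)^3\Sigma^2\big)^{1/2}/T^2$, which is at most $\eps/3$ for the chosen $T$. The discretization term, for a geometric grid with ratio $r=(T/\delta)^{1/N}$, is bounded by $\tfrac{\log(T/\delta)}{N}$ times $\sqrt{\Tr\Sigma+d\delta}$, times a factor coming from $r-1\le e\cdot\log r$ when $\log r\le1$. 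This makes the discretization term at most $\eps/3$ under the stated lower bound on $N$, using $d\delta=\eps^2/9$ so that $\sqrt{\Tr\Sigma+4\eps^2/9}$ appears. The three thirds then combine by the triangle inequality to give $W_2(\P_N,\P)\le\eps$.
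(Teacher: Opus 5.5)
Your overall route is the paper's. The corollary is bookkeeping on the parametric bounds of Propositions~\ref{prop:ULA_uniform_bound}, \ref{prop:ULD_uniform_bound} and~\ref{prop:diffusion_param_bound}, forcing each term below a fixed fraction of $\eps$. Your ULA paragraph reproduces Proposition~\ref{prop:ULA_uniform_bound} once the transient is written precisely as $\sqrt{2\Tr\Sigma}\,(1-\eta/\lambda_{\max})^{2N}$.

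The ULD paragraph, however, does not prove the stated constants.
\begin{itemize}
\item The bias bound $\frac37\phi\sqrt{\Tr(\Sigma^{-1})}$ is false. By \eqref{eq:diff_vpr_biais}, $\lambda^\infty-\lambda\ge\eta/\gamma=\phi$ in every direction, so $W_2(\P_\infty,\P)\ge\frac{\phi}{2}\sqrt{\Tr(\Sigma^{-1})}\,(1+o(1))$ as $\phi\to0$. The $\frac37$ of Theorem~\ref{thm:Langevin_rates} multiplies $\lambda_{\max}\log N/N=2\phi$, so the coefficient of $\phi$ is $\frac67$ (Lemma~\ref{lem:ULD_discretization_error}).
\item It is this $\frac67$, with an $\eps/2$ split, that gives $\phi\le\frac7{12}\eps/\sqrt{\Tr(\Sigma^{-1})}$, hence $\eta=\gamma\phi$ with constant $\frac{7\sqrt2}{6}$. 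Your $\frac37$ would give $\frac{7\sqrt2}{3}$.
\item A contraction factor $1-\phi/(4\lambda_{\max})$ is too weak: even with squared contraction it only yields $N\gtrsim16\kappa\log(\cdot)$. What is needed is $r_+\le1-\eta/(\gamma\lambda)\le1-\phi/\lambda_{\max}$, so that the transient is $O(r_+^{2N})=O(e^{-2\phi N/\lambda_{\max}})$. At the largest admissible value $\phi=1/\gamma^2=\lambda_{\min}/8$, this gives exactly $\lambda_{\max}/(2\phi)=4\kappa$.
\end{itemize}

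The step you flag as the main obstacle is left open, yet it is exactly where the prefactor $8$ (hence $16=2\times8$ in the logarithm) comes from. No general bound on powers of a non-normal matrix is needed. In the large-friction regime the roots $r_\pm$ are real, positive and distinct, and the variance gap is the explicit tail $\lambda^\infty-\lambda^{(N)}=2\eta\gamma\sum_{j\ge N}\beta_j^2$ with $\beta_j=\eta(r_+^j-r_-^j)/(r_+-r_-)$. Non-normality enters only through $(r_+-r_-)^2=\eta^2(\gamma^2-4\lambda^{-1})$, which $\gamma^2\ge8/\lambda_{\min}$ keeps above $\eta^2\gamma^2/2$. Combined with $1/(1-r_+^2)\le1/(1-r_+)<\gamma\lambda/\eta$, this gives $\abs{\lambda^\infty-\lambda^{(N)}}<8\lambda r_+^{2N}$, hence $W_2(\P_N,\P_\infty)\le8\sqrt{\Tr\Sigma}\,e^{-2\phi N/\lambda_{\max}}$ (Lemma~\ref{lem:ULD_early_stopping_error}).

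Since this is Proposition~\ref{prop:ULD_uniform_bound}, you can simply cite it. The bound depends on $(\eta,\gamma)$ only through $\phi$, and the admissible range is $\phi\in(0,\lambda_{\min}/8]$. Taking $\phi=\min\big(\lambda_{\min}/8,\frac7{12}\eps/\sqrt{\Tr(\Sigma^{-1})}\big)$ and requiring each term to be at most $\eps/2$ gives the stated budget.

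The diffusion part has smaller slips.
\begin{itemize}
\item The initialization bound must hold with constant exactly $1$ (Lemma~\ref{lem:diffusion_init_error}) for your $T$ to give $\eps/3$.
\item The discretization bound is $\frac14\sqrt{\Tr\Sigma+4d\delta}\,\big((T/\delta)^{1/N}-1\big)$, see \eqref{eq:diffusion_discr_error_W2_final}. Your $\sqrt{\Tr\Sigma+d\delta}$ is unjustified and inconsistent with the $\sqrt{\Tr\Sigma+4\eps^2/9}$ you end with, and the $\frac14$ is what produces $\frac{3e}{4}$.
\item You must check that the stated budget implies $N\ge\log(T/\delta)$, so that $(T/\delta)^{1/N}\le e$. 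This holds since $\frac{3e}4\sqrt{\Tr\Sigma+4\eps^2/9}/\eps\ge\frac e2>1$.
\item You must also check $T>\delta$, which follows from H\"older's inequality and $\eps\le\sqrt{\Tr\Sigma}$.
\end{itemize}
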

Using $\Tr\Sigma\le d\lambda_{\max}$ and $\Tr(\Sigma^{-1})\le d/\lambda_{\min}$, we retrieve the $\eps$-complexities of Table~\ref{tab:summary_rates}, \textit{i.e.}, $\tilde{O}(\kappa+\sqrt{d\kappa\lambda_{\max}}/\eps)$ for Langevin dynamics versus the condition-number-free $\tilde{O}(\sqrt{d\lambda_{\max}}/\eps)$ for diffusion. The spectral form is more informative. For the Langevin samplers, the $1/\eps$ rate only sets in once $\eps\le\lambda_{\min}\sqrt{\Tr(\Sigma^{-1})}$ for ULA and ${\eps\le\frac3{14}\lambda_{\min}\sqrt{\Tr(\Sigma^{-1})}}$ for ULD; above these thresholds, the budget is a burn-in of order $\kappa\log(\sqrt{\Tr\Sigma}/\eps)$, in line with Remark~\ref{rem:burnin}. The diffusion sampler, on the contrary, has no burn-in: its budget is proportional to $1/\eps$ for every $\eps\le\sqrt{\Tr\Sigma}$, up to the factor $\log(T/\delta)=O(\log(\sqrt{\Tr\Sigma}/\eps))$, which does not involve $\kappa$. Finally, the constants are better for ULA than for ULD; this reflects our proof technique rather than an intrinsic advantage, as we did not seek to optimize them.
\begin{remark}
\label{rem:variation_complexity}
The restriction $\eps\le\sqrt{\Tr\Sigma}$ is harmless: $\sqrt{\Tr\Sigma}=W_2(\delta_0,\P)$ is the error of the trivial output~$0$, so larger error tolerances require no iterations at all. Moreover, the constant $3e/4$ of the diffusion budget can be traded for a mild burn-in. We show in Appendix~\ref{app:cor_complexity_proof} that, with the same $\delta$, $T$ and grid, it suffices to take, for any $r\ge1$,
\[
N\ge r\max\bigg(1,\frac{3e^{1/r}}{4r}\frac{\sqrt{\Tr\Sigma+4\eps^2/9}}\eps\bigg)\log\frac T\delta.
\]
For $r=1$ the maximum is always attained by its second argument, which gives the corollary. As $r$ grows, the constant $3e^{1/r}/4$ decreases towards $3/4$, while the first argument, a $\kappa$-free burn-in of $r\log(T/\delta)$ iterations, prevails for $\eps>c_r\sqrt{\Tr\Sigma}$, with $c_r=\frac32\big(4r^2e^{-2/r}-1\big)^{-1/2}$.
\end{remark}

\section{Learning: the impact of noise on the implicit bias}
\label{learning}
The previous results assume the score of the target distribution---or, for diffusion, its noisy versions $s_t$---to be known exactly. In practice, these scores must first be learned from (noisy) samples.
The goal of this section is to study whether the noise introduced in diffusion models leads to a different score estimator. We are given a dataset $\mathcal{D} = \{x_i\}_{1\leq i \leq m}$ of $m$ independent realizations of $\P$ and denote by $\hat{\Sigma} = \frac1m\sum_{i=1}^m x_ix_i^\top$ the empirical covariance matrix. Since the score of a Gaussian distribution is linear, we consider the score model
\[
s_A : x \mapsto -Ax,
\]
where $A \in \R^{d\times d}$, which recovers the true score $s$ when $A=\Sigma^{-1}$. When $\hat\Sigma$ is invertible, which is almost surely the case if $m\ge d$, the score-matching loss \citep{hyvarinen2005estimation}
\begin{equation}
\label{eq:scorematching_loss}
L: A \mapsto \frac{1}{m}\sum_{i=1}^m \Big(\nabla\cdot s_A(x_i)
+\frac{1}{2}\Norm{s_A(x_i)}^2\Big),
\end{equation}
where $\Norm{\cdot}$ denotes the Euclidean norm, admits the unique minimizer $\hat\Sigma^{-1}$. In the overparameterized regime $m<d$, however, it has no minimizer as it diverges towards $-\infty$ along $\Ker\,\hat\Sigma$. In this regime, the direction of divergence followed by a given procedure is not determined by $L$ alone but also depends on the optimization algorithm and its initialization, which is usually referred to as an \textit{implicit bias} of the optimization algorithm \citep{gunasekar2018characterizing}. Our goal is to compare the implicit biases of two algorithms. First, we study gradient descent initialized at $A_0=0$ and write $A_K$ for the parameter obtained after $K\ge1$ gradient steps on \eqref{eq:scorematching_loss}. We then compare with the iterate $A^\sigma_K$ obtained by running $K$ steps of gradient descent, from the same initialization, on a diffusion-style noised objective. Fix a noise level $\sigma>0$ and let $\omega\sim\Nml(0,\sigma^2\I)$. The noised objective is
\[
L_\sigma : A \mapsto \frac{1}{m} \sum_{i = 1}^m \E_\omega\big[\nabla \cdot s_A(x_i+\omega)+\frac{1}{2}\Norm{s_A(x_i+\omega)}^2\big].
\]
As shown by \citet{vincent2011connection}, this coincides, up to an $A$-independent constant, with the standard denoising score-matching regression objective. Additionally, a direct computation shows that
\[
L_\sigma(A)=L(A)+\frac{\sigma^2}{2}\Norm{A}_F^2.
\]
Thus, it is equivalent to adding a ridge-type regularization term.

We show that, in this Gaussian setting, noise injection does not fundamentally change the learned score: both procedures recover the empirical covariance $\hat\Sigma$ on its range, up to a shift of order $\sigma^2$, and differ only in how they regularize its kernel. The key takeaway is that the benefit of the noising process identified in the sampling phase has no counterpart in the learning phase, at least in the Gaussian setting considered here.

Since $A$ plays the role of the precision matrix $\Sigma^{-1}$ in the score model $s_A(x)=-Ax$, Theorem~\ref{thm:implicit_bias} below characterizes, up to an exponentially decaying error, the covariance matrices estimated by each procedure---namely $A_K^{-1}$ and $(A_K^\sigma)^{-1}$. Let us denote by $\hat\lambda_{\max}$ the largest eigenvalue of $\hat\Sigma$, $\hat\lambda_{\min}$ its smallest eigenvalue, $\hat\lambda_{\min}^+$ its smallest positive eigenvalue and $\Pi_{\Ker\,\hat\Sigma}$ the orthogonal projector onto its kernel, of dimension $\max(0,d-m)$ almost surely.
\begin{restatable}[Score learning]{theorem}{LearningScore}
\label{thm:implicit_bias}
    Consider gradient descent initialized at $A_0=0$, with step-size $\alpha>0$, on both $L$ and $L_\sigma$. Let $K\ge1$.
    \begin{itemize}
        \item If $\alpha < \frac{2}{\hat\lambda_{\max}}$,
        the iterate $A_K$ is invertible and
        \[
        A_K^{-1} = \hat{\Sigma} + \frac1{K\alpha}\Pi_{\Ker\,\hat{\Sigma}}+O(\rho_1^K),
        \]
        with
        $
        \rho_1 = \max\big(\abs{1-\alpha\hat\lambda_{\min}^+},\abs{1-\alpha\hat\lambda_{\max}}\big)<1.
        $
        \item If $\alpha<\frac{2}{\hat\lambda_{\max}+\sigma^2}$,
        the iterate $A_K^\sigma$ is invertible and
        \[
        (A_K^\sigma)^{-1} = \hat{\Sigma}+\sigma^2\I + O(\rho_2^K),
        \]
        with
        $
        \rho_2 = \max\big(\abs{1-\alpha(\hat\lambda_{\min}+\sigma^2)},\abs{1-\alpha(\hat\lambda_{\max}+\sigma^2)}\big) < 1.
        $
    \end{itemize}
\end{restatable}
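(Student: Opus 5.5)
First compute the gradient explicitly. Since $s_A(x)=-Ax$, we have $\nabla\cdot s_A(x)=-\Tr A$ and $\frac12\Norm{s_A(x)}^2=\frac12 x^\top A^\top A x$. Hence
\[
L(A)=-\Tr A+\tfrac12\Tr(A^\top A\hat\Sigma),\qquad \nabla L(A)=-\I+A\hat\Sigma .
\]
Using the identity $L_\sigma(A)=L(A)+\frac{\sigma^2}{2}\Norm{A}_F^2$ stated above, we also get $\nabla L_\sigma(A)=-\I+A(\hat\Sigma+\sigma^2\I)$. Both recursions therefore have the common form
\[
A_{k+1}=A_k(\I-\alpha M)+\alpha\I,
\]
with $M=\hat\Sigma$ or $M=\hat\Sigma+\sigma^2\I$, started from $A_0=0$. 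Unrolling gives
\[
A_K=\alpha\sum_{j=0}^{K-1}(\I-\alpha M)^j .
\]
This is a polynomial in $M$, so it is symmetric and diagonal in the eigenbasis $(u_i,\hat\lambda_i)$ of $\hat\Sigma$. The whole statement then reduces to scalar computations on each eigenvalue $\mu$ of $M$.

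For $\mu>0$, the geometric sum gives the eigenvalue $(1-(1-\alpha\mu)^K)/\mu$. For $\mu=0$, it gives $K\alpha$. The step-size conditions $\alpha<2/\hat\lambda_{\max}$ (respectively $\alpha<2/(\hat\lambda_{\max}+\sigma^2)$) ensure $\abs{1-\alpha\mu}<1$ for every positive eigenvalue $\mu$. Thus each factor $1-(1-\alpha\mu)^K$ is nonzero, and $A_K$ is invertible; in the first case the kernel eigenvalues $K\alpha$ are positive as well.

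Inverting eigenvalue by eigenvalue:
\begin{itemize}
\item On $\Ker\hat\Sigma$, the eigenvalue of $A_K^{-1}$ is exactly $1/(K\alpha)$. This produces the term $\frac1{K\alpha}\Pi_{\Ker\hat\Sigma}$.
\item On the range, write $r=\abs{1-\alpha\mu}$. Then
\[
\frac{\mu}{1-(1-\alpha\mu)^K}-\mu=\frac{\mu(1-\alpha\mu)^K}{1-(1-\alpha\mu)^K},
\]
whose absolute value is at most $\mu r^K/(1-r^K)\le \mu r^K/(1-r)$.
\end{itemize}
Since $r\le\rho_1$ with $\rho_1=\max(\abs{1-\alpha\hat\lambda^+_{\min}},\abs{1-\alpha\hat\lambda_{\max}})<1$, taking the maximum over finitely many eigenvalues shows that the range part is $\hat\Sigma+O(\rho_1^K)$ in operator norm. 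The implied constant is $\hat\lambda_{\max}/(1-\rho_1)$.

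For $L_\sigma$, every eigenvalue $\hat\lambda_i+\sigma^2$ is positive, so no kernel term appears. The same bound gives $(A_K^\sigma)^{-1}=\hat\Sigma+\sigma^2\I+O(\rho_2^K)$. The bound $\rho_2<1$ follows from $\alpha(\hat\lambda+\sigma^2)\in(0,2)$.

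The step most prone to error is the gradient computation: checking the transposes so that $\nabla L(A)=A\hat\Sigma-\I$ rather than a symmetrized version. Either form leaves the iterates as polynomials in $\hat\Sigma$ when $A_0=0$, so the spectral reduction is unaffected. The remaining points are to check that $\abs{1-\alpha\mu}$ is controlled by its two extreme values (it is convex in $\mu$), and that the kernel contribution is exact rather than asymptotic.
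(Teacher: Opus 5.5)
Your proposal is correct and follows the paper's proof essentially step by step. Both arguments use the gradient $A(\hat\Sigma+t\I)-\I$ and the unrolled geometric sum, which is diagonal in the eigenbasis of $\hat\Sigma$; both invert eigenvalue by eigenvalue with the exact $1/(K\alpha)$ term on $\Ker\,\hat\Sigma$, and both bound the remainder by $\hat\lambda_{\max}\rho_1^K/(1-\rho_1)$ using convexity of $x\mapsto\abs{1-\alpha x}$. The only difference is cosmetic: you take $L_\sigma=L+\frac{\sigma^2}{2}\Norm{A}_F^2$ as given, whereas the paper rederives it in its gradient lemma.
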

The distinction between the two procedures is clearest when $\hat\Sigma$ is singular, that is, almost surely when $m<d$. For the standard score-matching loss $L$, the resulting non-identifiability is resolved implicitly by gradient descent: up to an exponentially small optimization error, $A_K^{-1}$ coincides with $\hat{\Sigma}$ on its range, while a correction of order $1/(K\alpha)$ appears only in the kernel of $\hat\Sigma$. In contrast, the noised objective $L_\sigma$ induces an explicit ridge regularization, and this translates into a uniform covariance shift $\sigma^2 \I$: all directions are regularized by the same amount. 
Thus the two procedures recover a regularized empirical covariance matrix, but differ slightly in the regularization: the implicit bias of $L$ acts only in directions that are not identified by the data, whereas noise injection regularizes all directions indiscriminately. Yet, in order to recover the true covariance directions, one should take $\sigma^2 \ll \hat\lambda_{\min}^+$ and, respectively, $1/(K \alpha) \ll \hat\lambda_{\min}^+$. In this regime, both regularizations are essentially identical, meaning that the noising process does not help regularize the estimated score compared to learning the original score (with early stopping). We finally note that this constitutes an instance where the gradient descent path follows the explicit ridge regularization path, which is a structure appearing in many contexts \citep{Ali2020implicit}.

\section{Conclusion}

This work demonstrates an advantage of modeling the score at various levels of noise in the Gaussian setting: more efficient sampling by gradually adding anisotropy. This benefit is specific to sampling: in the same setting, the score learned with noise coincides, up to the regularization of the empirical nullspace, with the one learned without. Our sampling results take the form of scaling laws, both for the sampling error and for the hyperparameters that achieve it. We emphasize that, despite its simplicity, the Gaussian framework enables a rigorous proof by characterizing the exact dependence on the condition number. 
Weaker assumptions, \textit{e.g.}, log-concavity, lead to loose dependence on the condition number, which is insufficient for our purpose.

Natural directions for future work include incorporating the score estimation error into the sampling analysis and extending the comparison beyond the Gaussian setting.

\section*{Acknowledgements}
This work has received support
from the French government, managed by the National Research Agency, under the France 2030
program with the reference “PR[AI]RIE-PSAI” (ANR-23-IACL-0008), and through the Géné-Pi project, as part of the PEPR Artificial Intelligence programme (ANR-25-PEIA-0006).

\bibliographystyle{plainnat}
\bibliography{biblio}

\clearpage
\appendix

\counterwithin{theorem}{section}
\renewcommand{\theHtheorem}{\thesection.\arabic{theorem}}
\renewcommand{\thecorollary}{\thetheorem}
\renewcommand{\thelemma}{\thetheorem}
\renewcommand{\theproposition}{\thetheorem}
\renewcommand{\thedefinition}{\thetheorem}
\renewcommand{\theexample}{\thetheorem}
\renewcommand{\theremark}{\thetheorem}
\renewcommand{\theHcorollary}{\theHtheorem}
\renewcommand{\theHlemma}{\theHtheorem}
\renewcommand{\theHproposition}{\theHtheorem}
\renewcommand{\theHdefinition}{\theHtheorem}
\renewcommand{\theHexample}{\theHtheorem}
\renewcommand{\theHremark}{\theHtheorem}

\startcontents[sections]
\printcontents[sections]{l}{1}{\section*{Table of Contents}\setcounter{tocdepth}{2}}

\section{Preliminaries}
\label{app:prelim}
We start with a lemma that serves for all three algorithms. Throughout, the constant random vector $0$ is regarded as a degenerate centered Gaussian vector with a vanishing covariance matrix; this convention lets us handle the zero initializations of ULA and ULD without special treatment.
\begin{lemma}
\label{lem:gaussian_recursion}
    Let $P$ be an orthogonal matrix diagonalizing $\Sigma$, \textit{i.e.}, $\Sigma = P\Lambda P^\top$ with $\Lambda = \mathrm{diag}(\lambda_1,\dots,\lambda_d)$, and let $\mathcal{A}(\Sigma)$ denote the set of matrices that are diagonal in the eigenbasis of $\Sigma$:
    \[
    \mathcal{A}(\Sigma) = \{PDP^\top,\ D\textrm{ diagonal}\}.
    \]
For $M\in\mathcal{A}(\Sigma)$, we write $[M]_i$ for its $i$-th diagonal entry in this basis.
Let $(A_k^{xx},A_k^{xv},A_k^{vx},A_k^{vv},B_k^x,B_k^v)_{k\ge0}$ be elements of $\mathcal{A}(\Sigma)$, $(\xi_k)$ a sequence of independent standard Gaussian vectors in $\R^d$, and $(X_0,V_0)$ a centered, possibly degenerate, Gaussian pair, independent of $(\xi_k)$, whose covariance blocks $\E[X_0X_0^\top]$, $\E[V_0V_0^\top]$ and $\E[X_0V_0^\top]$ all belong to $\mathcal{A}(\Sigma)$. Consider the process $(X_k,V_k)$ defined by
\begin{equation}
\label{eq:Gaussian_system}
\forall k \ge0,
\begin{cases}
    X_{k+1} = A_k^{xx}X_k+A_k^{xv}V_k+B_k^x\xi_k, \\
    V_{k+1} = A_k^{vx}X_k+A_k^{vv}V_k+B_k^v\xi_k,
\end{cases}
\end{equation}
and set $\Sigma_k:=\E[X_kX_k^\top]$, $S_k:=\E[V_kV_k^\top]$ and $\Gamma_k:=\E[X_kV_k^\top]$. Finally, for $1\le i\le d$, denote by $x^{(i)}_k$, $v_k^{(i)}$ and $\zeta_k^{(i)}$ the $i$-th coordinates of $P^\top X_k$, $P^\top V_k$ and $P^\top \xi_k$ respectively. Then:
\begin{itemize}
    \item[(i)] $(X_k,V_k)$ is a centered Gaussian vector, and $\Sigma_k$, $S_k$ and $\Gamma_k$ belong to $\mathcal{A}(\Sigma)$.
    \item[(ii)] The $d$ pairs $(x_k^{(i)},v_k^{(i)})$, $1\le i\le d$, are mutually independent, and each of them follows the scalar recursion
    \[
    \forall k \ge0,
    \begin{cases}
        x_{k+1}^{(i)} = [A_k^{xx}]_ix_k^{(i)} + [A_k^{xv}]_iv_k^{(i)}+[B_k^{x}]_i\zeta_k^{(i)}, \\
        v_{k+1}^{(i)} = [A_k^{vx}]_ix_k^{(i)} + [A_k^{vv}]_iv_k^{(i)} + [B_k^v]_i\zeta_k^{(i)},
    \end{cases}
    \]
    where the $(\zeta_k^{(i)})_{k,i}$ are mutually independent standard one-dimensional Gaussian variables, independent of $(X_0,V_0)$.
    \item[(iii)] The eigenvalues $\lambda^{(k)}_i$ of $\Sigma_k$ are the variances of the variables $x_k^{(i)}$:
    \[
    \lambda_i^{(k)} = \Var(x_k^{(i)}).
    \]
\end{itemize}
\end{lemma}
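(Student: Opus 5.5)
The plan is to rotate everything into the eigenbasis of $\Sigma$ and argue by induction on $k$. Set $\tilde X_k := P^\top X_k$, $\tilde V_k := P^\top V_k$ and $\zeta_k := P^\top \xi_k$. Each $M\in\mathcal{A}(\Sigma)$ satisfies $P^\top M P = \mathrm{diag}([M]_1,\dots,[M]_d)$. Multiplying \eqref{eq:Gaussian_system} on the left by $P^\top$ and inserting $PP^\top=\I$ therefore gives
\[
\tilde X_{k+1} = D_k^{xx}\tilde X_k + D_k^{xv}\tilde V_k + D_k^{x}\zeta_k, \qquad \tilde V_{k+1} = D_k^{vx}\tilde X_k + D_k^{vv}\tilde V_k + D_k^{v}\zeta_k,
\]
where every $D$ is diagonal. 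Read coordinate by coordinate, this is exactly the scalar recursion in (ii). Since $P$ is orthogonal, the $\zeta_k$ are again i.i.d. standard Gaussian vectors independent of $(X_0,V_0)$. Hence the family $(\zeta_k^{(i)})_{k,i}$ consists of mutually independent standard scalar Gaussians, independent of the initialization.

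For Gaussianity, I will note that $(X_k,V_k)$ is a linear (affine with zero shift) function of the jointly Gaussian vector $(X_0,V_0,\xi_0,\dots,\xi_{k-1})$. This vector is jointly Gaussian because $(X_0,V_0)$ is Gaussian and independent of the $\xi$'s. Centeredness follows from linearity. This gives the first half of (i).

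The key step is the mutual independence of the $d$ pairs $(x_k^{(i)},v_k^{(i)})$. I would prove it by induction together with the statement that the joint covariance of $(\tilde X_k,\tilde V_k)$ has diagonal blocks $P^\top\Sigma_kP$, $P^\top S_kP$ and $P^\top\Gamma_kP$.
\begin{itemize}
\item At $k=0$ this is the hypothesis: all three covariance blocks of $(X_0,V_0)$ lie in $\mathcal{A}(\Sigma)$.
\item For the inductive step, the recursion gives $\Sigma_{k+1}=A^{xx}\Sigma_kA^{xx\top}+A^{xx}\Gamma_kA^{xv\top}+A^{xv}\Gamma_k^\top A^{xx\top}+A^{xv}S_kA^{xv\top}+B^xB^{x\top}$. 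The cross terms between $(X_k,V_k)$ and $\xi_k$ vanish because $\xi_k$ is independent of $(X_k,V_k)$, which depend only on $(X_0,V_0,\xi_0,\dots,\xi_{k-1})$.
\item The analogous formulas hold for $S_{k+1}$ and $\Gamma_{k+1}$. Since $\mathcal{A}(\Sigma)$ is a commutative algebra closed under products, sums and transposes, all three blocks stay in $\mathcal{A}(\Sigma)$. This proves the second half of (i).
\end{itemize}
With all blocks diagonal in the eigenbasis, the covariance of the Gaussian vector $\big((x_k^{(i)},v_k^{(i)})\big)_{i}$ is block diagonal across $i$, so $\Cov(x_k^{(i)},x_k^{(j)})=\Cov(x_k^{(i)},v_k^{(j)})=\Cov(v_k^{(i)},v_k^{(j)})=0$ for $i\ne j$. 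For jointly Gaussian vectors, uncorrelated blocks are independent, which gives (ii).

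Finally, for (iii), $P^\top\Sigma_kP=\E[\tilde X_k\tilde X_k^\top]$ is diagonal with entries $\Var(x_k^{(i)})$. Being orthogonally similar to $\Sigma_k$, it has the same eigenvalues, so $\lambda_i^{(k)}=\Var(x_k^{(i)})$.

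The only step needing care is the independence argument. Proving it directly from the scalar recursion is awkward when $(X_0,V_0)$ is degenerate. Going through joint Gaussianity and block-diagonal covariance avoids this, because it handles degenerate (possibly zero) initial covariances without special treatment.
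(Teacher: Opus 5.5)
Your proof is correct, but it runs the argument in the opposite order from the paper.

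\textbf{The paper's route.} It proves the independence in (ii) first, by a purely probabilistic induction. The base case is the same block-diagonal-covariance argument you use. In the inductive step, it notes that $\tilde\xi_k$ is independent of $(\tilde X_k,\tilde V_k)$ and has independent coordinates. Hence the $d$ triplets $(x_k^{(i)},v_k^{(i)},\zeta_k^{(i)})$ are mutually independent, and each pair at step $k+1$ is a function of its own triplet. The membership $\Sigma_k,S_k,\Gamma_k\in\mathcal{A}(\Sigma)$ in (i) is then a corollary: the off-diagonal entries of $P^\top\Sigma_kP$, $P^\top S_kP$ and $P^\top\Gamma_kP$ are covariances of independent coordinates, so they vanish.

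\textbf{Your route.} You propagate the covariance blocks through the explicit second-moment recursion, using that $\mathcal{A}(\Sigma)$ is a commutative algebra closed under products, sums and transposes. You then turn block-diagonality into independence via joint Gaussianity at time $k$.

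\textbf{What each buys.} Your approach is purely linear-algebraic and produces the covariance recursion explicitly. The price is that you invoke ``uncorrelated implies independent'' for Gaussians at every step. The paper uses Gaussianity only at $k=0$ and through the rotation invariance of the noise, and it needs no moment computation.

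\textbf{On your closing remark.} It overstates the difficulty of the direct route. The paper's triplet induction handles degenerate, even zero, initial covariances without special care. Degeneracy enters only in the base case, where the same Gaussian argument applies.
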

 
\begin{remark}
\label{rem:identification_algos}
    ULA \eqref{eq:ULA}, ULD \eqref{eq:Euler_ULD} and the diffusion sampler \eqref{eq:diffusion_algorithm} are all instances of the process $(X_k,V_k)$ above. For ULD the identification is immediate: $A_k^{xx} = \I$, $A_k^{xv} = \eta\I$, $B_k^x=0$, $A_k^{vx} = -\eta\Sigma^{-1}$, $A_k^{vv} = (1-\eta\gamma)\I$ and $B_k^v = \sqrt{2\eta\gamma}\I$. ULA and the diffusion have no velocity component, but they fit the same framework upon taking $V_0 = 0$ and $A_k^{xv} = A_k^{vx} = A_k^{vv} = B_k^v = 0$. ULA then corresponds to $A_k^{xx}=\I-\eta\Sigma^{-1}$ and $B_k^x=\sqrt{2\eta}\I$, and the diffusion to $A_k^{xx}=\I-\eta_{k+1}C^{-1}_{\tau_k}$ and $B_k^x = \sqrt{\eta_{k+1}}\I$, the covariance $T\I$ of its initialization $\Nml(0,T\I)$ being indeed in $\mathcal{A}(\Sigma)$.
\end{remark}
 
\begin{proof}[Proof of Lemma~\ref{lem:gaussian_recursion}]
    We first prove (ii). For every $k\ge0$, set $\tilde{X}_k:=P^\top X_k$, $\tilde{V}_k := P^\top V_k$ and $\tilde{\xi}_k:=P^\top \xi_k$. Since $P$ is orthogonal, $(\tilde{\xi}_k)_{k\ge0}$ is again a sequence of independent standard Gaussian vectors, independent of $(X_0,V_0)$; hence the family $(\zeta_k^{(i)})_{k\ge0,\,1\le i\le d}$ consists of mutually independent $\Nml(0,1)$ variables, independent of $(X_0,V_0)$, which is the last claim of (ii). Multiplying \eqref{eq:Gaussian_system} by $P^\top$ and inserting $PP^\top = \I$ in front of $X_k$, $V_k$ and $\xi_k$ gives
    \[
    \begin{cases}
    \tilde{X}_{k+1} = P^\top A_k^{xx}P\tilde{X}_k + P^\top A_k^{xv}P\tilde{V}_k + P^\top B_k^xP\tilde{\xi}_k, \\
    \tilde{V}_{k+1} = P^\top A_k^{vx}P\tilde{X}_k+P^\top A_k^{vv}P\tilde{V}_k+P^\top B_k^vP\tilde{\xi}_k.
    \end{cases}
    \]
    By definition of $\mathcal{A}(\Sigma)$, every matrix in this system is diagonal, so reading it coordinate by coordinate yields exactly the announced recursion
    \[
    \begin{cases}
        x_{k+1}^{(i)} = [A_k^{xx}]_ix_k^{(i)} + [A_k^{xv}]_iv_k^{(i)}+[B_k^{x}]_i\zeta_k^{(i)}, \\
        v_{k+1}^{(i)} = [A_k^{vx}]_ix_k^{(i)} + [A_k^{vv}]_iv_k^{(i)} + [B_k^v]_i\zeta_k^{(i)},
    \end{cases}
    \quad 1\le i\le d.
    \]
    It remains to show that the $d$ pairs $(x_k^{(i)},v_k^{(i)})$ are independent, which we do by induction on $k$.
    
    \textit{Base case.} $(\tilde{X}_0,\tilde{V}_0)$ is a centered Gaussian vector, as $(X_0,V_0)$ is, and its covariance matrix
    \[
    \begin{bmatrix}
        P^\top\Sigma_0P & P^\top\Gamma_0P \\
        P^\top\Gamma_0^\top P & P^\top S_0P
    \end{bmatrix}
    \]
    has four diagonal blocks, since $\Sigma_0$, $S_0$ and $\Gamma_0$ belong to $\mathcal{A}(\Sigma)$. The pairs $(x_0^{(i)},v_0^{(i)})_{1\le i\le d}$ are therefore uncorrelated and, being jointly Gaussian, independent.
 
    \textit{Induction step.} Assume the pairs $(x_k^{(i)},v_k^{(i)})_{1\le i\le d}$ are independent for some $k\ge0$. The vector $\xi_k$ is independent of $X_0$, $V_0$ and $(\xi_j)_{j<k}$, of which $(X_k,V_k)$ is a deterministic function; hence $\xi_k$ is independent of $(X_k,V_k)$, and $\tilde{\xi}_k$ of $(\tilde{X}_k,\tilde{V}_k)$. As the coordinates $\zeta_k^{(1)},\dots,\zeta_k^{(d)}$ of $\tilde{\xi}_k$ are themselves independent, the $d$ triplets $(x_k^{(i)},v_k^{(i)},\zeta_k^{(i)})$ are mutually independent. Each pair $(x_{k+1}^{(i)},v_{k+1}^{(i)})$ is a deterministic function of the $i$-th triplet, so the pairs at step $k+1$ are independent as well. This completes the induction, and the proof of (ii).
 
    We now turn to (i). The vector $(X_{k+1},V_{k+1})$ is a linear function of $(X_k,V_k,\xi_k)$. If $(X_k,V_k)$ is a centered Gaussian vector, so is $(X_k,V_k,\xi_k)$, because $\xi_k$ is centered Gaussian and independent of $(X_k,V_k)$ (see the proof of (ii)); hence $(X_{k+1},V_{k+1})$ is centered Gaussian too, and the claim follows by induction from the assumption on $(X_0,V_0)$. As for the covariance matrices, $\Sigma_k$ belongs to $\mathcal{A}(\Sigma)$ if and only if $P^\top\Sigma_kP$ is diagonal. But $P^\top\Sigma_kP$ is the covariance matrix of $\tilde{X}_k$, whose $(i,j)$ entry is $\Cov(x_k^{(i)},x_k^{(j)})$, which vanishes for $i\neq j$ by (ii). Thus $\Sigma_k\in\mathcal{A}(\Sigma)$, and the same argument applies to $S_k$ and $\Gamma_k$.
 
    Finally, (iii) is a direct consequence of the above: the eigenvalues of $\Sigma_k$ are the diagonal entries of $P^\top\Sigma_kP$, namely the variances $\Var(x_k^{(i)})$.
\end{proof}
 
We also rely on the following elementary fact, which expresses the Wasserstein distance between two centered Gaussians whose covariances are diagonal in the same orthonormal basis in terms of their eigenvalues.
 
\begin{lemma}
\label{lem:W2_gaussian}
Let $\Sigma_1,\Sigma_2\in\mathcal{A}(\Sigma)$ be positive semi-definite, with diagonal entries (hence eigenvalues) $(a_i)_{1\le i\le d}$ and $(b_i)_{1\le i\le d}$ in the basis $P$. Then
\[
W_2\big(\Nml(0,\Sigma_1),\Nml(0,\Sigma_2)\big) = \left(\sum_{i=1}^d\Big(\sqrt{a_i}-\sqrt{b_i}\Big)^2\right)^{1/2}.
\]
\end{lemma}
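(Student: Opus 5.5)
The plan is to reduce to the commuting case of the Gelbrich formula. Since $\Sigma_1,\Sigma_2\in\mathcal{A}(\Sigma)$, we have $\Sigma_1 = P\,\mathrm{diag}(a_i)P^\top$ and $\Sigma_2 = P\,\mathrm{diag}(b_i)P^\top$ with the same orthogonal $P$. Consequently they commute, and their positive semi-definite square roots are $\Sigma_1^{1/2}=P\,\mathrm{diag}(\sqrt{a_i})P^\top$ and $\Sigma_2^{1/2}=P\,\mathrm{diag}(\sqrt{b_i})P^\top$. The formula recalled in Section~\ref{problem_setup} \citep{gelbrich1990formula} gives the distance as $\Norm{\Sigma_1^{1/2}-\Sigma_2^{1/2}}_F$. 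By orthogonal invariance of the Frobenius norm, this equals $\Norm{\mathrm{diag}(\sqrt{a_i}-\sqrt{b_i})}_F$, which is the claimed expression.

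For a self-contained argument not relying on the general Gelbrich formula, I would prove the two inequalities directly.

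\textbf{Upper bound.} Take $Z\sim\Nml(0,\I)$ and use the coupling $(P\,\mathrm{diag}(\sqrt{a_i})Z,\ P\,\mathrm{diag}(\sqrt{b_i})Z)$. Its marginals are the two target Gaussians. Its expected squared distance is $\E\Norm{\mathrm{diag}(\sqrt{a_i}-\sqrt{b_i})Z}^2=\sum_i(\sqrt{a_i}-\sqrt{b_i})^2$.

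\textbf{Lower bound.} Take any coupling $(X,Y)$. Rotating by $P^\top$ preserves distances, so write $x_i,y_i$ for the coordinates of $P^\top X$ and $P^\top Y$. Then
\[
\E\Norm{X-Y}^2=\sum_i\big(a_i+b_i-2\E[x_iy_i]\big),
\]
and Cauchy--Schwarz gives $\E[x_iy_i]\le\sqrt{a_ib_i}$. Hence each summand is at least $(\sqrt{a_i}-\sqrt{b_i})^2$.

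The degenerate case (zero eigenvalues) needs no special treatment, since both arguments only use variances. There is no real obstacle here. The only point requiring care is checking that the square roots are taken in the common eigenbasis, which holds because both matrices belong to $\mathcal{A}(\Sigma)$.
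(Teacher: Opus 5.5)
Your proposal is correct, and your first paragraph follows the paper's proof exactly: the two matrices commute because they share the eigenbasis $P$, you apply the Gelbrich formula $\Norm{\Sigma_1^{1/2}-\Sigma_2^{1/2}}_F$, you compute the square roots in that basis, and you conclude by orthogonal invariance of the Frobenius norm. Your self-contained alternative is also valid. The synchronous coupling through $Z\sim\Nml(0,\I)$ gives the upper bound, and coordinatewise Cauchy--Schwarz after rotating by $P^\top$ gives the lower bound. Together they prove this commuting case of the formula from scratch, degenerate covariances included.
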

\begin{proof}
Since $\Sigma_1$ and $\Sigma_2$ belong to $\mathcal{A}(\Sigma)$, they commute, and the formula recalled in Section~\ref{problem_setup} applies \citep{gelbrich1990formula}:
\[
W_2(\Nml(0,\Sigma_1),\Nml(0,\Sigma_2)) = \|\Sigma_1^{1/2}-\Sigma_2^{1/2}\|_F .
\]
Now $\Sigma_1^{1/2}=P\,\mathrm{diag}(\sqrt{a_1},\dots,\sqrt{a_d})\,P^\top$ and $\Sigma_2^{1/2}=P\,\mathrm{diag}(\sqrt{b_1},\dots,\sqrt{b_d})\,P^\top$, and the Frobenius norm is invariant under orthogonal conjugation, whence the result.
\end{proof}
 
Finally, the following elementary lemma is what we use to optimize the bounds of Propositions~\ref{prop:ULA_uniform_bound} and~\ref{prop:ULD_uniform_bound}.
 
\begin{lemma}
\label{lem:optim_convexe}
    Let $A$, $B$, $C$ and $D$ be positive constants, and let
    \[
    \begin{array}{rcl}
    f :\,(0,D] &\longrightarrow& \R, \\
    x &\longmapsto& Ae^{-Bx}+Cx.
    \end{array}
    \]
    Then $f$ is strictly convex and, provided $0<\log\frac{AB}C\le BD$, it admits a unique minimizer,
    \[
    x_* = \frac1B\log\frac{AB}C,
    \quad\text{with}\quad
    f(x_*) = \frac{C}B\bigg(1+\log\frac{AB}C\bigg).
    \]
\end{lemma}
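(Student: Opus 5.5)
We study $f$ on $(0,\infty)$ first and then restrict to $(0,D]$. Since $A,B>0$, we have $f''(x)=AB^2e^{-Bx}>0$ for every $x$, so $f$ is strictly convex on $(0,D]$. Strict convexity also gives uniqueness of a minimizer as soon as one exists: two distinct minimizers $x_1\neq x_2$ would give $f\big(\tfrac{x_1+x_2}{2}\big)<\min f$, which is impossible.

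For existence, we locate the critical point. The derivative is $f'(x)=-ABe^{-Bx}+C$, which is continuous and strictly increasing. It vanishes exactly when $e^{-Bx}=C/(AB)$, that is, at $x_*=\frac1B\log\frac{AB}C$. The hypothesis $0<\log\frac{AB}C\le BD$ is precisely the condition $x_*\in(0,D]$, so the critical point lies in the domain. Because $f'$ is increasing, $f'\le0$ on $(0,x_*]$ and $f'\ge0$ on $[x_*,D]$. Hence $f$ is nonincreasing and then nondecreasing, and $x_*$ is a global minimizer on $(0,D]$. By the previous paragraph it is the unique one. The endpoint case $x_*=D$ is covered, since the interval is closed at $D$.

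It remains to evaluate $f(x_*)$. At the critical point, $Ae^{-Bx_*}=A\cdot\frac{C}{AB}=\frac CB$. Adding the linear term gives
\[
f(x_*)=\frac CB+C\cdot\frac1B\log\frac{AB}C=\frac CB\Big(1+\log\frac{AB}C\Big),
\]
which is the claimed value.

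There is no genuine obstacle here. The only point needing care is that the lower bound $\log\frac{AB}C>0$ is what guarantees $x_*>0$, since the domain is open at $0$, and the upper bound $\log\frac{AB}C\le BD$ is what guarantees $x_*\le D$. Together these two inequalities are exactly the stated hypothesis.
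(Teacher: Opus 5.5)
Your proof is correct and follows the same route as the paper: strict convexity from $f''>0$, then the critical point obtained from $f'(x_*)=0$. The hypothesis $0<\log\frac{AB}{C}\le BD$ places $x_*$ in $(0,D]$, the value $f(x_*)$ follows by substitution, and you additionally spell out why $x_*$ is the unique global minimizer (sign change of $f'$ and the midpoint argument), which the paper leaves implicit.
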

 
\begin{proof}
    Strict convexity follows from $f''>0$ on $(0,D]$. Solving $f'(x)=0$ gives $x_*$, the condition $0<\log\frac{AB}C\le BD$ is exactly what ensures $0<x_*\le D$, and the value of $f(x_*)$ follows by substitution.
\end{proof}

\section{Proof of Theorem~\ref{thm:Langevin_rates} (Langevin samplers)}
\label{app:thm1_proof}
\subsection{Overview}
This section is devoted to the proof of the following theorem, stated in Section~\ref{results_Langevin}.

\LangevinRates*

\paragraph{Error decomposition.}
ULA and ULD define time-homogeneous Markov chains. Under stability conditions on the hyperparameters, established in step~(i) below, these chains converge, not to the target $\P$, but to an invariant law $\P_\infty$ that depends on the step-size $\eta$ (and on the friction $\gamma$ for ULD). The triangle inequality
\begin{equation}
\label{eq:decomp_W2}
W_2(\P_N,\P)\le W_2(\P_N,\P_\infty) + W_2(\P_\infty,\P)
\end{equation}
separates two contributions. The first term is the early-stopping error: after $N$ iterations, the chain has not yet reached its invariant law. The second is the discretization error: the bias of $\P_\infty$ with respect to $\P$, introduced by the Euler--Maruyama scheme.

\paragraph{Reduction to eigenvalues.}
Lemma~\ref{lem:gaussian_recursion} applies to both algorithms (Remark~\ref{rem:identification_algos}): each $\P_N$ is a centered Gaussian whose covariance $\Sigma_N$ lies in $\mathcal{A}(\Sigma)$, and so does the covariance $\Sigma_\infty$ of $\P_\infty$, the limit of $(\Sigma_N)$. By Lemma~\ref{lem:W2_gaussian}, both terms in \eqref{eq:decomp_W2} then depend only on the eigenvalues $\li^{(N)}$ of $\Sigma_N$, $\li^\infty$ of $\Sigma_\infty$ and $\li$ of $\Sigma$:
\begin{equation}
\label{eq:W2s_ULA_generic}
W_2^2(\P_N,\P_\infty) = \sum_{i=1}^d\Big(\sqrt{\li^{(N)}} - \sqrt{\li^\infty}\Big)^2,
\quad
W_2^2(\P_\infty,\P) = \sum_{i=1}^d\Big(\sqrt{\li^{\infty}} - \sqrt{\li}\Big)^2.
\end{equation}
This is the payoff of the Gaussian setting: the analysis reduces to real sequences for which closed forms are available.

Throughout, a choice of hyperparameters is called \emph{admissible} if it lies in the stability range of the chain, shrunk by numerical constants that give the estimates some slack, keeping the denominators involved bounded away from zero. Remarks~\ref{rem:ULA_convenience} and~\ref{rem:ULD_stricter} make the gap with the exact stability conditions explicit.

\paragraph{Outline.}
Sections~\ref{app:ULA_proof} (ULA) and~\ref{app:ULD_proof} (ULD) follow the same four steps.
\begin{itemize}
    \item[(i)] \emph{Closed form.} Compute the eigenvalues $\li^{(N)}$ and $\li^\infty$ from the scalar recursions of Lemma~\ref{lem:gaussian_recursion}, and identify the stability conditions under which $(\Sigma_N)$ converges.
    \item[(ii)] \emph{Parametric bound.} Bound each of the two terms in \eqref{eq:decomp_W2} and sum: this gives a bound on $W_2(\P_N,\P)$ valid over the whole admissible range identified in (i) (Proposition~\ref{prop:ULA_uniform_bound} for ULA, Proposition~\ref{prop:ULD_uniform_bound} for ULD).
    \item[(iii)] \emph{Choice of hyperparameters.} Minimize, up to a numerical constant, the bound obtained in (ii), and check that the selected values satisfy the stability conditions as soon as $N\ge N_\star$.
    \item[(iv)] \emph{Asymptotic equivalent.} For the hyperparameters of step~(iii), derive an exact equivalent of $W_2(\P_N,\P)$ as $N\to\infty$ from the closed forms of step~(i). This equivalent has the same order $\log N/N$ and the same spectral prefactor as the bound of step~(iii), up to a numerical constant: the bound is of the right order of magnitude.
\end{itemize}

\subsection{Unadjusted Langevin algorithm}
\label{app:ULA_proof}
The goal of this section is to prove the following proposition, which bounds $W_2(\P_N,\P)$ for every small enough step-size $\eta$. To lighten notation, we drop the index $i$ whenever we argue at a fixed eigendirection $i$.
\begin{proposition}[Parametric bound for ULA]
\label{prop:ULA_uniform_bound}
    For any step-size $0<\eta\le\lambda_{\min}$ and any $N\ge0$, the error of ULA satisfies
    \[
    W_2(\P_N,\P)\le \sqrt{2\Tr\Sigma}\,e^{-2\eta N/\lambda_{\max}}+\frac12\sqrt{\Tr(\Sigma^{-1})}\,\eta.
    \]
\end{proposition}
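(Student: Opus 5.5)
The proof works coordinatewise in the eigenbasis of $\Sigma$, using Lemma~\ref{lem:gaussian_recursion} and Lemma~\ref{lem:W2_gaussian}. Both terms of the decomposition \eqref{eq:decomp_W2} are then bounded by explicit scalar estimates, one direction at a time, and summed at the end.

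\textbf{Closed form.} By Remark~\ref{rem:identification_algos}, ULA fits Lemma~\ref{lem:gaussian_recursion} with $A_k^{xx}=\I-\eta\Sigma^{-1}$ and $B_k^x=\sqrt{2\eta}\I$. At a fixed direction with eigenvalue $\lambda$ the iteration is therefore scalar:
\[
x_{k+1}=a\,x_k+\sqrt{2\eta}\,\zeta_k,\qquad a:=1-\eta/\lambda .
\]
The hypothesis $0<\eta\le\lambda_{\min}$ gives $a\in[0,1)$. The variance $v_k=\Var(x_k)$ satisfies $v_{k+1}=a^2v_k+2\eta$ with $v_0=0$, so
\[
v_N=v_\infty\big(1-a^{2N}\big),\qquad
v_\infty=\frac{2\eta}{1-a^2}=\frac{\lambda}{1-\eta/(2\lambda)} .
\]
The limit covariance $\Sigma_\infty$ is therefore well defined and lies in $\mathcal{A}(\Sigma)$. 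Moreover $\eta\le\lambda$ gives $\lambda\le v_\infty\le 2\lambda$.

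\textbf{Early-stopping term.} Write $u:=a^{2N}\in[0,1)$. The elementary inequality $1-\sqrt{1-u}\le u$ on $[0,1]$ gives
\[
\big|\sqrt{v_N}-\sqrt{v_\infty}\big|=\sqrt{v_\infty}\big(1-\sqrt{1-u}\big)\le\sqrt{v_\infty}\,a^{2N}.
\]
Next, $0\le a=1-\eta/\lambda\le e^{-\eta/\lambda}\le e^{-\eta/\lambda_{\max}}$, hence $a^{2N}\le e^{-2\eta N/\lambda_{\max}}$. Using $v_\infty\le 2\lambda$ and summing the squares over $i$ in \eqref{eq:W2s_ULA_generic},
\[
W_2^2(\P_N,\P_\infty)\le 2\Tr\Sigma\, e^{-4\eta N/\lambda_{\max}},
\]
which is the first term of the bound.

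\textbf{Discretization term.} Set $u:=\eta/(2\lambda)\in(0,1/2]$. Then $\sqrt{v_\infty}-\sqrt\lambda=\sqrt\lambda\big((1-u)^{-1/2}-1\big)$. I will show $(1-u)^{-1/2}\le 1+u$ on $[0,1/2]$. This is equivalent to $(1+u)^2(1-u)\ge1$, that is, to $u(1-u-u^2)\ge0$, which holds since $1-u-u^2\ge 1/4$ on this range. Consequently
\[
0\le\sqrt{v_\infty}-\sqrt\lambda\le\sqrt\lambda\, u=\frac{\eta}{2\sqrt\lambda},
\]
and summing the squares gives $W_2(\P_\infty,\P)\le\frac12\eta\sqrt{\Tr(\Sigma^{-1})}$.

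\textbf{Conclusion and main obstacle.} The triangle inequality \eqref{eq:decomp_W2} combines the two estimates into the stated bound. The main obstacle is keeping the constants exactly as stated. This rests on two points: $a\ge0$ (guaranteed by $\eta\le\lambda_{\min}$), so that $a^{2N}$ is controlled by an exponential, and the bound $u\le1/2$ used in the elementary inequality for the discretization term. Beyond that, everything is routine algebra.
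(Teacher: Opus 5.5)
Your proof is correct and is essentially the paper's argument: the same closed form $\lambda^{(N)}=\lambda^\infty\bigl(1-(1-\eta/\lambda)^{2N}\bigr)$, the same triangle inequality through $\P_\infty$, and the same per-direction bounds, namely $\sqrt{\lambda^\infty}\,(1-\eta/\lambda)^{2N}$ with $\lambda^\infty\le 2\lambda$ for early stopping and $\eta/(2\sqrt{\lambda})$ for discretization; your inequalities $1-\sqrt{1-u}\le u$ and $(1-u)^{-1/2}\le 1+u$ just repackage the paper's $\sqrt a-\sqrt b=(a-b)/(\sqrt a+\sqrt b)$ step. The only slip is cosmetic: for $N=0$ you have $a^{2N}=1$, so $u\in[0,1]$ rather than $[0,1)$, a case your inequality already covers.
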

The two terms of this bound come from bounding the early-stopping error and the discretization error respectively. The first decays exponentially in $N\eta$, the time at which the process is stopped, while the second grows linearly with the step-size $\eta$. The step-size selected in Theorem~\ref{thm:Langevin_rates} is the best compromise between these two competing effects.

\subsubsection{Closed form of the eigenvalues}
\label{app:ULA_closed_form}
We first establish a closed form for the eigenvalues $\li^{(N)}$ and their limit $\li^\infty$.
\begin{lemma}
\label{lem:ULA_closed_form}
    Assume $0<\eta<2\lambda_{\min}$ and set
    \[
    \li^\infty := \frac{\li}{1-\frac{\eta}{2}\li^{-1}}.
    \]
    Then, for every $N\ge0$,
    \[
    \li^{(N)} = \li^\infty\left(1-\Big(1-\frac\eta\li\Big)^{2N}\right),
    \]
    and in particular $\li^{(N)}\to\li^\infty$ as $N\to\infty$.
\end{lemma}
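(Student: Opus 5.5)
By Lemma~\ref{lem:gaussian_recursion} and Remark~\ref{rem:identification_algos}, ULA fits the general framework with $A_k^{xx}=\I-\eta\Sigma^{-1}$, $B_k^x=\sqrt{2\eta}\I$ and $X_0=0$. So in each eigendirection $i$ the coordinate $x_k:=x_k^{(i)}$ satisfies the scalar recursion
\[
x_{k+1}=\Big(1-\frac{\eta}{\li}\Big)x_k+\sqrt{2\eta}\,\zeta_k, \qquad x_0=0,
\]
with $\zeta_k$ independent of $x_k$. By part (iii) of the lemma, $\li^{(k)}=\Var(x_k)$. Squaring the recursion and taking expectations, the cross term vanishes by independence and $\E[\zeta_k^2]=1$, so we get the affine recursion
\[
\li^{(k+1)}=r^2\,\li^{(k)}+2\eta,\qquad r:=1-\frac{\eta}{\li},\qquad \li^{(0)}=0.
\]

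The plan is to solve this linear recursion explicitly. First compute its fixed point $\li^\infty=2\eta/(1-r^2)$. Since $1-r^2=\frac{\eta}{\li}\big(2-\frac{\eta}{\li}\big)$, this simplifies to
\[
\li^\infty=\frac{2\li}{2-\eta/\li}=\frac{\li}{1-\frac{\eta}{2}\li^{-1}},
\]
which is the claimed expression. This step requires $r^2\neq1$, that is, $\eta\neq 2\li$ and $\eta\neq0$, and both are guaranteed by $0<\eta<2\lambda_{\min}\le 2\li$. Next, the deviation $\li^{(k)}-\li^\infty$ is multiplied by exactly $r^2$ at each step. Starting from $\li^{(0)}-\li^\infty=-\li^\infty$, induction on $N$ gives
\[
\li^{(N)}=\li^\infty\big(1-r^{2N}\big),
\]
which is the stated formula.

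For the limit, the hypothesis $0<\eta<2\lambda_{\min}$ gives $\eta/\li\in(0,2)$, hence $|r|<1$ and $r^{2N}\to0$, so $\li^{(N)}\to\li^\infty$. The same condition also makes $\li^\infty$ positive, since $1-\eta/(2\li)>0$. No step is a genuine obstacle. The only points needing care are justifying that the cross term vanishes (via the independence of $\zeta_k$ from $x_k$ established in Lemma~\ref{lem:gaussian_recursion}) and checking that the strict inequality $\eta<2\lambda_{\min}$ is exactly what yields both $|r|<1$ and a well-defined, positive fixed point in every direction $i$.
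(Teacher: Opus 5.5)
Your proposal is correct and follows the paper's proof essentially step for step. You reduce to the scalar recursion via Lemma~\ref{lem:gaussian_recursion}, derive the affine variance recursion $\lambda^{(k+1)}=(1-\eta/\lambda)^2\lambda^{(k)}+2\eta$ using independence, compute its unique fixed point, and solve the recursion from $\lambda^{(0)}=0$. Convergence then follows because $0<\eta<2\lambda_{\min}$ makes the ratio $(1-\eta/\lambda)^2$ strictly less than $1$.
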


\begin{proof}
Fix a direction $i$ and consider the corresponding sequence $(\lambda^{(k)})$ of eigenvalues of $\Sigma_k$ (the index $i$ is dropped, as announced). By Lemma~\ref{lem:gaussian_recursion}(iii) and the identification of Remark~\ref{rem:identification_algos}, $\lambda^{(k)}$ is the variance of $x_k$, where $x_0 = 0$ and
\[
\forall k\ge0,\quad x_{k+1} = \left(1-\frac\eta\lambda\right)x_k + \sqrt{2\eta}\,\zeta_k,
\]
with $(\zeta_k)$ independent standard one-dimensional Gaussians. Since $x_k$ is a function of $(\zeta_j)_{j<k}$, it is independent of $\zeta_k$, and taking variances gives $\lambda^{(0)}=0$ and
\[
\forall k\ge0,\quad \lambda^{(k+1)} = \left(1-\frac\eta\lambda\right)^2\lambda^{(k)}+2\eta,
\]
a first-order affine recursion. Its ratio satisfies
\begin{equation}
\label{eq:raison_contractante}
\left(1-\frac\eta\lambda\right)^2<1,
\end{equation}
because $0<\eta<2\lambda_{\min}$ gives $0<\eta/\lambda<2$, \textit{i.e.}, $-1<1-\eta/\lambda<1$. In particular the ratio is not $1$, so the recursion has a unique fixed point
\[
\ell =\frac{2\eta}{1-\left(1-\eta\lambda^{-1}\right)^2} = \frac{2\eta}{2\eta\lambda^{-1}-\eta^2\lambda^{-2}} = \frac{\lambda}{1-\frac{\eta}{2}\lambda^{-1}} = \lambda^\infty,
\]
and
\[
\forall k\ge0,\quad \lambda^{(k)} = \left(1-\frac\eta\lambda\right)^{2k}(\lambda^{(0)}-\ell) +\ell = \lambda^\infty\left(1-\Big(1-\frac\eta\lambda\Big)^{2k}\right).
\]
Taking $k=N$ gives the closed form, and \eqref{eq:raison_contractante} gives the convergence $\lambda^{(N)}\to\lambda^\infty$. Observe that $\lambda^\infty\to\lambda$ as $\eta\to0$: the finer the Euler--Maruyama discretization, the smaller the bias.
\end{proof}

\subsubsection{Early-stopping error}
We now bound the early-stopping error of ULA.
\begin{lemma}
\label{lem:ULA_early_stopping_error}
Assume $0<\eta\le\lambda_{\min}$. Then, for every $N\ge0$,
\[
W_2(\P_N,\P_\infty)\le\sqrt{2\Tr\Sigma}\,e^{-2\eta N/\lambda_{\max}}.
\]
\end{lemma}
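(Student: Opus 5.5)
Per direction, the closed form of Lemma~\ref{lem:ULA_closed_form} gives $\li^{(N)} = \li^\infty(1-q_i^{2N})$ with $q_i := 1-\eta/\li\in[0,1)$, since $0<\eta\le\lambda_{\min}\le\li$. Lemma~\ref{lem:W2_gaussian} then reduces the early-stopping error to
\[
W_2^2(\P_N,\P_\infty)=\sum_{i=1}^d \li^\infty\Big(1-\sqrt{1-q_i^{2N}}\Big)^2 .
\]
The plan is to bound each summand separately and then sum, so that the factor $\Tr\Sigma$ appears.

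There are two elementary per-direction estimates. First, for $u\in[0,1]$ we have $1-\sqrt{1-u}\le u$: indeed $1-\sqrt{1-u}=u/(1+\sqrt{1-u})\le u$. Taking $u=q_i^{2N}$, each summand is at most $\li^\infty q_i^{4N}$. Second, the stationary variance satisfies $\li^\infty=\li/(1-\eta/(2\li))\le 2\li$, because $\eta/\li\le1$. This is the origin of the $\sqrt2$ in the statement. Finally, $0\le q_i=1-\eta/\li\le e^{-\eta/\li}\le e^{-\eta/\lambda_{\max}}$, so $q_i^{4N}\le e^{-4\eta N/\lambda_{\max}}$.

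Combining these estimates gives
\[
W_2^2(\P_N,\P_\infty)\le \sum_i 2\li e^{-4\eta N/\lambda_{\max}} = 2\Tr\Sigma\, e^{-4\eta N/\lambda_{\max}},
\]
and taking square roots yields exactly $\sqrt{2\Tr\Sigma}\,e^{-2\eta N/\lambda_{\max}}$.

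The only delicate point is the sign of $q_i$. Using $q_i\le e^{-\eta/\li}$ requires $q_i\ge0$, and this holds precisely because the hypothesis is $\eta\le\lambda_{\min}$ rather than the weaker stability condition $\eta<2\lambda_{\min}$. Under the weaker condition, $q_i$ could be negative with $|q_i|$ close to $1$, and the exponential rate in $\eta/\lambda_{\max}$ would fail. This is the reason the admissible range is shrunk. Everything else is routine.
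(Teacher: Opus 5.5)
Your proof is correct and follows the paper's argument essentially step for step. Your inequality $1-\sqrt{1-u}\le u$ is exactly the paper's move of dropping $\sqrt{\lambda_i^{(N)}}$ from the denominator $\sqrt{\lambda_i^\infty}+\sqrt{\lambda_i^{(N)}}$, which gives the same per-direction bound $\lambda_i^\infty(1-\eta/\lambda_i)^{4N}$. Both arguments then use $\lambda_i^\infty\le2\lambda_i$ and $0\le1-\eta/\lambda_i\le1-\eta/\lambda_{\max}\le e^{-\eta/\lambda_{\max}}$.

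One nuance: the sign of $q_i$ is not quite ``the only delicate point.'' As Remark~\ref{rem:ULA_convenience} notes, the hypothesis $\eta\le\lambda_{\min}$ is also what keeps $\lambda_i^\infty\le2\lambda_i$, by keeping the denominator $1-\frac\eta2\lambda_i^{-1}$ away from zero, and your proof uses that bound too.
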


\begin{proof}
Starting from the first identity in \eqref{eq:W2s_ULA_generic} and writing $\sqrt a-\sqrt b = (a-b)/(\sqrt a+\sqrt b)$,
\begin{equation}
\label{eq:ULA_W2_maj_denominateur}
W_2^2(\P_N,\P_\infty) = \sum_{i=1}^d \Bigg(\frac{\li^\infty-\li^{(N)}}{\sqrt{\lambda_i^\infty}+\sqrt{\lambda_i^{(N)}}}\Bigg)^2 \leq \sum_{i=1}^d \frac{\big(\li^\infty-\li^{(N)}\big)^2}{\li^\infty}.
\end{equation}
Since $\lambda^{(N)}\to\lambda^\infty$, this step only costs a factor 2 asymptotically (on $W_2$ itself, once the square root is taken). By Lemma~\ref{lem:ULA_closed_form}, $\lambda^\infty-\lambda^{(N)} = \lambda^\infty\left(1-\frac\eta\lambda\right)^{2N}$, hence
\[
\frac{(\lambda^\infty-\lambda^{(N)})^2}{\lambda^\infty} = \lambda^\infty\left(1-\frac{\eta}{\lambda}\right)^{4N}.
\]
The assumption $\eta\le\lambda_{\min}$ gives both $0\le1-\frac\eta\lambda\le1-\frac{\eta}{\lambda_{\max}}$ and $1-\frac\eta2\lambda^{-1}\ge\frac12$, \textit{i.e.}, $0<\lambda^\infty\le2\lambda$, so that
\[
\frac{(\lambda^\infty-\lambda^{(N)})^2}{\lambda^\infty} \le 2\lambda\left(1-\frac{\eta}{\lambda_{\max}}\right)^{4N}.
\]
Summing over $i$ and plugging into \eqref{eq:ULA_W2_maj_denominateur} yields
\[
W_2(\P_N,\P_\infty)\le\sqrt{2\Tr\Sigma}\left(1-\frac{\eta}{\lambda_{\max}}\right)^{2N} \le \sqrt{2\Tr\Sigma}\,e^{-2\eta N/\lambda_{\max}},
\]
where the last inequality uses $1-u\le e^{-u}$ together with the positivity of $1-\eta/\lambda_{\max}$.
\end{proof}

\begin{remark}
\label{rem:ULA_convenience}
Lemma~\ref{lem:ULA_closed_form} only requires $\eta<2\lambda_{\min}$, which is what the convergence of $\lambda^{(N)}$ to $\lambda^\infty$ needs, whereas Lemma~\ref{lem:ULA_early_stopping_error} assumes $\eta\le\lambda_{\min}$. The latter is a convenience assumption: it keeps $1-\eta\lambda^{-1}$ nonnegative and, more importantly, keeps the denominator $1-\frac\eta2\lambda^{-1}$ bounded away from zero. Since $\eta$ is meant to go to $0$ with $N$ to reduce the discretization error, this is a mild restriction.
\end{remark}

\subsubsection{Discretization error}
We turn to the discretization error.
\begin{lemma}
\label{lem:ULA_discretization_error}
Assume $0<\eta\leq\lambda_{\min}$. Then
\[
W_2(\P_\infty,\P)\leq\frac12\sqrt{\Tr(\Sigma^{-1})}\,\eta.
\]
\end{lemma}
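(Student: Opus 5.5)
We bound the discretization error coordinate by coordinate, starting from the second identity in \eqref{eq:W2s_ULA_generic},
\[
W_2^2(\P_\infty,\P)=\sum_{i=1}^d\Big(\sqrt{\li^\infty}-\sqrt{\li}\Big)^2 ,
\]
together with the closed form $\li^\infty=\li/(1-\frac\eta2\li^{-1})$ of Lemma~\ref{lem:ULA_closed_form}, which applies since $\eta\le\lambda_{\min}<2\lambda_{\min}$. The plan is to show that each summand is at most $\frac{\eta^2}{4\li}$. Summing over $i$ and taking square roots then gives exactly $\frac12\sqrt{\Tr(\Sigma^{-1})}\,\eta$.

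Fix a direction and drop the index $i$. Set $u:=\frac{\eta}{2\lambda}\in(0,\frac12]$, which is where the assumption $\eta\le\lambda_{\min}$ enters. Then
\[
\sqrt{\lambda^\infty}-\sqrt\lambda=\sqrt\lambda\Big((1-u)^{-1/2}-1\Big)\ge0 .
\]
The target inequality $(\sqrt{\lambda^\infty}-\sqrt\lambda)^2\le\frac{\eta^2}{4\lambda}=\lambda u^2$ is therefore equivalent to
\[
(1-u)^{-1/2}-1\le u \quad\text{for } u\in[0,\tfrac12],
\]
that is, to $1\le (1+u)^2(1-u)=1+u-u^2-u^3$. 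This in turn reduces to $u(1-u-u^2)\ge0$. On $[0,\frac12]$ we have $1-u-u^2\ge 1-\frac12-\frac14=\frac14>0$, so the inequality holds.

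This could alternatively be phrased through the identity $\sqrt a-\sqrt b=(a-b)/(\sqrt a+\sqrt b)$ with $\lambda^\infty-\lambda=\lambda u/(1-u)$, but the direct scalar inequality above is cleaner and yields the constant $\frac12$ exactly. The only delicate point is making sure the elementary inequality holds on the whole admissible range $u\le\frac12$ rather than only asymptotically as $u\to0$. That check is immediate once the inequality is reduced to the cubic polynomial above.
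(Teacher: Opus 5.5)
Your proof is correct and takes essentially the paper's route: use the closed form $\lambda_i^\infty=\lambda_i/(1-\frac{\eta}{2}\lambda_i^{-1})$ to get the per-direction bound $(\sqrt{\lambda_i^\infty}-\sqrt{\lambda_i})^2\le\eta^2/(4\lambda_i)$, then sum. The only difference is how you verify that scalar inequality: the paper writes the difference as $(\lambda^\infty-\lambda)/(\sqrt{\lambda^\infty}+\sqrt{\lambda})$, bounds the denominator below by $2\sqrt{\lambda}$, and uses $\lambda^\infty-\lambda=\eta/(2-\eta\lambda^{-1})\le\eta$ (the alternative you mention), whereas you check $(1-u)^{-1/2}-1\le u$ on $[0,\frac12]$ directly, and both give the same constant $\frac12$.
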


\begin{proof}
This time we start from the second identity in \eqref{eq:W2s_ULA_generic}:
\begin{equation}
\label{eq:ULA_W2_maj_denominateur2}
W_2^2(\P_\infty,\P) = \sum_{i=1}^d \left(\frac{\li^\infty-\li}{\sqrt{\li^\infty}+\sqrt{\li}}\right)^2 \leq \frac14 \sum_{i=1}^d\frac{\left(\li^\infty-\li\right)^2}{\li},
\end{equation}
where the inequality uses
\begin{equation}
\label{eq:ULA_biais}
\lambda^\infty>\lambda,
\end{equation}
a consequence of $1-\frac\eta2\lambda^{-1}<1$. By Lemma~\ref{lem:ULA_closed_form},
\[
\lambda^\infty-\lambda = \lambda\left(\frac{1}{1-\frac\eta2\lambda^{-1}}-1\right) = \frac{\eta}{2-\eta\lambda^{-1}} \le \eta,
\]
where we used $\eta\le\lambda_{\min}$ once more. Since $\lambda^\infty-\lambda>0$ by \eqref{eq:ULA_biais}, this bound can be squared, and \eqref{eq:ULA_W2_maj_denominateur2} gives the result.
\end{proof}

\begin{proof}[Proof of Proposition~\ref{prop:ULA_uniform_bound}]
Combine Lemmas~\ref{lem:ULA_early_stopping_error} and~\ref{lem:ULA_discretization_error} through the triangle inequality \eqref{eq:decomp_W2}.
\end{proof}

\subsubsection{Choice of the step-size}
We now prove the first part of Theorem~\ref{thm:Langevin_rates}, namely the non-asymptotic bound for ULA.
\begin{proof}[Proof of Theorem~\ref{thm:Langevin_rates}, non-asymptotic bound for ULA]
Let $\eta_N:= \frac12\lambda_{\max}\frac{\log N}{N}$ be the ULA step-size of Theorem~\ref{thm:Langevin_rates}, and let $N\ge\frac12\kappa\log N$. This condition is exactly $\eta_N\le\lambda_{\min}$, and since $N>1$ we have $0<\eta_N\le\lambda_{\min}$. Proposition~\ref{prop:ULA_uniform_bound} therefore applies with $\eta=\eta_N$, and since $e^{-2\eta_N N/\lambda_{\max}} = 1/N$,
\[
W_2(\P_N,\P)\le\frac{\sqrt{2\Tr\Sigma}}{N} + \frac14\lambda_{\max}\sqrt{\Tr(\Sigma^{-1})}\frac{\log N}{N}.
\]
Finally,
\begin{equation}
\label{eq:UL_CS_traces}
\sqrt{\Tr\Sigma}\le\lambda_{\max}\sqrt{\Tr(\Sigma^{-1})},
\end{equation}
since the left-hand side is at most $\sqrt{d\lambda_{\max}}$ while the right-hand side is at least $\lambda_{\max}\sqrt{d/\lambda_{\max}} = \sqrt{d\lambda_{\max}}$. This gives the bound of Theorem~\ref{thm:Langevin_rates},
\[
W_2(\P_N,\P)\le\frac{\lambda_{\max}\sqrt{\Tr(\Sigma^{-1})}}{N}\left(C_1\log N + C_2\right),
\]
with $C_1=\frac14$ and $C_2=\sqrt2$.
\end{proof}

\begin{remark}[On the choice of $\eta$]
\label{rem:ULA_choice_eta}
The step-size $\eta_N$ is not the exact minimizer of the bound of Proposition~\ref{prop:ULA_uniform_bound}, but an asymptotically equivalent simplification of it. Indeed, applying Lemma~\ref{lem:optim_convexe} with
\[
A = \sqrt{2\Tr\Sigma},\quad B = \frac{2N}{\lambda_{\max}},\quad C=\frac12\sqrt{\Tr(\Sigma^{-1})},\quad D = \lambda_{\min},
\]
the infimum of that bound is attained at
\[
\eta_* = \frac{\lambda_{\max}}{2N}\log\frac{2NA}{\lambda_{\max}C}=\frac{\lambda_{\max}}{2N}\Bigg(\log N+\log\bigg(\frac{4\sqrt2}{\lambda_{\max}}\sqrt{\frac{\Tr\Sigma}{\Tr(\Sigma^{-1})}}\bigg)\Bigg),
\]
which is admissible as soon as $2NA>\lambda_{\max}C$ and $2N\ge\kappa\log\frac{2NA}{\lambda_{\max}C}$, and the minimum equals
\[
f(\eta_*)=\frac{\lambda_{\max}C}{2N}\Big(1+\log\frac{2NA}{\lambda_{\max}C}\Big).
\]
The choice $\eta_N$ amounts to keeping only the $\log N$ term in the parentheses. As a result, $f(\eta_N)$ and $f(\eta_*)$ share the leading term $\frac{\lambda_{\max}C}{2}\frac{\log N}{N}$ and differ by $O(1/N)$; in particular $f(\eta_N)/f(\eta_*)\to1$. The only loss is on the constant $C_2$, in exchange for a step-size that requires no spectral information beyond $\lambda_{\max}$, and an admissibility condition that depends on $\kappa$ only.
\end{remark}

\subsubsection{Asymptotic equivalent}
We finally prove the second part of Theorem~\ref{thm:Langevin_rates} for ULA: with the step-size $\eta=\eta_N=\frac12\lambda_{\max}\frac{\log N}{N}$,
\[
W_2(\P_N,\P) \underset{N\to\infty}{\sim} \frac18\lambda_{\max}\sqrt{\Tr(\Sigma^{-1})}\frac{\log N}N.
\]

\begin{proof}[Proof of Theorem~\ref{thm:Langevin_rates}, asymptotic equivalent for ULA]
We take $N$ large enough for $\eta_N$ to be admissible in Proposition~\ref{prop:ULA_uniform_bound}. The starting point is the closed form of Lemma~\ref{lem:ULA_closed_form},
\begin{equation}
\label{eq:ULA_forme_close_equiv_asym}
\lambda^{(N)} = \frac{\lambda}{1-\frac{\eta_N}{2}\lambda^{-1}}-\frac{\lambda}{1-\frac{\eta_N}{2}\lambda^{-1}}\left(1-\frac{\eta_N}{\lambda}\right)^{2N}.
\end{equation}
We first show that $\lambda^{(N)}\to\lambda$. On the one hand, since $\eta_N\to0$,
\begin{equation}
\label{eq:ULA_limite_linfty_equiv}
\frac{\lambda}{1-\frac{\eta_N}{2}\lambda^{-1}}\underset{N\to\infty}{\longrightarrow} \lambda.
\end{equation}
On the other hand, using $\log(1-u)\le-u$ for $u<1$ and then $\lambda\le\lambda_{\max}$,
\begin{equation}
\label{eq:ULA_borne_terme_expo}
\left(1-\frac{\eta_N}{\lambda}\right)^{2N} = \exp\left(2N\log\Big(1-\frac{\eta_N}{\lambda}\Big)\right)
\le\exp\left(-\frac{2N\eta_N}{\lambda}\right)
=N^{-\lambda_{\max}/\lambda}\le\frac1N,
\end{equation}
so the second term in \eqref{eq:ULA_forme_close_equiv_asym} tends to $0$, and with \eqref{eq:ULA_limite_linfty_equiv} we get $\lambda^{(N)}\to\lambda$. Applying Lemma~\ref{lem:W2_gaussian} directly to $W_2(\P_N,\P)$ then gives
\begin{equation}
\label{eq:ULA_W2_generic_equiv}
W_2^2(\P_N,\P) = \sum_{i=1}^d \Big(\sqrt{\li^{(N)}}-\sqrt{\li}\Big)^2
= \sum_{i=1}^d \Bigg(\frac{\li^{(N)}-\li}{\sqrt{\li^{(N)}}+\sqrt{\li}}\Bigg)^2
\sim \frac14\sum_{i=1}^d \frac{(\li^{(N)}-\li)^2}{\li}.
\end{equation}
It remains to estimate $\lambda^{(N)}-\lambda$. By \eqref{eq:ULA_forme_close_equiv_asym} and \eqref{eq:ULA_borne_terme_expo},
\begin{align*}
\lambda^{(N)}-\lambda &= \frac{\lambda}{1-\frac{\eta_N}2\lambda^{-1}}-\lambda+O\left(\frac1N\right)
= \frac{\eta_N}{2-\eta_N\lambda^{-1}}+O\left(\frac1N\right) \\
&= \frac1{2-\eta_N\lambda^{-1}}\times\frac12\lambda_{\max}\frac{\log N}N+O\left(\frac1N\right)
\sim \frac14\lambda_{\max}\frac{\log N}N,
\end{align*}
an equivalent which does not depend on $\lambda$. Plugging it into \eqref{eq:ULA_W2_generic_equiv},
\[
W_2(\P_N,\P) \underset{N\to\infty}{\sim} \frac18\lambda_{\max}\sqrt{\Tr(\Sigma^{-1})}\frac{\log N}N,
\]
which is the announced equivalent, consistent with the non-asymptotic bound.
\end{proof}

\subsection{Underdamped Langevin dynamics}
\label{app:ULD_proof}
The proof of Theorem~\ref{thm:Langevin_rates} for ULD follows the structure of the previous section. We first prove the proposition below, which bounds $W_2(\P_N,\P)$ for all $(\eta,\gamma)$ satisfying one of two sets of assumptions: a \emph{large-friction} regime, which is the one adopted in Theorem~\ref{thm:Langevin_rates}, and a \emph{small-friction} regime. Both lead to similar bounds, up to numerical constants; Remark~\ref{rem:aperiodique_naturel} explains why the first one is retained in Theorem~\ref{thm:Langevin_rates}.

\begin{proposition}[Parametric bound for ULD]
\label{prop:ULD_uniform_bound}
Let $N\ge0$. In the large-friction regime
\begin{equation}
\label{eq:ULD_hyp_aper}
\tag{LF}
\gamma\ge\sqrt{8/\lambda_{\min}}\quad\text{and}\quad\eta\le\frac1\gamma,
\end{equation}
the error of ULD satisfies
\[
W_2(\P_N,\P)\leq8\sqrt{\Tr\Sigma}\,e^{-2\eta N/(\lambda_{\max}\gamma)}+\frac67\sqrt{\Tr(\Sigma^{-1})}\,\frac\eta\gamma,
\]
and in the small-friction regime
\begin{equation}
\label{eq:ULD_hyp_osc}
\tag{SF}
\gamma\le\sqrt{2/\lambda_{\max}}\quad\text{and}\quad\eta\le\frac12\lambda_{\min}\gamma,
\end{equation}
it satisfies
\[
W_2(\P_N,\P)\leq 8\sqrt{\Tr\Sigma}\,e^{-\eta\gamma N/2}+\frac32\sqrt{\Tr(\Sigma^{-1})}\,\frac\eta\gamma.
\]
\end{proposition}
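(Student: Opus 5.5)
We follow the same route as for ULA. The triangle inequality \eqref{eq:decomp_W2} splits the error into an early-stopping term and a discretization term. Lemma~\ref{lem:gaussian_recursion} (with the ULD identification of Remark~\ref{rem:identification_algos}) reduces everything to the scalar linear system
\[
x_{k+1}=x_k+\eta v_k,\qquad v_{k+1}=-\tfrac{\eta}{\lambda}x_k+(1-\eta\gamma)v_k+\sqrt{2\eta\gamma}\,\zeta_k .
\]
Its iteration matrix is $M=\begin{bmatrix}1&\eta\\-\eta/\lambda&1-\eta\gamma\end{bmatrix}$, with $\Tr M=2-\eta\gamma$ and $\det M=1-\eta\gamma+\eta^2/\lambda$. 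The second moments $u_k=(\Var x_k,\Cov(x_k,v_k),\Var v_k)$ satisfy a three-dimensional affine recursion $u_{k+1}=\mathcal{M}u_k+b$. Here $\mathcal{M}$ is the symmetric tensor square of $M$, and its eigenvalues are $\mu_1^2,\mu_1\mu_2,\mu_2^2$, where $\mu_{1,2}$ are the roots of $\mu^2-(2-\eta\gamma)\mu+1-\eta\gamma+\eta^2/\lambda$.

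\emph{Step 1 (stationary law and discretization error).} Solve $u=\mathcal{M}u+b$ explicitly; this is a $3\times3$ linear system. It should give
\[
\lambda^\infty=\frac{\lambda}{1-\frac{\eta}{2\gamma\lambda}\,(\text{correction})},
\]
a rational function of $\eta,\gamma,\lambda$ that behaves like $\lambda(1+\frac{\phi}{2\lambda}+O(\eta^2/\lambda,\ \eta\gamma\phi/\lambda))$ with $\phi=\eta/\gamma$. Under (LF), the constraints $\eta\gamma\le1$ and $\gamma^2\lambda\ge8$ give $\eta^2/\lambda\le\eta\gamma/8$, which keeps the denominators bounded away from zero. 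From this we get $|\lambda^\infty-\lambda|\le c\,\phi$ with an explicit $c$ (expected $\le 12/7$). Then, as in Lemma~\ref{lem:ULA_discretization_error}, using $\sqrt{\lambda^\infty}+\sqrt\lambda\ge\sqrt\lambda$ (or $\ge2\sqrt{\lambda}$ if $\lambda^\infty\ge\lambda$), we obtain $W_2(\P_\infty,\P)\le\frac67\sqrt{\Tr(\Sigma^{-1})}\,\phi$. Under (SF), the same formula applies with $\eta\le\frac12\lambda\gamma$ controlling the denominators, which yields the constant $\frac32$.

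\emph{Step 2 (early-stopping error).} Since $u_0=0$, we have $u_N-u_\infty=-\mathcal{M}^N u_\infty$. We then need $|\lambda^{(N)}-\lambda^\infty|\le C\lambda\,r^{N}$, where $r$ bounds the spectral radius of $\mathcal{M}$, together with a bound on the non-normality constant. In (LF) the roots $\mu_{1,2}$ are real (discriminant $(\eta\gamma)^2-4\eta^2/\lambda\ge0$ since $\gamma^2\ge4/\lambda$). The larger root satisfies $\mu_1\le1-\eta/(\gamma\lambda)$: since $\sqrt{1-x}\le1-x/2$, we have $\mu_1=1-\frac{\eta\gamma}{2}\bigl(1-\sqrt{1-4/(\gamma^2\lambda)}\bigr)\le1-\frac{\eta}{\gamma\lambda}$. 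Hence $\mu_1^{2N}\le e^{-2\eta N/(\gamma\lambda_{\max})}$.

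In (SF) the roots are complex with modulus $\sqrt{\det M}\le 1-\eta\gamma/4$, because $\eta^2/\lambda\le\eta\gamma/2$. This gives the rate $e^{-\eta\gamma N/2}$ for $|\mu|^{2N}$.

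The prefactor requires bounding $\mathcal{M}^N u_\infty$ in the direction of $\Var x$. We write it through the eigendecomposition of $M$: $x_N$-covariance terms take the form $\sum_{j,l}c_{jl}\mu_j^N\mu_l^N$. The coefficients $c_{jl}$ are controlled using $u_\infty$, namely $\Var v_\infty\approx1$ and $\Var x_\infty\approx\lambda$, together with the eigenvector conditioning of $M$.

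\emph{Main obstacle.} The eigenvector conditioning is the delicate point. Near critical damping in (LF), or when the eigenvalues nearly coincide, the eigenbasis degenerates. I would avoid diagonalization and instead use a Lyapunov/weighted-norm argument: find a quadratic form $Q$ with $\|M\|_Q\le\mu_1$ up to a factor. Failing that, I would use the explicit formula
\[
x_N\text{-part}=\frac{\mu_1^{N+1}-\mu_2^{N+1}}{\mu_1-\mu_2}\text{-type expressions},
\]
bounded by $(N+1)\mu_1^N$-free estimates valid because $\gamma^2\lambda\ge8$ keeps $\mu_1-\mu_2\ge\eta\gamma/\sqrt2$ away from zero. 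Either way, one should arrive at $|\lambda^{(N)}-\lambda^\infty|\le C\lambda\,r^{N}$. Then, with $\lambda^\infty\le2\lambda$ and the same denominator trick as in Lemma~\ref{lem:ULA_early_stopping_error}, we get $W_2(\P_N,\P_\infty)\le8\sqrt{\Tr\Sigma}\,r^N$. Summing the two contributions gives both bounds of Proposition~\ref{prop:ULD_uniform_bound}.
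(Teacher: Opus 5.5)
\textbf{Verdict: there is a genuine gap.} The hard part, the constant $8$ in the early-stopping term, is not derived. In addition, the tool you use to convert eigenvalue bounds into a $W_2$ bound is the wrong one and is false in (SF).

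\textbf{What is correct.} Your decomposition matches the paper's. So do both exponential rates: $r_+\le1-\eta/(\gamma\lambda)$ via $\sqrt{1-u}\le1-u/2$ in (LF), and $\lvert r_\pm\rvert^2=\det M\le1-\eta\gamma/2$ in (SF). Your identity $u_N-u_\infty=-\mathcal{M}^Nu_\infty$ is also correct.

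\textbf{The missing prefactor.} Your ``main obstacle'' step ends with ``one should arrive at'' $\lvert\lambda^{(N)}-\lambda^\infty\rvert\le C\lambda r^N$, and $8$ is then simply written down. The obstacle you anticipate is also not the real one. Both hypotheses keep you away from critical damping: (LF) gives $\gamma^2-4\lambda^{-1}\ge\gamma^2/2$, and (SF) gives $4\lambda^{-1}-\gamma^2\ge2\lambda^{-1}$. What is actually needed is bookkeeping that yields the scale $\lambda$ and the number $8$, which a generic conditioning or Lyapunov bound does not pin down. The paper avoids diagonalization. By Cayley--Hamilton, $\beta_j=(M^j)_{12}=\eta(r_+^j-r_-^j)/(r_+-r_-)$, and the first entry of your $\mathcal{M}^Nu_\infty$ is the tail
\[
\lambda^\infty-\lambda^{(N)}=2\eta\gamma\sum_{j\ge N}\beta_j^2=\frac{2\eta\gamma}{\gamma^2-4\lambda^{-1}}\sum_{j\ge N}\big(r_+^j-r_-^j\big)^2 .
\]
In (LF):
\begin{itemize}
\item Since $r_+r_->0$, the cross term can be dropped, so the sum is at most $2r_+^{2N}/(1-r_+^2)$.
\item Then $1/(1-r_+^2)\le1/(1-r_+)<\gamma\lambda/\eta$.
\item Combined with $\gamma^2-4\lambda^{-1}\ge\gamma^2/2$, this gives exactly $8\lambda r_+^{2N}$.
\end{itemize}
In (SF):
\begin{itemize}
\item The roots are conjugate, and $\sum_{j\ge N}\lvert r_+^j-r_-^j\rvert^2\le4R^{2N}/(1-R^2)$ with $R^2=\det M$.
\item Here $1-R^2=\eta(\gamma-\eta\lambda^{-1})\ge\eta\gamma/2$.
\item Together with $4\lambda^{-1}-\gamma^2\ge2\lambda^{-1}$, this again gives $8\lambda R^{2N}$.
\end{itemize}

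\textbf{The conversion to $W_2$.} To pass from $\lvert\lambda^\infty-\lambda^{(N)}\rvert\le8\lambda r^N$ to $W_2(\P_N,\P_\infty)\le8\sqrt{\Tr\Sigma}\,r^N$, you need $(\sqrt{\lambda^\infty}+\sqrt{\lambda^{(N)}})^2\ge\lambda^\infty\ge\lambda$. That is the \emph{lower} bound $\lambda^\infty\ge\lambda$, not $\lambda^\infty\le2\lambda$. The upper bound is the ULA device, where the deviation is proportional to $\lambda^\infty$. Moreover, $\lambda^\infty\le2\lambda$ is false under (SF): for $\kappa=1$, $\gamma=\sqrt{2/\lambda}$ and $\eta=\frac12\lambda\gamma$, solving your $3\times3$ stationary system gives $\lambda^\infty=2.4\lambda$.

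\textbf{The discretization constants.} The same inequality $\lambda^\infty>\lambda$ is what produces the factor $\frac14$ behind the constants $\frac67$ and $\frac32$ in your Step~1. You leave both it and the constants as expectations. They follow from the exact expression
\[
\lambda^\infty-\lambda=\frac{\eta}{\gamma-\eta\lambda^{-1}}\Big(1+\frac{\eta\gamma}{P(-1)}\Big),\qquad P(-1)=4-2\eta\gamma+\eta^2\lambda^{-1}\ge2,
\]
where $P(-1)\ge2$ because $\eta\gamma\le1$ in both regimes. One then uses $\gamma-\eta\lambda^{-1}\ge\frac78\gamma$ under (LF) and $\ge\frac12\gamma$ under (SF).

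\textbf{A smaller error.} Your heuristic $\lambda^\infty\approx\lambda+\phi/2$ is off by a factor $2$. The leading bias is $\phi$, which is why the ULD asymptotic constant is twice the ULA one. This does not break your claimed bound, but it indicates the stationary system was never actually solved.
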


Here too, each bound is the sum of an early-stopping term and a discretization term. Up to a numerical constant, the discretization term has the same expression in both regimes. Only the early-stopping term changes, with the friction $\gamma$ appearing in the denominator or in the numerator of the exponent according to whether it is large or small.
\begin{remark}
\label{rem:aperiodique_naturel}
    The small-friction bound of Proposition~\ref{prop:ULD_uniform_bound} is strictly decreasing in $\gamma$. There is no trade-off: one simply saturates the constraint $\gamma\le\sqrt{2/\lambda_{\max}}$, which also makes the admissible range $\eta\le\frac12\lambda_{\min}\gamma$ as wide as possible. The resulting rates are the same in both regimes. We retain the large-friction regime \eqref{eq:ULD_hyp_aper} in Theorem~\ref{thm:Langevin_rates} because the roots $r_\pm$ introduced below are then real and positive, which keeps the analysis simplest, and because the bound has exactly the form of the ULA bound with $\phi:=\eta/\gamma$ in place of $\eta$; with the hyperparameters of the theorem, $\phi=\frac12\lambda_{\max}\frac{\log N}N$ is precisely the ULA step-size.
\end{remark}

\subsubsection{Closed form of the eigenvalues}
\label{app:ULD_closed_form}
As before, we start with a closed form for the eigenvalues of $\Sigma_N$ and $\Sigma_\infty$.
\begin{lemma}
\label{lem:ULD_forme_close}
Fix an iteration budget $N\ge0$. We consider two sets of assumptions:
\begin{itemize}
    \item either
    \begin{equation}
    \label{eq:ULD_regime_aper}
    \tag{LF$_0$}
    \gamma>\frac2{\sqrt{\lambda_{\min}}} \quad\textrm{and}\quad 0<\eta<\frac2\gamma,
    \end{equation}
    in which case we set
    \begin{equation}
    \label{eq:ULD_racines_reelles}
    r_\pm = 1-\frac{1}{2}\eta\big(\gamma \mp \sqrt{\gamma^2-4\lambda^{-1}}\big);
    \end{equation}
    \item or
    \begin{equation}
    \label{eq:ULD_regime_osc}
    \tag{SF$_0$}
    0<\gamma<\frac{2}{\sqrt{\lambda_{\max}}}\quad\textrm{and}\quad0<\eta<\lambda_{\min}\gamma,
    \end{equation}
    in which case we set
    \[
    r_\pm = 1-\frac{1}{2}\eta\big(\gamma\mp i\sqrt{4\lambda^{-1}-\gamma^2}\big),
    \]
    where $i$ denotes the imaginary unit.
\end{itemize}
In both cases, the eigenvalues $\lambda^{(N)}$ are exactly
\begin{equation}
\label{eq:ULD_def_l(N)}
\lambda^{(N)} = \frac{2\gamma(2-\eta\gamma+\eta^2\lambda^{-1})}{(\gamma-\eta\lambda^{-1})(4-2\eta\gamma+\eta^2\lambda^{-1})}\lambda- \frac{2\eta\gamma}{\gamma^2-4\lambda^{-1}}\left(\frac{r_+^{2N}}{1-r_+^2}-\frac{2(r_+r_-)^{N}}{1-r_+r_-}+\frac{r_-^{2N}}{1-r_-^2}\right),
\end{equation}
and they converge as $N\to\infty$ to
\begin{equation}
\label{eq:ULD_def_linfty}
\lambda^\infty = \frac{2\gamma(2-\eta\gamma+\eta^2\lambda^{-1})}{(\gamma-\eta\lambda^{-1})(4-2\eta\gamma+\eta^2\lambda^{-1})}\lambda.
\end{equation}
\end{lemma}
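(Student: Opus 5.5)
We work at a fixed eigendirection and drop the index $i$. By Lemma~\ref{lem:gaussian_recursion} and Remark~\ref{rem:identification_algos}, the pair $(x_k,v_k)$ follows the scalar linear recursion
\[
\begin{pmatrix}x_{k+1}\\ v_{k+1}\end{pmatrix}=M\begin{pmatrix}x_k\\ v_k\end{pmatrix}+\begin{pmatrix}0\\ \sqrt{2\eta\gamma}\end{pmatrix}\zeta_k,\qquad M=\begin{pmatrix}1&\eta\\ -\eta\lambda^{-1}&1-\eta\gamma\end{pmatrix},
\]
started from $x_0=v_0=0$. Unrolling the recursion and using the independence of the $\zeta_k$ gives
\[
\lambda^{(N)}=\Var(x_N)=2\eta\gamma\sum_{j=0}^{N-1}\big([M^j]_{12}\big)^2 .
\]

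The first step is to compute $[M^j]_{12}$. The characteristic polynomial of $M$ is $r^2-(2-\eta\gamma)r+(1-\eta\gamma+\eta^2\lambda^{-1})$. A direct check shows that its roots are exactly the $r_\pm$ of the statement. Under both (LF$_0$) and (SF$_0$) the discriminant $\eta^2(\gamma^2-4\lambda^{-1})$ is nonzero, so $r_+\neq r_-$. Cayley--Hamilton then gives the standard formula
\[
[M^j]_{12}=\eta\,\frac{r_+^j-r_-^j}{r_+-r_-},\qquad (r_+-r_-)^2=\eta^2(\gamma^2-4\lambda^{-1}).
\]
Squaring, summing the three geometric series and writing each finite sum as (full series) minus (tail) yields
\[
\lambda^{(N)}=\frac{2\eta\gamma}{\gamma^2-4\lambda^{-1}}\left(\frac{1-r_+^{2N}}{1-r_+^2}-\frac{2(1-(r_+r_-)^N)}{1-r_+r_-}+\frac{1-r_-^{2N}}{1-r_-^2}\right).
\]
The tail terms are exactly the second part of \eqref{eq:ULD_def_l(N)}. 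In the complex case we have $r_-=\overline{r_+}$, so the expression is real.

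The second step is to prove $|r_\pm|<1$. This gives both the convergence to $\lambda^\infty$ and the validity of all denominators. Under (LF$_0$), both roots are real. We have $r_+\le r_-<1$, since $\gamma\mp\sqrt{\gamma^2-4\lambda^{-1}}>0$. The lower bound $r_+>-1$ follows from $\eta(\gamma+\sqrt{\gamma^2-4\lambda^{-1}})<2\eta\gamma<4$. Under (SF$_0$), $|r_\pm|^2=r_+r_-=1-\eta\gamma+\eta^2\lambda^{-1}$, which is $<1$ iff $\eta<\lambda\gamma$. This is guaranteed by $\eta<\lambda_{\min}\gamma$.

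The third step, which I expect to be the main obstacle, is the algebraic simplification of the constant term. We must show
\[
\frac{2\eta\gamma}{\gamma^2-4\lambda^{-1}}\Big(\tfrac1{1-r_+^2}-\tfrac2{1-r_+r_-}+\tfrac1{1-r_-^2}\Big)=\frac{2\gamma(2-\eta\gamma+\eta^2\lambda^{-1})\lambda}{(\gamma-\eta\lambda^{-1})(4-2\eta\gamma+\eta^2\lambda^{-1})}.
\]
I would express everything through the symmetric functions $s=r_++r_-=2-\eta\gamma$ and $p=r_+r_-=1-\eta\gamma+\eta^2\lambda^{-1}$.

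The bracket equals
\[
\frac{(r_+-r_-)^2(1+p)}{(1-r_+^2)(1-r_-^2)(1-p)}.
\]
The factor $(r_+-r_-)^2=\eta^2(\gamma^2-4\lambda^{-1})$ cancels the prefactor's denominator. The remaining quantities are
\[
(1-r_+^2)(1-r_-^2)=(1+p)^2-s^2=(1+p-s)(1+p+s),\qquad 1-p=\eta(\gamma-\eta\lambda^{-1}).
\]
Here $1+p-s=\eta^2\lambda^{-1}$ and $1+p+s=4-2\eta\gamma+\eta^2\lambda^{-1}$, while $1+p=2-\eta\gamma+\eta^2\lambda^{-1}$. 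Substituting these and cancelling the powers of $\eta$ gives exactly \eqref{eq:ULD_def_linfty}. The main risk in this step is bookkeeping, so I would also sanity-check the identity numerically.
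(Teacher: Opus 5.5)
Your proof is correct and follows the paper's route essentially step by step: variance as $2\eta\gamma\sum_j [M^j]_{12}^2$, the Cayley--Hamilton closed form for $[M^j]_{12}$, geometric sums, then simplification through $r_++r_-$ and $r_+r_-$, with the bracket's numerator factoring as $(r_+-r_-)^2(1+r_+r_-)$. The only deviation is that you verify $\abs{r_\pm}<1$ directly rather than through the Jury criterion, which is perfectly fine and slicker in the (SF$_0$) case. There is one cosmetic slip: with the statement's convention, $r_+$ is the \emph{larger} root under (LF$_0$), so the ordering is $r_-<r_+<1$, and your bound $\eta(\gamma+\sqrt{\gamma^2-4\lambda^{-1}})<4$ establishes $r_->-1$, not $r_+>-1$.
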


\begin{proof}
    Fix an index $i$ and consider the corresponding sequence of eigenvalues $(\lambda^{(k)})$. By Lemma~\ref{lem:gaussian_recursion}(iii) and Remark~\ref{rem:identification_algos},
    \begin{equation}
    \label{eq:ULD_def_lambda}
    \forall k\ge0,\quad\lambda^{(k)} = \Var(x_k),
    \end{equation}
    where $(x_k,v_k)$ is defined by $x_0 = v_0 = 0$ and
    \begin{equation*}
    \begin{cases}
    x_{k+1} = x_k+\eta v_k, \\
    v_{k+1} = (1-\eta\gamma)v_k -\eta\lambda^{-1}x_k + \sqrt{2\eta\gamma}\,\zeta_k,
    \end{cases}
    \end{equation*}
    with $(\zeta_k)$ independent standard one-dimensional Gaussians. In matrix form, $z_{k+1} = Mz_k+u\zeta_k$ with
\begin{equation}
\label{eq:ULD_def_M}
z_k = 
\begin{bmatrix}
    x_k \\
    v_k
\end{bmatrix}
,\quad M =
\begin{bmatrix}
1 & \eta \\
-\eta\lambda^{-1} & 1-\eta\gamma
\end{bmatrix}
,\quad u = 
\begin{bmatrix}
0 \\
\sqrt{2\eta\gamma}
\end{bmatrix},
\end{equation}
so that, since $z_0=0$,
\begin{equation}
\label{eq:ULD_zk}
z_k = \sum_{j=0}^{k-1} M^ju\,\zeta_{k-1-j}.
\end{equation}
Let $\beta_j$ denote the $(1,2)$ entry of $M^j$. Reading the first coordinate of \eqref{eq:ULD_zk} at $k=N$ gives
\[
x_N = \sqrt{2\eta\gamma}\sum_{j=0}^{N-1} \beta_j\zeta_{N-1-j},
\]
hence by \eqref{eq:ULD_def_lambda}
\begin{equation}
\label{eq:ULD_lambdaN_sum}
\lambda^{(N)} = 2\eta\gamma\sum_{j=0}^{N-1} \beta_j^2.
\end{equation}
It remains to compute the $\beta_j$. By the Cayley--Hamilton theorem, $M^2 - \Tr(M)M + \det(M)\I = 0$, hence ${M^{j+2} = \Tr(M)M^{j+1} - \det(M)M^j}$ for all $j$, and taking $(1,2)$ entries,
\begin{equation}
\label{eq:ULD_recurrence_beta}
\forall j\ge0,\quad \beta_{j+2} = \Tr(M)\beta_{j+1}-\det(M)\beta_j.
\end{equation}
From \eqref{eq:ULD_def_M}, $\Tr(M) = 2-\eta\gamma$ and $\det(M) = 1 - \eta\gamma +\eta^2\lambda^{-1}$, so the behavior of $(\beta_j)$ is governed by the roots of
\[
P(X) = X^2 - (2-\eta\gamma)X+1-\eta\gamma+\eta^2\lambda^{-1},
\]
whose discriminant is
\begin{equation}
\label{eq:ULD_discr_P}
\Delta = \eta^2(\gamma^2-4\lambda^{-1}).
\end{equation}
This formula shows that the two assumptions on $\gamma$ in Lemma~\ref{lem:ULD_forme_close} are exactly the cases where $\Delta$ is positive for all eigenvalues (large friction, real roots), or negative for all eigenvalues (small friction, complex roots): these are the two homogeneous regimes we study.

\emph{Stability.} By \eqref{eq:ULD_lambdaN_sum}, for $\lambda^{(N)}$ to converge the sequence $(\beta_j)$ must tend to $0$, \textit{i.e.}, both roots of $P$ must have modulus strictly less than $1$. By the Jury stability criterion \citep{jury1962simplified}, this amounts to
\[
\abs{\det(M)}<1,\quad P(1)>0,\quad P(-1)>0,
\]
that is,
\[
\abs{1-\eta\gamma+\eta^2\lambda^{-1}}<1,\quad \eta^2\lambda^{-1} > 0,\quad 4-2\eta\gamma+\eta^2\lambda^{-1}>0.
\]
The second condition holds since $\eta>0$. For the third, as $\eta^2\lambda^{-1}>0$, it suffices that $\eta\gamma\le2$: this is immediate under \eqref{eq:ULD_regime_aper}, whose assumption on $\eta$ is precisely the strict version of this inequality, and under \eqref{eq:ULD_regime_osc} it follows from
\[
P(-1) = \frac{(\eta-\gamma\lambda)^2}{\lambda} + 4-\gamma^2\lambda\ge4-\gamma^2\lambda_{\max}>0.
\]
As for the first condition, $\det(M)<1$ is equivalent to $\eta/\gamma<\lambda$, which holds since $\eta/\gamma<\lambda_{\min}\le\lambda$ under both \eqref{eq:ULD_regime_aper} (where $\eta/\gamma<2/\gamma^2<\lambda_{\min}/2$) and \eqref{eq:ULD_regime_osc}; and $\det(M)>-1$ follows from $P(1)+P(-1) = 2(1+\det(M))$, which is positive by the two conditions just established. Thus, in both cases,
\begin{equation}
\label{eq:ULD_Jury_proven}
\abs{\det(M)}<1,\quad P(1)>0,\quad P(-1)>0,
\end{equation}
and the assumptions of Lemma~\ref{lem:ULD_forme_close} guarantee stability.

\emph{Computation of $\lambda^{(N)}$.} By \eqref{eq:ULD_discr_P}, $P$ has two distinct roots, which are the $r_\pm$ of the statement: $r_\pm = 1-\frac{1}{2}\eta\big(\gamma \mp \sqrt{\gamma^2-4\lambda^{-1}}\big)$ under \eqref{eq:ULD_regime_aper} and $r_\pm = 1-\frac{1}{2}\eta\big(\gamma\mp i\sqrt{4\lambda^{-1}-\gamma^2}\big)$ under \eqref{eq:ULD_regime_osc}. They satisfy
\begin{equation}
\label{eq:ULD_sum_rpm}
r_++r_- = \Tr(M) = 2-\eta\gamma
\end{equation}
and
\begin{equation}
\label{eq:ULD_prod_rpm}
r_+r_- = \det(M) = 1-\eta\gamma+\eta^2\lambda^{-1},
\end{equation}
two relations we will use repeatedly. By \eqref{eq:ULD_recurrence_beta}, $\beta_j = Ar_+^j+Br_-^j$ for some constants $A,B$ determined by $\beta_0=0$ and $\beta_1 = \eta$, which gives
\[
\forall j\ge0,\quad \beta_j = \frac{\eta}{r_+-r_-}(r_+^j - r_-^j),
\]
and with \eqref{eq:ULD_lambdaN_sum},
\[
\lambda^{(N)} = \frac{2\eta^3\gamma}{(r_+-r_-)^2}\sum_{j=0}^{N-1} (r_+^{2j}-2(r_+r_-)^j+r_-^{2j}).
\]
The Jury criterion, satisfied under \eqref{eq:ULD_regime_aper} and \eqref{eq:ULD_regime_osc}, ensures
\begin{equation}
\label{eq:ULD_module_rpm}
\abs{r_\pm}<1,
\end{equation}
so $r_+^2$, $r_-^2$ and $r_+r_-$ all differ from $1$, and summing the three geometric series yields
\[
\lambda^{(N)} = \lambda^\infty - \frac{2\eta^3\gamma}{(r_+-r_-)^2}\left(\frac{r_+^{2N}}{1-r_+^2}-\frac{2(r_+r_-)^{N}}{1-r_+r_-}+\frac{r_-^{2N}}{1-r_-^2}\right),
\]
with
\begin{equation}
\label{eq:ULD_def_linfty_rpm}
\lambda^\infty = \frac{2\eta^3\gamma}{(r_+-r_-)^2}\left(\frac{1}{1-r_+^2}-\frac{2}{1-r_+r_-}+\frac{1}{1-r_-^2}\right).
\end{equation}
By \eqref{eq:ULD_module_rpm}, $\lambda^{(N)}$ converges to $\lambda^\infty$. It remains to show that $\lambda^\infty$ has the announced form. Putting the three fractions in \eqref{eq:ULD_def_linfty_rpm} over the common denominator $(1-r_+^2)(1-r_-^2)(1-r_+r_-)$, the numerator $C$ factorizes:
\begin{align*}
C &= (1-r_+^2)(1-r_+r_-)-2(1-r_+^2)(1-r_-^2)+(1-r_+r_-)(1-r_-^2) \\
&= r_+^2-2r_+r_-+r_-^2+r_+r_-(r_+^2-2r_+r_-+r_-^2) \\
&= (r_+-r_-)^2(1+r_+r_-),
\end{align*}
so that
\begin{equation}
\label{eq:ULD_linf_int}
\lambda^\infty = 2\eta^3\gamma\,\frac{1+r_+r_-}{(1-r_+^2)(1-r_-^2)(1-r_+r_-)}.
\end{equation}
To conclude independently of the regime, we express everything through the sum and product of the roots. On the one hand, by \eqref{eq:ULD_sum_rpm} and \eqref{eq:ULD_prod_rpm},
\begin{align*}
(1-r_+^2)(1-r_-^2) &= 1 + (r_+r_-)^2-(r_++r_-)^2+2r_+r_- \\
&= 1+(1-\eta\gamma+\eta^2\lambda^{-1})^2-(2-\eta\gamma)^2+2(1-\eta\gamma+\eta^2\lambda^{-1}) \\
&= \eta^4\lambda^{-2}+4\eta^2\lambda^{-1}-2\eta^3\gamma\lambda^{-1} = \eta^2\lambda^{-1}(4-2\eta\gamma+\eta^2\lambda^{-1}),
\end{align*}
the third line being a direct expansion of the second.
On the other hand, \eqref{eq:ULD_prod_rpm} gives $1+r_+r_- = 2-\eta\gamma+\eta^2\lambda^{-1}$ and $1-r_+r_- = \eta(\gamma-\eta\lambda^{-1})$. Plugging these into \eqref{eq:ULD_linf_int},
\[
\lambda^\infty = \frac{2\gamma(2-\eta\gamma+\eta^2\lambda^{-1})}{(\gamma-\eta\lambda^{-1})(4-2\eta\gamma+\eta^2\lambda^{-1})}\lambda,
\]
which is \eqref{eq:ULD_def_linfty}. As expected, this expression does not depend on the regime, and once again $\lambda^\infty\to\lambda$ as $\eta\to0$. Finally, \eqref{eq:ULD_def_l(N)} follows from $(r_+-r_-)^2 = (r_++r_-)^2-4r_+r_- = \eta^2(\gamma^2-4\lambda^{-1})$, by \eqref{eq:ULD_sum_rpm} and \eqref{eq:ULD_prod_rpm} once more.
\end{proof}

\subsubsection{Early-stopping error}

We now bound the early-stopping error of ULD.
\begin{lemma}
\label{lem:ULD_early_stopping_error}
Let $N\ge0$. In the large-friction regime \eqref{eq:ULD_hyp_aper}, the early-stopping error satisfies
\[
W_2(\P_N,\P_\infty)\leq8\sqrt{\Tr\Sigma}\,e^{-2\eta N/(\lambda_{\max}\gamma)},
\]
and in the small-friction regime \eqref{eq:ULD_hyp_osc},
\[
W_2(\P_N,\P_\infty)\le8\sqrt{\Tr\Sigma}\,e^{-\eta\gamma N/2}.
\]
\end{lemma}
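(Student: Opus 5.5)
The plan is to mimic the ULA argument of Lemma~\ref{lem:ULA_early_stopping_error}, working eigendirection by eigendirection and dropping the index $i$. First, as in \eqref{eq:ULA_W2_maj_denominateur}, I would write $\sqrt a-\sqrt b=(a-b)/(\sqrt a+\sqrt b)$ to get
\[
W_2^2(\P_N,\P_\infty)\le\sum_{i=1}^d\frac{(\li^\infty-\li^{(N)})^2}{\li^\infty}.
\]
Rather than working from the closed form \eqref{eq:ULD_def_l(N)}, I would go back to the series representation \eqref{eq:ULD_lambdaN_sum}. It gives
\[
\lambda^\infty-\lambda^{(N)}=2\eta\gamma\sum_{j\ge N}\beta_j^2,
\qquad
\beta_j=\frac{\eta}{r_+-r_-}\big(r_+^j-r_-^j\big),
\]
so the task reduces to a geometric tail bound on $\beta_j$ together with a lower bound $\lambda^\infty\ge c\lambda$.

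\emph{Large friction \eqref{eq:ULD_hyp_aper}.} The roots are real. The condition $\eta\le1/\gamma$ gives $r_-\ge1-\eta\gamma\ge0$, so $0\le r_-<r_+$ and hence $0\le\beta_j\le\eta r_+^j/(r_+-r_-)$. Rationalizing,
\[
\gamma-\sqrt{\gamma^2-4\lambda^{-1}}=\frac{4\lambda^{-1}}{\gamma+\sqrt{\gamma^2-4\lambda^{-1}}}\ge\frac{2}{\lambda\gamma},
\]
so $r_+\le1-\eta/(\lambda\gamma)\le e^{-\eta/(\lambda_{\max}\gamma)}$ and also $1-r_+^2\ge1-r_+\ge\eta/(\lambda\gamma)$. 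The assumption $\gamma^2\ge8/\lambda_{\min}$ gives $r_+-r_-=\eta\sqrt{\gamma^2-4\lambda^{-1}}\ge\eta\gamma/\sqrt2$. Summing the geometric tail then yields
\[
\lambda^\infty-\lambda^{(N)}\le\frac{2\eta^3\gamma\,r_+^{2N}}{(r_+-r_-)^2(1-r_+^2)}\le4\lambda\,r_+^{2N}.
\]

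\emph{Small friction \eqref{eq:ULD_hyp_osc}.} The roots are complex conjugate with $|r_\pm|^2=\det M=1-\eta\gamma+\eta^2\lambda^{-1}$. The condition $\eta\le\frac12\lambda_{\min}\gamma$ gives $|r_\pm|^2\le1-\eta\gamma/2\le e^{-\eta\gamma/2}$. I would bound $|\beta_j|\le2\eta|r|^j/|r_+-r_-|$, and use $\gamma^2\le2/\lambda_{\max}\le2/\lambda$ to get $|r_+-r_-|=\eta\sqrt{4\lambda^{-1}-\gamma^2}\ge\eta\sqrt{2/\lambda}$. This gives $\lambda^\infty-\lambda^{(N)}\le C\lambda\,|r|^{2N}$ for a numerical constant $C$. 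A cruder bound of this type gives roughly $C=8$, which is too lossy. To reach the constant $8$ in the statement, I expect to need a sharper estimate of $\sum_{j\ge N}|r_+^j-r_-^j|^2$, namely exploiting $|r_+^j-r_-^j|^2=2|r|^{2j}\big(1-\cos(2j\theta)\big)$ and summing exactly.

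\emph{Lower bound on $\lambda^\infty$ and conclusion.} In both regimes I need $\lambda^\infty\ge c\lambda$, ideally with $c=1$. From \eqref{eq:ULD_def_linfty}, this amounts to
\[
2\gamma(2-\eta\gamma+\eta^2\lambda^{-1})\ge(\gamma-\eta\lambda^{-1})(4-2\eta\gamma+\eta^2\lambda^{-1}).
\]
After expansion the difference should be a nonnegative combination of $\eta\lambda^{-1}$ terms, because $\eta\lambda^{-1}\le\gamma$ in both regimes. Plugging everything in gives $W_2^2(\P_N,\P_\infty)\le C'\,\Tr\Sigma\;\rho^{4N}$, where $\rho=e^{-\eta/(\lambda_{\max}\gamma)}$ in the large-friction regime and $\rho^2=e^{-\eta\gamma/2}$ in the small-friction regime. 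Taking square roots gives the two exponential rates of the statement. For large friction, with $c=1$ one gets $C'=16$, so the constant is $4\le8$.

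The main obstacle is the constant bookkeeping. This concerns first the lower bound on $\lambda^\infty$, and second the small-friction tail sum, where naive triangle-inequality bounds on $|r_+^j-r_-^j|$ overshoot $8$ and I would instead sum the exact expression above.
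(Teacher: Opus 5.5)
Your proposal is correct and is essentially the paper's proof. It uses the same reduction to $\sum_i(\li^\infty-\li^{(N)})^2/\li$ via $\lambda^\infty>\lambda$ (the difference you expand is $\eta\lambda^{-1}(4-\eta\gamma+\eta^2\lambda^{-1})>0$), and the same contraction estimates $r_+\le1-\eta/(\gamma\lambda_{\max})$ and $|r_\pm|^2\le1-\eta\gamma/2$. The one change is that you bound the tail $\sum_{j\ge N}\beta_j^2$ term by term rather than the summed closed form, which in the large-friction case even improves the paper's constant $8$ to $4$.

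The small-friction obstacle you flag is not real. Your crude estimate $\lambda^\infty-\lambda^{(N)}\le8\lambda\,|r_\pm|^{2N}$ already gives exactly $W_2(\P_N,\P_\infty)\le8\sqrt{\Tr\Sigma}\,e^{-\eta\gamma N/2}$, using the same bookkeeping as in your large-friction case ($C'=C^2=64$). It is also precisely the paper's bound: the paper sums the cosine expression exactly, but then bounds $\bigl|\mathrm{Re}\bigl(r_+^{2N}/(1-r_+^2)\bigr)\bigr|$ by its modulus, which reproduces your term-by-term estimate.
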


\begin{proof}
    Once again, \eqref{eq:W2s_ULA_generic} gives
    \begin{equation}
    \label{eq:ULD_W2_maj_denominateur}
    W_2^2(\P_N,\P_\infty) = \sum_{i=1}^d \left(\frac{\li^\infty-\li^{(N)}}{\sqrt{\li^\infty}+\sqrt{\li^{(N)}}}\right)^2 \leq \sum_{i=1}^d \frac{\big(\li^\infty-\li^{(N)}\big)^2}{\li^\infty}.
    \end{equation}
    We first check that
    \begin{equation}
    \label{eq:ULD_comp_vpr}
    \lambda^\infty>\lambda.
    \end{equation}
    Starting from \eqref{eq:ULD_def_linfty},
    \begin{align}
    \label{eq:ULD_egalite_diff_vpr_discret}
    \lambda^\infty-\lambda &= \frac{\gamma(4-2\eta\gamma+2\eta^2\lambda^{-1})-(\gamma-\eta\lambda^{-1})(4-2\eta\gamma+\eta^2\lambda^{-1})}{(\gamma-\eta\lambda^{-1})(4-2\eta\gamma+\eta^2\lambda^{-1})}\lambda \notag \\
    &=\frac{\gamma\eta^2\lambda^{-1}+\eta\lambda^{-1}(4-2\eta\gamma+\eta^2\lambda^{-1})}{(\gamma-\eta\lambda^{-1})(4-2\eta\gamma+\eta^2\lambda^{-1})}\lambda \notag \\
    &= \frac{(4-\eta\gamma+\eta^2\lambda^{-1})\,\eta}{(\gamma-\eta\lambda^{-1})(4-2\eta\gamma+\eta^2\lambda^{-1})}.
    \end{align}
    Recognizing $P(-1) = 4-2\eta\gamma+\eta^2\lambda^{-1}$, which we showed to be positive, this reads
    \begin{equation}
    \label{eq:diff_vpr_biais}
    \lambda^\infty-\lambda = \frac{\eta}{\gamma-\eta\lambda^{-1}}\times\frac{P(-1)+\eta\gamma}{P(-1)}.
    \end{equation}
    The difference therefore has the sign of $\gamma-\eta\lambda^{-1}$, which is positive by the first Jury condition in \eqref{eq:ULD_Jury_proven}. This proves \eqref{eq:ULD_comp_vpr}, and \eqref{eq:ULD_W2_maj_denominateur} becomes
    \begin{equation}
    \label{eq:ULD_W2_discr_maj}
    W_2^2(\P_N,\P_\infty)\le\sum_{i=1}^d \frac{\big(\li^\infty-\li^{(N)}\big)^2}{\li}.
    \end{equation}
    We are thus led to bound $\abs{\li^\infty-\li^{(N)}}$, for which Lemma~\ref{lem:ULD_forme_close} provides
    \begin{equation}
    \label{eq:ULD_majo_abs_biais}
    \abs{\lambda^\infty-\lambda^{(N)}} = \frac{2\eta\gamma}{\abs{\gamma^2-4\lambda^{-1}}}\left|\frac{r_+^{2N}}{1-r_+^2}-\frac{2(r_+r_-)^{N}}{1-r_+r_-}+\frac{r_-^{2N}}{1-r_-^2}\right|.
    \end{equation}

\emph{Large-friction regime \eqref{eq:ULD_hyp_aper}.} By \eqref{eq:ULD_lambdaN_sum}, $\lambda^\infty$ is the limit of $2\eta\gamma\sum_{j=0}^{N-1}\beta_j^2$ as $N\to\infty$, so the difference $\lambda^\infty-\lambda^{(N)}$ is the tail
\[
\lambda^\infty-\lambda^{(N)} = 2\eta\gamma\sum_{j=N}^{\infty} \beta_j^2 \ge0.
\]
Since moreover $\gamma^2-4\lambda^{-1}>0$ under \eqref{eq:ULD_hyp_aper}, \eqref{eq:ULD_majo_abs_biais} holds without absolute values:
\begin{equation}
\label{eq:ULD_majo_abs_discr}
\abs{\lambda^{\infty}-\lambda^{(N)}} = \frac{2\eta\gamma}{\gamma^2-4\lambda^{-1}}\left(\frac{r_+^{2N}}{1-r_+^2}-\frac{2(r_+r_-)^{N}}{1-r_+r_-}+\frac{r_-^{2N}}{1-r_-^2}\right).
\end{equation}
Now $1-r_+r_->0$ by \eqref{eq:ULD_module_rpm}, and \eqref{eq:ULD_hyp_aper} makes the product of the roots positive:
\begin{equation}
\label{eq:ULD_prod_racine_pos}
    r_+r_- = 1 - \eta\gamma + \eta^2\lambda^{-1}
    > 1-\eta\gamma
    \geq 0.
\end{equation}
The middle term in \eqref{eq:ULD_majo_abs_discr} is therefore negative, and
\begin{equation}
\label{eq:ULD_majo_abs_discr_inter}
\abs{\lambda^{\infty}-\lambda^{(N)}} < \frac{2\eta\gamma}{\gamma^2-4\lambda^{-1}}\left(\frac{r_+^{2N}}{1-r_+^2}+\frac{r_-^{2N}}{1-r_-^2}\right),
\end{equation}
where every numerator and denominator is positive: \eqref{eq:ULD_hyp_aper} for $\gamma^2-4\lambda^{-1}$, and \eqref{eq:ULD_module_rpm} for $1-r_\pm^2$. We next compare $\abs{r_+}$ and $\abs{r_-}$. By \eqref{eq:ULD_prod_racine_pos} the two roots have the same sign, namely that of their sum $2-\eta\gamma>0$ by \eqref{eq:ULD_sum_rpm} and \eqref{eq:ULD_hyp_aper}, so both are positive, and their expression \eqref{eq:ULD_racines_reelles} gives
\begin{equation}
\label{eq:ULD_comp_va}
r_+=\abs{r_+}>\abs{r_-}=r_-.
\end{equation}
Hence \eqref{eq:ULD_majo_abs_discr_inter} yields
\[
\abs{\lambda^{\infty}-\lambda^{(N)}} < \frac{2\eta\gamma}{\gamma^2-4\lambda^{-1}}\left(\frac{1}{1-r_+^2}+\frac{1}{1-r_-^2}\right)r_+^{2N}.
\]
One could bound the two terms in the parenthesis separately and get a slightly sharper estimate, but this would only improve a numerical constant, so we simply use $1-r_-^2>1-r_+^2$, a consequence of \eqref{eq:ULD_comp_va}, to obtain
\[
\abs{\lambda^{\infty}-\lambda^{(N)}} < \frac{4\eta\gamma}{\gamma^2-4\lambda^{-1}}\times\frac{r_+^{2N}}{1-r_+^2}.
\]
By \eqref{eq:ULD_hyp_aper}, $\gamma^2-4\lambda^{-1}\ge\gamma^2-4\lambda_{\min}^{-1}\ge\frac12\gamma^2$, hence
\begin{equation}
\label{eq:ULD_maj_va_es}
\abs{\lambda^\infty-\lambda^{(N)}}<\frac{8\eta}\gamma\times\frac{r_+^{2N}}{1-r_+^2}.
\end{equation}
Again without optimizing constants, since $r_+>0$,
\begin{equation}
\label{eq:ULD_majo_non_opti}
\frac1{1-r_+^2} = \frac1{(1+r_+)(1-r_+)} \le \frac1{1-r_+},
\end{equation}
and by \eqref{eq:ULD_racines_reelles},
\[
\frac1{1-r_+} = \frac2{\eta\big(\gamma-\sqrt{\gamma^2-4\lambda^{-1}}\big)} = \frac{2\big(\gamma+\sqrt{\gamma^2-4\lambda^{-1}}\big)}{\eta\big(\gamma^2-(\gamma^2-4\lambda^{-1})\big)}<\frac{\gamma\lambda}{\eta}.
\]
Combining with \eqref{eq:ULD_maj_va_es} and \eqref{eq:ULD_majo_non_opti},
\begin{equation}
\label{eq:ULD_majo_inter_diff_vpr_discr}
\abs{\lambda^\infty-\lambda^{(N)}}<8\lambda\, r_+^{2N}.
\end{equation}
To make this bound usable we finally bound $r_+$. The concavity inequality $\sqrt{1-u} \leq 1 - \frac{1}{2}u$, valid for $u\in[0,1]$, gives
\begin{align*}
r_+ &= 1 - \frac{1}{2}\eta\gamma\left(1 - \sqrt{1-4\lambda^{-1}\gamma^{-2}}\right)
\leq 1 - \frac{1}{2}\eta\gamma\times\frac{2}{\lambda\gamma^2}
= 1-\frac{\eta}{\gamma\lambda}
\leq 1-\frac{\eta}{\gamma\lambda_{\max}}.
\end{align*}
Since $1-\eta/(\gamma\lambda_{\max})$ is positive under \eqref{eq:ULD_hyp_aper}, raising to the power $2N$ and using, as in Section~\ref{app:ULA_proof}, the convexity inequality
\begin{equation}
\label{eq:ULD_convexite}
\forall u \in \R,\quad 1-u\le e^{-u},
\end{equation}
we deduce from \eqref{eq:ULD_majo_inter_diff_vpr_discr}
\begin{equation}
\label{eq:ULD_maj_finale_discr_aper}
\abs{\lambda^\infty-\lambda^{(N)}} < 8\lambda\left(1-\frac\eta{\gamma\lambda_{\max}}\right)^{2N}\le 8\lambda\, e^{-2\eta N/(\gamma\lambda_{\max})}.
\end{equation}

\emph{Small-friction regime \eqref{eq:ULD_hyp_osc}.} We return to \eqref{eq:ULD_majo_abs_biais}. The roots $r_\pm$ are now complex conjugates, so
\[
\abs{r_+}^2 = \abs{r_-}^2 = r_+r_- = 1-\eta\gamma+\eta^2\lambda^{-1} =: R^2,
\]
the common value being the constant coefficient of $P$. Moreover $\frac{r_+^{2N}}{1-r_+^2}$ and $\frac{r_-^{2N}}{1-r_-^2}$ are conjugate as well, hence
\begin{align*}
\left|\frac{r_+^{2N}}{1-r_+^2}-\frac{2(r_+r_-)^{N}}{1-r_+r_-}+\frac{r_-^{2N}}{1-r_-^2}\right| &= 2\left|\text{Re}\left(\frac{r_+^{2N}}{1-r_+^2}\right)-\frac{R^{2N}}{1-R^2}\right| \\
&\leq 2\left|\frac{r_+^{2N}}{1-r_+^2}\right|+\frac{2R^{2N}}{1-R^2}
= 2R^{2N}\left(\frac{1}{\abs{1-r_+^2}}+\frac{1}{1-R^2}\right) \\
&\le \frac{4R^{2N}}{1-R^2},
\end{align*}
where the last step uses $\abs{1-r_+^2}\geq 1-\abs{r_+}^2 = 1-R^2$. Now $R^2 = 1-\eta(\gamma-\eta\lambda^{-1})$ and, by \eqref{eq:ULD_hyp_osc},
\begin{equation}
\label{eq:ULD_majo_denominateur}
\gamma-\eta\lambda^{-1}\ge\gamma-\frac12\lambda_{\min}\gamma\,\lambda_{\min}^{-1} = \frac12\gamma,
\end{equation}
so $R^2 \leq 1-\frac{1}{2}\eta\gamma$ and
\[
\left|\frac{r_+^{2N}}{1-r_+^2}-\frac{2(r_+r_-)^{N}}{1-r_+r_-}+\frac{r_-^{2N}}{1-r_-^2}\right| \leq \frac{8\left(1-\frac{1}{2}\eta\gamma\right)^N}{\eta\gamma}.
\]
Plugging this into \eqref{eq:ULD_majo_abs_biais} and using $\abs{\gamma^2-4\lambda^{-1}} = 4\lambda^{-1}-\gamma^2\ge4\lambda^{-1}-2\lambda_{\max}^{-1} \ge 2\lambda^{-1}$,
\[
\abs{\lambda^\infty-\lambda^{(N)}} \leq \frac{16}{\abs{\gamma^2-4\lambda^{-1}}}\left(1-\frac{1}{2}\eta\gamma\right)^N \le 8\lambda\left(1-\frac12\eta\gamma\right)^N.
\]
Since $1-\frac12\eta\gamma\ge0$ under \eqref{eq:ULD_hyp_osc}, the convexity inequality \eqref{eq:ULD_convexite} finally gives
\begin{equation}
\label{eq:ULD_maj_finale_discr_osc}
    \abs{\lambda^\infty-\lambda^{(N)}} \le 8\lambda\, e^{-\eta\gamma N/2}.
\end{equation}

\emph{Conclusion.} Plugging \eqref{eq:ULD_maj_finale_discr_aper}, respectively \eqref{eq:ULD_maj_finale_discr_osc}, into \eqref{eq:ULD_W2_discr_maj} gives
\begin{align*}
W_2(\P_N,\P_\infty) &\le 8\sqrt{\Tr\Sigma}\,e^{-2\eta N/(\lambda_{\max}\gamma)} &&\text{under \eqref{eq:ULD_hyp_aper}}, \\
W_2(\P_N,\P_\infty) &\le 8\sqrt{\Tr\Sigma}\,e^{-\eta\gamma N/2} &&\text{under \eqref{eq:ULD_hyp_osc}}.
\end{align*}
\end{proof}

\begin{remark}
\label{rem:ULD_stricter}
    The large- and small-friction assumptions \eqref{eq:ULD_hyp_aper} and \eqref{eq:ULD_hyp_osc} are stricter than conditions \eqref{eq:ULD_regime_aper} and \eqref{eq:ULD_regime_osc} of Lemma~\ref{lem:ULD_forme_close}, but only through numerical factors: their nature is the same. This stricter framework, which defines the admissible hyperparameters for ULD, keeps the denominator $\gamma^2-4\lambda^{-1}$ bounded away from zero, and affects the early-stopping and discretization bounds by a constant factor only.
\end{remark}

\subsubsection{Discretization error}
We now bound the discretization error.
\begin{lemma}
\label{lem:ULD_discretization_error}
    Under \eqref{eq:ULD_hyp_aper} or \eqref{eq:ULD_hyp_osc}, the discretization error satisfies
    \[
    W_2(\P_\infty,\P)\le C\sqrt{\Tr(\Sigma^{-1})}\,\frac\eta\gamma,
    \]
    with $C = \frac67$ in the large-friction regime and $C = \frac32$ in the small-friction regime.
\end{lemma}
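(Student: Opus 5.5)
The plan is to proceed exactly as in Lemma~\ref{lem:ULA_discretization_error}, using the closed-form bias computed in the proof of Lemma~\ref{lem:ULD_early_stopping_error}. By \eqref{eq:W2s_ULA_generic} and the inequality $\lambda^\infty>\lambda$ established in \eqref{eq:ULD_comp_vpr} (valid in both regimes, since it only used the Jury conditions \eqref{eq:ULD_Jury_proven}), I would write
\[
W_2^2(\P_\infty,\P)=\sum_{i=1}^d\left(\frac{\li^\infty-\li}{\sqrt{\li^\infty}+\sqrt{\li}}\right)^2\le\frac14\sum_{i=1}^d\frac{(\li^\infty-\li)^2}{\li}.
\]
The lemma then reduces to the per-direction estimate $0<\lambda^\infty-\lambda\le 2C\,\eta/\gamma$, since summing gives $W_2(\P_\infty,\P)\le C\sqrt{\Tr(\Sigma^{-1})}\,\eta/\gamma$.

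For the per-direction estimate I would start from the factorized identity \eqref{eq:diff_vpr_biais},
\[
\lambda^\infty-\lambda=\frac{\eta}{\gamma-\eta\lambda^{-1}}\left(1+\frac{\eta\gamma}{P(-1)}\right),\qquad P(-1)=4-2\eta\gamma+\eta^2\lambda^{-1}.
\]
I would then bound separately the denominator $\gamma-\eta\lambda^{-1}$ from below by a multiple of $\gamma$, and the factor $1+\eta\gamma/P(-1)$ from above by $3/2$.

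\emph{Large-friction regime \eqref{eq:ULD_hyp_aper}.} From $\eta\le1/\gamma$ and $\lambda^{-1}\le\lambda_{\min}^{-1}\le\gamma^2/8$, I get $\eta\lambda^{-1}\le\gamma/8$, hence $\gamma-\eta\lambda^{-1}\ge\frac78\gamma$. Also $\eta\gamma\le1$ gives $P(-1)\ge4-2=2$, so $1+\eta\gamma/P(-1)\le\frac32$. Therefore $\lambda^\infty-\lambda\le\frac87\cdot\frac32\,\eta/\gamma=\frac{12}7\,\eta/\gamma$, and the factor $\frac12$ coming from the square root yields $C=\frac67$.

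\emph{Small-friction regime \eqref{eq:ULD_hyp_osc}.} Inequality \eqref{eq:ULD_majo_denominateur} already gives $\gamma-\eta\lambda^{-1}\ge\frac12\gamma$. For the second factor, the bound $P(-1)\ge4-\gamma^2\lambda_{\max}$ from the stability part of Lemma~\ref{lem:ULD_forme_close} gives $P(-1)\ge2$ using $\gamma^2\le2/\lambda_{\max}$. Moreover $\eta\gamma\le\frac12\lambda_{\min}\gamma^2\le\lambda_{\min}/\lambda_{\max}\le1$, so again $1+\eta\gamma/P(-1)\le\frac32$. This gives $\lambda^\infty-\lambda\le3\,\eta/\gamma$, hence $C=\frac32$.

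There is no real obstacle here. The only point needing care is to check that each lower bound on $P(-1)$ and on $\gamma-\eta\lambda^{-1}$ follows from the stricter admissibility assumptions \eqref{eq:ULD_hyp_aper} and \eqref{eq:ULD_hyp_osc}, rather than only from \eqref{eq:ULD_regime_aper} and \eqref{eq:ULD_regime_osc}, so that the numerical constants come out as stated.
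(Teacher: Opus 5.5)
Your proposal is correct and follows the paper's proof essentially step for step. It uses the same reduction through $\lambda^\infty>\lambda$ and the same factorized identity \eqref{eq:diff_vpr_biais}, together with the same bounds $\gamma-\eta\lambda^{-1}\ge\frac78\gamma$ (resp.\ $\frac12\gamma$) and $P(-1)\ge2$. The only cosmetic difference is in the small-friction regime: you obtain $P(-1)\ge2$ from $4-\gamma^2\lambda_{\max}$, whereas the paper derives it in both regimes from $\eta\gamma\le1$, which you also establish anyway.
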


\begin{proof}
    By \eqref{eq:W2s_ULA_generic} and the inequality \eqref{eq:ULD_comp_vpr} established in the previous section,
    \begin{equation}
    \label{eq:ULD_W2_discr_maj_generique}
    W_2^2(\P_\infty,\P) = \sum_{i=1}^d \left(\frac{\li^\infty-\li}{\sqrt{\li^\infty}+\sqrt{\li}}\right)^2 \le \frac14\sum_{i=1}^d \frac{(\li^\infty-\li)^2}{\li}.
    \end{equation}
    The lemma will again follow from a bound on $\abs{\lambda^\infty-\lambda} = \lambda^\infty-\lambda$, for which we return to \eqref{eq:diff_vpr_biais}:
    \begin{equation}
    \label{eq:ULD_diff_vpr_discr}
    \lambda^\infty-\lambda = \frac{\eta}{\gamma-\eta\lambda^{-1}}\times\frac{P(-1)+\eta\gamma}{P(-1)} = \frac{\eta}{\gamma-\eta\lambda^{-1}}\left(1+\frac{\eta\gamma}{P(-1)}\right),
    \end{equation}
    where $P(-1)$ is the quantity introduced with the Jury criterion. The assumptions of Proposition~\ref{prop:ULD_uniform_bound}, stronger than those of Lemma~\ref{lem:ULD_forme_close} where this criterion appears, keep $P(-1)$ bounded away from zero. Indeed, in both regimes $\eta\gamma\le1$: this is immediate under \eqref{eq:ULD_hyp_aper}, and under \eqref{eq:ULD_hyp_osc},
    \[
    \eta\gamma\le\frac12\lambda_{\min}\gamma^2\le\frac1\kappa\le1.
    \]
    Consequently $P(-1) = 4 - 2\eta\gamma+\eta^2\lambda^{-1} > 4-2\eta\gamma \ge 2$ in both regimes, and \eqref{eq:ULD_diff_vpr_discr} gives
    \[
    \lambda^\infty-\lambda < \frac32\times\frac\eta{\gamma-\eta\lambda^{-1}}.
    \]
    It remains to bound $\gamma-\eta\lambda^{-1}$ from below. Under \eqref{eq:ULD_hyp_aper},
    \[
    \gamma-\eta\lambda^{-1} = \gamma\left(1-\frac\eta{\gamma\lambda}\right) \ge \gamma\left(1-\frac1{\lambda\gamma^2}\right) \ge \gamma\left(1-\frac{\lambda_{\min}}{8\lambda}\right)\ge\frac78\gamma,
    \]
    hence
    \begin{equation}
    \label{eq:ULD_borne_aper_discr_vpr}
    \lambda^\infty-\lambda<\frac{12\eta}{7\gamma}.
    \end{equation}
    Under \eqref{eq:ULD_hyp_osc}, \eqref{eq:ULD_majo_denominateur} gives $\gamma-\eta\lambda^{-1}\ge\frac12\gamma$, hence
    \begin{equation}
    \label{eq:ULD_borne_osc_discr_vpr}
    \lambda^\infty-\lambda<\frac{3\eta}\gamma.
    \end{equation}
    Plugging \eqref{eq:ULD_borne_aper_discr_vpr} and \eqref{eq:ULD_borne_osc_discr_vpr} into \eqref{eq:ULD_W2_discr_maj_generique} completes the proof.
\end{proof}

\begin{proof}[Proof of Proposition~\ref{prop:ULD_uniform_bound}]
Combine Lemmas~\ref{lem:ULD_early_stopping_error} and~\ref{lem:ULD_discretization_error} through the triangle inequality \eqref{eq:decomp_W2}.
\end{proof}

\subsubsection{Choice of hyperparameters}

We can now prove the first part of Theorem~\ref{thm:Langevin_rates} for ULD.
\begin{proof}[Proof of Theorem~\ref{thm:Langevin_rates}, non-asymptotic bound for ULD]
Let $N\ge2$ satisfy $N\ge4\kappa\log N$, and recall the ULD hyperparameters of Theorem~\ref{thm:Langevin_rates}: $\eta_N = \sqrt{2\lambda_{\max}\kappa}\,\frac{\log N}{N}$ and $\gamma = \sqrt{8/\lambda_{\min}}$. They satisfy the large-friction assumption \eqref{eq:ULD_hyp_aper}: the only point to check is $\eta_N\gamma\le1$, and indeed
\[
\eta_N\gamma = 4\kappa\frac{\log N}{N}\le1
\]
by assumption. The first bound of Proposition~\ref{prop:ULD_uniform_bound} therefore applies. Since $\frac{\eta_N}{\gamma} = \frac12\lambda_{\max}\frac{\log N}N$, the exponential equals $e^{-\log N} = 1/N$, and
\[
W_2(\P_N,\P)\le\frac{8\sqrt{\Tr\Sigma}}{N}+\frac37\lambda_{\max}\sqrt{\Tr(\Sigma^{-1})}\frac{\log N}N.
\]
Using \eqref{eq:UL_CS_traces} once more gives the announced bound,
\[
W_2(\P_N,\P)\le\frac{\lambda_{\max}\sqrt{\Tr(\Sigma^{-1})}}N\Big(\frac37\log N+8\Big).
\]
\end{proof}

The same argument in the small-friction regime \eqref{eq:ULD_hyp_osc} gives a bound of the same order.
\begin{corollary}
\label{cor:ULD_underdamped}
    Let $N\ge2\kappa\log N$. With $\eta = \sqrt{2\lambda_{\max}}\,\frac{\log N}N$ and $\gamma = \sqrt{2/\lambda_{\max}}$,
    \[
    W_2(\P_N,\P)\le \frac{\lambda_{\max}\sqrt{\Tr(\Sigma^{-1})}}N\Big(\frac32\log N + 8\Big).
    \]
\end{corollary}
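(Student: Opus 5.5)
This follows the proof of the non-asymptotic ULD bound, but uses the small-friction branch of Proposition~\ref{prop:ULD_uniform_bound} instead of the large-friction one. The plan has three parts: check that the proposed pair $(\eta,\gamma)$ satisfies \eqref{eq:ULD_hyp_osc}, substitute into the second bound of the proposition, and convert the $\sqrt{\Tr\Sigma}$ prefactor via \eqref{eq:UL_CS_traces}.

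\emph{Admissibility.} With $\gamma=\sqrt{2/\lambda_{\max}}$, the first condition $\gamma\le\sqrt{2/\lambda_{\max}}$ of \eqref{eq:ULD_hyp_osc} holds with equality. For the second condition $\eta\le\frac12\lambda_{\min}\gamma$, note that
\[
\tfrac12\lambda_{\min}\gamma=\lambda_{\min}/\sqrt{2\lambda_{\max}}.
\]
The condition therefore reads
\[
\sqrt{2\lambda_{\max}}\,\frac{\log N}{N}\le\frac{\lambda_{\min}}{\sqrt{2\lambda_{\max}}},
\qquad\text{that is,}\qquad
2\lambda_{\max}\frac{\log N}{N}\le\lambda_{\min}.
\]
This is exactly the hypothesis $N\ge2\kappa\log N$. (The corollary implicitly needs $N\ge2$ so that $\log N>0$ and $\eta>0$; this follows from the hypothesis, since $N=1$ gives $\log N=0$ and would make $\eta=0$, which we exclude.) This verification is the only delicate point, and here it reduces to the one-line equivalence above.

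\emph{Substitution.} The product and ratio of the hyperparameters are
\[
\eta\gamma=\sqrt{2\lambda_{\max}}\sqrt{2/\lambda_{\max}}\,\frac{\log N}{N}=2\frac{\log N}{N},
\qquad
\frac{\eta}{\gamma}=\lambda_{\max}\frac{\log N}{N}.
\]
Hence the early-stopping exponent is $e^{-\eta\gamma N/2}=e^{-\log N}=1/N$. The second bound of Proposition~\ref{prop:ULD_uniform_bound} then gives
\[
W_2(\P_N,\P)\le\frac{8\sqrt{\Tr\Sigma}}{N}+\frac32\lambda_{\max}\sqrt{\Tr(\Sigma^{-1})}\,\frac{\log N}{N}.
\]

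\emph{Conclusion.} Applying \eqref{eq:UL_CS_traces}, namely $\sqrt{\Tr\Sigma}\le\lambda_{\max}\sqrt{\Tr(\Sigma^{-1})}$, to the first term factors out $\lambda_{\max}\sqrt{\Tr(\Sigma^{-1})}/N$. This yields the announced constants $\frac32\log N+8$.
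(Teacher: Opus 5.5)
Your proposal is correct and follows the paper's proof step for step: you verify \eqref{eq:ULD_hyp_osc} through the equivalence $\eta\le\frac12\lambda_{\min}\gamma\iff N\ge2\kappa\log N$, use $\eta\gamma=2\log N/N$ and $\eta/\gamma=\lambda_{\max}\log N/N$ in the small-friction bound of Proposition~\ref{prop:ULD_uniform_bound}, and conclude with \eqref{eq:UL_CS_traces}. One small correction to your parenthetical: the hypothesis does not exclude $N=1$, since it then reads $1\ge0$. That case is nevertheless trivial and should be handled directly: $\eta=0$ keeps the chain at $0$, so $W_2(\P_1,\P)=\sqrt{\Tr\Sigma}\le8\lambda_{\max}\sqrt{\Tr(\Sigma^{-1})}$ by \eqref{eq:UL_CS_traces}.
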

\begin{proof}
With these hyperparameters, $\eta\gamma = 2\frac{\log N}{N}$ and $\frac\eta\gamma = \lambda_{\max}\frac{\log N}{N}$, and the condition $N\ge2\kappa\log N$ is exactly $\eta\le\frac12\lambda_{\min}\gamma$, so \eqref{eq:ULD_hyp_osc} holds. The second bound of Proposition~\ref{prop:ULD_uniform_bound} gives 
\[
W_2(\P_N,\P)\le 8\sqrt{\Tr\Sigma}/N+\frac32\lambda_{\max}\sqrt{\Tr(\Sigma^{-1})}\frac{\log N}N,
\]
and \eqref{eq:UL_CS_traces} concludes.
\end{proof}

\begin{remark}
    To determine the optimal parameters $\eta_*,\gamma_*$, one applies Lemma~\ref{lem:optim_convexe} to the variable $\phi :=\eta/\gamma$. As for ULA, the exact minimizers are slightly different, with spectral coefficients inside the logarithms, but lead to the same bound up to numerical constants. The hyperparameters we retain are simpler: they require no knowledge of the spectrum of $\Sigma$ beyond its extreme eigenvalues. The same principles guided the hyperparameters of Corollary~\ref{cor:ULD_underdamped}, where Lemma~\ref{lem:optim_convexe} applies thanks to Remark~\ref{rem:aperiodique_naturel}, which removes the variable $\gamma$. In what follows we nevertheless focus on the regime of Theorem~\ref{thm:Langevin_rates}. As for ULA, the asymptotic equivalent of the next section shows that, for the chosen hyperparameters, the actual error is also of order $\lambda_{\max}\sqrt{\Tr(\Sigma^{-1})}\log N/N$.
\end{remark}

\subsubsection{Asymptotic equivalent}
To complete the proof of Theorem~\ref{thm:Langevin_rates}, it remains to establish the asymptotic equivalent
\[
W_2(\P_N,\P)\underset{N\to\infty}{\sim} \frac14\lambda_{\max}\sqrt{\Tr(\Sigma^{-1})}\frac{\log N}N
\]
for $\eta = \eta_N = \sqrt{2\lambda_{\max}\kappa}\,\frac{\log N}{N}$ and $\gamma = \sqrt{8/\lambda_{\min}}$.

\begin{proof}[Proof of Theorem~\ref{thm:Langevin_rates}, asymptotic equivalent for ULD]
Take $N$ large enough for the hyperparameters to satisfy \eqref{eq:ULD_hyp_aper}, so that Lemma~\ref{lem:ULD_forme_close} and the bound \eqref{eq:ULD_maj_finale_discr_aper} apply. Recall from \eqref{eq:ULD_def_linfty} that
\[
\lambda^\infty = \frac{2\gamma(2-\eta_N\gamma+\eta_N^2\lambda^{-1})}{(\gamma-\eta_N\lambda^{-1})(4-2\eta_N\gamma+\eta_N^2\lambda^{-1})}\lambda.
\]
We first show that $\lambda^{(N)}\to\lambda$. On the one hand, $\eta_N\to0$, so
\begin{equation}
\label{eq:ULD_limite_linfty}
\lambda^\infty \longrightarrow \frac{4\gamma}{4\gamma}\lambda = \lambda.
\end{equation}
On the other hand, \eqref{eq:ULD_maj_finale_discr_aper} with the values of the hyperparameters gives
\[
\abs{\lambda^\infty-\lambda^{(N)}} \le 8\lambda\, e^{-2\eta_N N/(\gamma\lambda_{\max})} = \frac{8\lambda}N,
\]
and with \eqref{eq:ULD_limite_linfty}, $\lambda^{(N)}\to\lambda$ indeed. Applying Lemma~\ref{lem:W2_gaussian} to $\P_N$ and $\P$ then gives
\begin{equation}
\label{eq:ULD_equiv_W2_vpr}
W_2^2(\P_N,\P) = \sum_{i=1}^d \Big(\sqrt{\li^{(N)}}-\sqrt{\li}\Big)^2
= \sum_{i=1}^d \Bigg(\frac{\li^{(N)}-\li}{\sqrt{\li^{(N)}}+\sqrt{\li}}\Bigg)^2
\sim \frac14\sum_{i=1}^d \frac{(\li^{(N)}-\li)^2}{\li}.
\end{equation}
It remains to estimate $\lambda^{(N)}-\lambda$. By the bound above,
\begin{equation}
\label{eq:ULD_diff_vpr_equiv_asym}
\lambda^{(N)}-\lambda = (\lambda^{(N)}-\lambda^\infty)+(\lambda^\infty-\lambda) = \lambda^\infty-\lambda+O\left(\frac1N\right),
\end{equation}
and by \eqref{eq:ULD_egalite_diff_vpr_discret},
\[
\lambda^\infty-\lambda = \frac{(4-\eta_N\gamma+\eta_N^2\lambda^{-1})\,\eta_N}{(\gamma-\eta_N\lambda^{-1})(4-2\eta_N\gamma+\eta_N^2\lambda^{-1})}\underset{N\to\infty}{\sim}\frac{4\eta_N}{4\gamma} = \frac{\eta_N}{\gamma} = \frac12\lambda_{\max}\frac{\log N}N.
\]
Together with \eqref{eq:ULD_diff_vpr_equiv_asym},
\[
\lambda^{(N)}-\lambda = \frac12\lambda_{\max}\frac{\log N}N+O\left(\frac1N\right),
\]
an equivalent which, as for ULA, does not depend on $\lambda$. The equivalent \eqref{eq:ULD_equiv_W2_vpr} then yields
\[
W_2(\P_N,\P)\underset{N\to\infty}{\sim}\frac14\lambda_{\max}\sqrt{\Tr(\Sigma^{-1})}\frac{\log N}N.
\]
\end{proof}

\section{Proof of Theorem~\ref{thm:diffusion_spectral_bound} and Proposition~\ref{prop:diffusion_coarse_bound} (diffusion sampler)}
\label{app:thm2_proof}
\subsection{Overview}

We now turn to the proof of the following theorem, stated in Section~\ref{results_diffusion}.

\DiffusionRates*

Proposition~\ref{prop:diffusion_coarse_bound}, which differs from Theorem~\ref{thm:diffusion_spectral_bound} only through the choice of hyperparameters, is restated and proved in Section~\ref{app:diffusion_hyperparameters}.

\paragraph{Error decomposition.}
The error of the diffusion algorithm of Section~\ref{problem_setup} combines three sources. The first is the initialization error: the process is started from $Y_0\sim\Nml(0,T\I)$ as a proxy for the true marginal $\bb{Q}_T = \Nml(0,\Sigma+T\I)$, which is unknown \textit{a priori}. It is measured by the Wasserstein distance $W_2(\P_N,\tilde{\P}_N)$ between the law $\P_N$ actually sampled and the law $\tilde{\P}_N$ that would have been sampled from the exact initialization $Y_0\sim\Nml(0,\Sigma+T\I)$. The second is the discretization error incurred when simulating the reverse SDE~\eqref{eq:reverse_SDE}: it is $W_2(\tilde{\P}_N,\bb{Q}_\delta)$, the distance, for an exact initialization, between the law produced by the discrete scheme and the law $\bb{Q}_\delta = \Nml(0,\Sigma+\delta\I)$ reached by the continuous process. The third is the early-stopping error $W_2(\bb{Q}_{\delta},\P)$, due to stopping the continuous process at the threshold $\delta>0$. The triangle inequality formalizes this decomposition:
\begin{equation}
\label{eq:diffusion_decomp_W2}
W_2(\P_N,\P)\le W_2(\P_N,\tilde{\P}_N) + W_2(\tilde{\P}_N,\bb{Q}_\delta) + W_2(\bb{Q}_\delta,\P).
\end{equation}

\paragraph{Reduction to eigenvalues.}
All the laws in \eqref{eq:diffusion_decomp_W2} are centered Gaussians with covariance in $\mathcal{A}(\Sigma)$: this is immediate for $\P$ and $\bb{Q}_\delta$, and follows from Lemma~\ref{lem:gaussian_recursion} for $\P_N$ and $\tilde{\P}_N$. Lemma~\ref{lem:W2_gaussian} therefore applies and, denoting by $\li^{(N)}$ and $\tilde\lambda_i^{(N)}$ the eigenvalues of the covariances of $\P_N$ and $\tilde\P_N$, we record for later use
\begin{align}
W_2^2(\P_N,\tilde{\P}_N) &= \sum_{i=1}^d\Big(\sqrt{\lambda_i^{(N)}} - \sqrt{\tilde{\lambda}_i^{(N)}}\Big)^2, \label{eq:diffusion_W2s_generic_1}\\
W_2^2(\tilde{\P}_N,\bb{Q}_\delta) &= \sum_{i=1}^d\Big(\sqrt{\tilde{\lambda}_i^{(N)}} - \sqrt{\li+\delta}\Big)^2, \label{eq:diffusion_W2s_generic_2} \\
W_2^2(\bb{Q}_\delta,\P) &= \sum_{i=1}^d\Big(\sqrt{\li+\delta} - \sqrt{\li}\Big)^2. \label{eq:diffusion_W2s_generic_3}
\end{align}

\paragraph{Outline.}
The proof of Theorem~\ref{thm:diffusion_spectral_bound} follows the same steps as that of Theorem~\ref{thm:Langevin_rates}.
\begin{itemize}
    \item[(i)--(ii)] \emph{Parametric bound} (Section~\ref{app:diffusion_param_bound}). Closed form of the eigenvalues $\li^{(N)}$ and $\tilde\li^{(N)}$ from the scalar recursions of Lemma~\ref{lem:gaussian_recursion}, then separate bounds on the three terms of \eqref{eq:diffusion_decomp_W2}, and summation. This gives a bound on $W_2(\P_N,\P)$ in which $T$ and $\delta$ remain free (Proposition~\ref{prop:diffusion_param_bound}).
    \item[(iii)] \emph{Choice of hyperparameters} (Section~\ref{app:diffusion_hyperparameters}). Plug the hyperparameters of Theorem~\ref{thm:diffusion_spectral_bound} and Proposition~\ref{prop:diffusion_coarse_bound} into the bound of Proposition~\ref{prop:diffusion_param_bound}.
    \item[(iv)] \emph{Asymptotic equivalent} (Section~\ref{app:diffusion_equivalent_asymptotique}). Exact equivalent of $W_2(\P_N,\P)$ as $N\to\infty$ for any geometric grid with $T\propto\sqrt N$ and $\delta\propto N^{-2}$, from the closed forms of step~(i). This equivalent does not depend on the proportionality constants, which shows that the spectral rate is a robust feature of the scheme.
\end{itemize}

\subsection{Parametric bound for the diffusion sampler}
\label{app:diffusion_param_bound}
The goal of this section is to prove the following proposition, which bounds $W_2(\P_N,\P)$ for all values of the hyperparameters. As before, we freely drop the eigenvalue index $i$ in the proofs.

\begin{proposition}[Parametric bound for the diffusion sampler]
\label{prop:diffusion_param_bound}
    For every $N\ge 1$ and every $T>\delta>0$, with the geometric discretization $\tau_k = T(\delta/T)^{k/N}$, the error of the diffusion algorithm satisfies
    \[
    W_2(\P_N, \P) \leq \frac{\sqrt{\Tr((\Sigma+\delta\I)^3\Sigma^2)}}{T^2} + \frac{\sqrt{\Tr\Sigma+4d\delta}}{4N}\Big(\frac{T}{\delta}\Big)^{1/N}\log\frac{T}{\delta} + \sqrt{d\delta}.
    \]
\end{proposition}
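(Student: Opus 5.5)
The plan is to use the triangle inequality \eqref{eq:diffusion_decomp_W2} and bound its three terms separately through \eqref{eq:diffusion_W2s_generic_1}--\eqref{eq:diffusion_W2s_generic_3}. These three bounds give, in order, the $T^{-2}$ term (initialization), the middle term (discretization) and $\sqrt{d\delta}$ (early stopping). The early-stopping term is immediate: $\sqrt{\lambda+\delta}-\sqrt\lambda\le\sqrt\delta$ in each direction, and summing over $i$ gives $\sqrt{d\delta}$.

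The first and key step is a closed form for the scalar recursion given by Lemma~\ref{lem:gaussian_recursion} and Remark~\ref{rem:identification_algos}. Fix an eigendirection and write $y_k=(1-\eta_k/(\lambda+\tau_{k-1}))y_{k-1}+\sqrt{\eta_k}\zeta_k$. Since $\tau_k=\tau_{k-1}-\eta_k$, the multiplier equals $(\lambda+\tau_k)/(\lambda+\tau_{k-1})$. This is the telescoping alluded to in Section~\ref{problem_setup}: the product of the multipliers from step $j+1$ to $N$ equals $(\lambda+\delta)/(\lambda+\tau_j)$. Taking variances, an initial variance $v_0$ leads to
\[
\mathrm{Var}(y_N)=(\lambda+\delta)^2\Big(\frac{v_0}{(\lambda+T)^2}+\sum_{k=1}^N\frac{\eta_k}{(\lambda+\tau_k)^2}\Big).
\]
We apply this with $v_0=T$ to get $\lambda^{(N)}$ and with $v_0=\lambda+T$ to get $\tilde\lambda^{(N)}$.

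For the initialization term, we have $\tilde\lambda^{(N)}-\lambda^{(N)}=\lambda(\lambda+\delta)^2/(\lambda+T)^2$. Writing $\sqrt a-\sqrt b=(a-b)/(\sqrt a+\sqrt b)$ and using $\tilde\lambda^{(N)}\ge\lambda+\delta$ (shown below), the difference of square roots is at most $\lambda(\lambda+\delta)^{3/2}/T^2$. Summing the squares over $i$ gives exactly $\sqrt{\Tr((\Sigma+\delta\I)^3\Sigma^2)}/T^2$.

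For the discretization term, the identity $\int_\delta^T(\lambda+t)^{-2}\dd t=(\lambda+\delta)^{-1}-(\lambda+T)^{-1}$ shows that
\[
\tilde\lambda^{(N)}-(\lambda+\delta)=(\lambda+\delta)^2 E,\qquad E:=\sum_{k=1}^N\frac{\eta_k}{(\lambda+\tau_k)^2}-\int_\delta^T\frac{\dd t}{(\lambda+t)^2}.
\]
Thus $E$ is the error of a right-endpoint Riemann sum for the decreasing function $f(t)=(\lambda+t)^{-2}$, evaluated at the left (smaller) end of each interval. Hence $E\ge0$, which also gives $\tilde\lambda^{(N)}\ge\lambda+\delta$. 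On each interval $[\tau_k,\tau_{k-1}]$ the error is at most $\eta_k^2\abs{f'(\tau_k)}/2$ (the mean value theorem already yields this up to a factor $2$; convexity of $f$ recovers the $1/2$). On the geometric grid, $r:=(\delta/T)^{1/N}$ satisfies $\eta_k=\tau_k(1-r)/r$ and $1-r\le\log(1/r)=\frac1N\log\frac T\delta$. This gives
\[
E\le\frac{1-r}{r}\sum_k\eta_k\frac{\tau_k}{(\lambda+\tau_k)^3}\le\frac1{Nr}\log\frac T\delta\cdot\Big(\int\frac{t\,\dd t}{(\lambda+t)^3}+\text{correction}\Big).
\]
The main term is bounded by $\int_0^\infty t(\lambda+t)^{-3}\dd t=1/(2\lambda)$. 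Then $\sqrt{\tilde\lambda^{(N)}}-\sqrt{\lambda+\delta}\le(\lambda+\delta)^{3/2}E/2$, and after squaring and summing over $i$ this should yield $\frac{\sqrt{\Tr\Sigma+4d\delta}}{4N}(T/\delta)^{1/N}\log\frac T\delta$.

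I expect this last step to be the main obstacle. The delicate point is converting the sum $\sum_k\eta_k\tau_k/(\lambda+\tau_k)^3$, again a right-endpoint sum, into an integral without losing more than the factor $1/r$, while getting a final per-direction bound of the form $\sqrt{\lambda+4\delta}$ rather than something that blows up like $\lambda^{-1}$ when $\lambda\ll\delta$. Two issues arise. First, the bound $1/(2\lambda)$ on the main term produces $(\lambda+\delta)^{3/2}/\lambda$, which is too large when $\lambda\ll\delta$, so the integral must instead be taken only over $[\delta,\infty)$. Second, the factor $(\lambda+\delta)^{3/2}$ has to be balanced against the $(\lambda+\tau_k)^{-3}$ factors using $\tau_k\ge\delta$. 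I would first try the bound $\tau(\lambda+\delta)^{3/2}/(\lambda+\tau)^3\le$ (constant)$\,(\lambda+\tau)^{-1/2}$ and integrate termwise in $\log\tau$, tuning the constants until $\Tr\Sigma+4d\delta$ and the prefactor $1/4$ come out.
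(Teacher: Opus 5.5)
Your decomposition, the telescoped closed form for $\lambda^{(N)}$ and $\tilde\lambda^{(N)}$, the initialization and early-stopping bounds, the sign of $E$, and the handling of $(1-r)/r$ through $1-r\le\log(1/r)$ all match the paper. The gap is in the discretization term, and it comes from the step you did commit to: bounding each interval's quadrature error by $\eta_k^2\abs{f'(\tau_k)}/2=\eta_k^2/(\lambda+\tau_k)^3$. That error is in fact explicit, because $\tau_{k-1}-\tau_k=\eta_k$ gives
\[
\frac{\eta_k}{(\lambda+\tau_k)^2}-\Big(\frac1{\lambda+\tau_k}-\frac1{\lambda+\tau_{k-1}}\Big)=\frac{\eta_k^2}{(\lambda+\tau_k)^2(\lambda+\tau_{k-1})}.
\]
Your bound therefore overshoots by the factor $(\lambda+\tau_{k-1})/(\lambda+\tau_k)$, which can be as large as $1/r=(T/\delta)^{1/N}$. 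This loss is not a constant. For $N=1$, the right-hand side of your inequality $E\le\frac{1-r}{r}\sum_k\eta_k\tau_k(\lambda+\tau_k)^{-3}$ is exactly $(T-\delta)^2/(\lambda+\delta)^3$. The resulting per-direction discretization bound is $(T-\delta)^2/\big(2(\lambda+\delta)^{3/2}\big)$, which is quadratic in $T$, whereas the claimed term is of order $(T/\delta)\log(T/\delta)$. Since the sum is a single explicit term here, no later treatment of it can recover the statement.

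The natural completion of your route gives $(T/\delta)^{2/N}$ in place of $(T/\delta)^{1/N}$. That is asymptotically harmless for the schedules of Theorem~\ref{thm:diffusion_spectral_bound}, but it does not prove the proposition for all $N\ge1$ and $T>\delta$. The patch you sketch would fail too. Bounding $\tau(\lambda+\delta)^{3/2}/(\lambda+\tau)^3$ by $C(\lambda+\tau)^{-1/2}$ destroys the $\tau^{-2}$ decay at infinity and introduces a $\sqrt T$ growth.

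The missing idea is to keep the exact error and evaluate only a monotone factor at the grid points. With $\eta_k=(r^{-1}-1)\tau_k$,
\[
E=(r^{-1}-1)\sum_{k=1}^N\frac{\tau_k}{\lambda+\tau_k}\Big(\frac1{\lambda+\tau_k}-\frac1{\lambda+\tau_{k-1}}\Big).
\]
Here the bracket equals $\int_{\tau_k}^{\tau_{k-1}}(\lambda+\tau)^{-2}\dd\tau$ exactly. Moreover, $\tau\mapsto\tau/(\lambda+\tau)$ is increasing, so its value at $\tau_k$ is at most its value at any point of $[\tau_k,\tau_{k-1}]$. This gives $E\le(r^{-1}-1)\int_\delta^T\tau(\lambda+\tau)^{-3}\dd\tau$ with no extra factor, and the non-monotone function $\tau/(\lambda+\tau)^3$ is never summed.

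Integrating the antiderivative $\frac{\lambda}{2(\lambda+\tau)^2}-\frac1{\lambda+\tau}$ over $[\delta,T]$, and dropping the nonpositive boundary term at $T$, gives
\[
E\le(r^{-1}-1)\frac{\lambda+2\delta}{2(\lambda+\delta)^2}.
\]
This bound has none of the $\lambda^{-1}$ blow-up you worried about. The inequality $(\lambda+2\delta)^2\le(\lambda+\delta)(\lambda+4\delta)$ then gives $(\lambda+\delta)^3E^2\le\frac14(r^{-1}-1)^2(\lambda+4\delta)$. Combine this with your inequality $\sqrt{\tilde\lambda^{(N)}}-\sqrt{\lambda+\delta}\le(\lambda+\delta)^{3/2}E/2$ and with $r^{-1}-1\le\frac1N(T/\delta)^{1/N}\log(T/\delta)$. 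This yields exactly the middle term, including the factor $\sqrt{\Tr\Sigma+4d\delta}$ and the constant $1/4$.
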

The three terms bound the initialization, discretization and early-stopping errors respectively.

\subsubsection{Closed form of the eigenvalues}
\label{app:diffusion_closed_form}
We first establish a closed form for the eigenvalues $\lambda^{(N)}$ and $\tilde{\lambda}^{(N)}$, valid for any discretization grid.
\begin{lemma}
\label{lem:diffusion_closed_form}
    Fix $N\ge1$ and consider the following generalized diffusion algorithm:
    \begin{equation*}
    \begin{cases}
    Y_0 \sim \Nml(0,M), \\
    \forall k=1,\dots,N\,,\, Y_{k}= Y_{k-1}+\eta_ks_{\tau_{k-1}}(Y_{k-1})+\sqrt{\eta_k}\xi_k,
    \end{cases}
    \end{equation*}
    for some $M\in\mathcal{A}(\Sigma)$, with eigenvalues $\mu_i:=[M]_i$.
    Then the iterates $(Y_k)$ are centered Gaussian vectors whose covariance matrices $(M_k)$ belong to $\mathcal{A}(\Sigma)$, and the eigenvalues $\mu_i^{(N)}:=[M_N]_i$ are given by
    \[
    \forall i,\quad \mu_i^{(N)} = \left(\frac{\lambda_i+\delta}{\lambda_i + T}\right)^2\mu_i+(\lambda_i+\delta)^2\sum_{j=1}^N\frac{\eta_j}{(\lambda_i+\tau_j)^2}.
    \]
\end{lemma}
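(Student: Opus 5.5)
The plan is to reduce the statement to scalar recursions via Lemma~\ref{lem:gaussian_recursion} and then obtain the closed form through a telescoping product. The generalized algorithm fits the framework of that lemma with $A_k^{xx}=\I-\eta_{k+1}C_{\tau_k}^{-1}$, $B_k^x=\sqrt{\eta_{k+1}}\I$ and no velocity component. These matrices lie in $\mathcal{A}(\Sigma)$ because $C_t=\Sigma+t\I$ does, and the initial covariance $M$ lies in $\mathcal{A}(\Sigma)$ by assumption. Parts (i)--(iii) of the lemma then give Gaussianity, show that $M_k\in\mathcal{A}(\Sigma)$, and show that each eigenvalue $\mu^{(k)}$ (index $i$ dropped) is the variance of the scalar recursion
\[
y_k=\Big(1-\frac{\eta_k}{\lambda+\tau_{k-1}}\Big)y_{k-1}+\sqrt{\eta_k}\,\zeta_k .
\]

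The key observation is that, since $\tau_k=\tau_{k-1}-\eta_k$,
\[
1-\frac{\eta_k}{\lambda+\tau_{k-1}}=\frac{\lambda+\tau_{k-1}-\eta_k}{\lambda+\tau_{k-1}}=\frac{\lambda+\tau_k}{\lambda+\tau_{k-1}}.
\]
This is exactly the telescoping cancellation announced in Section~\ref{problem_setup}. Taking variances, and using the independence of $y_{k-1}$ and $\zeta_k$, gives
\[
\mu^{(k)}=\Big(\frac{\lambda+\tau_k}{\lambda+\tau_{k-1}}\Big)^2\mu^{(k-1)}+\eta_k .
\]

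Dividing by $(\lambda+\tau_k)^2$, the sequence $u_k:=\mu^{(k)}/(\lambda+\tau_k)^2$ satisfies the additive recursion
\[
u_k=u_{k-1}+\frac{\eta_k}{(\lambda+\tau_k)^2},
\]
so that
\[
u_N=\frac{\mu}{(\lambda+T)^2}+\sum_{j=1}^N\frac{\eta_j}{(\lambda+\tau_j)^2}.
\]
Multiplying by $(\lambda+\tau_N)^2=(\lambda+\delta)^2$ yields the announced formula; a short induction on $k$ can formalize this step.

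I expect no real obstacle. The only points needing care are the indexing convention (the score at $\tau_{k-1}$ is paired with step $\eta_k$, so the ratio produced is $(\lambda+\tau_k)/(\lambda+\tau_{k-1})$) and confirming that the denominator in each sum term is $\lambda+\tau_j$ rather than $\lambda+\tau_{j-1}$. The division trick settles the latter directly.
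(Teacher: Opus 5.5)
Your proposal is correct and follows the paper's proof essentially verbatim. Both arguments reduce to scalar variance recursions via Lemma~\ref{lem:gaussian_recursion} and use the telescoping identity $1-\eta_k/(\lambda+\tau_{k-1})=(\lambda+\tau_k)/(\lambda+\tau_{k-1})$. Dividing by $(\lambda+\tau_k)^2$ then turns the recursion into an additive one that sums to the closed form, with only a shift of index ($k+1$ versus $k$) distinguishing your write-up from the paper's.
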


\begin{proof}
    This generalization of \eqref{eq:diffusion_algorithm} falls within the scope of Lemma~\ref{lem:gaussian_recursion}: the identification is that of Remark~\ref{rem:identification_algos}, the only change being the initial covariance $M$, which lies in $\mathcal{A}(\Sigma)$ by assumption. Point (ii) of the lemma gives the recursion
    \[
    \forall k\ge0,\quad\mu^{(k+1)} = \left(1-\frac{\eta_{k+1}}{\lambda+\tau_k}\right)^2\mu^{(k)} + \eta_{k+1}.
    \]
    Since $\eta_{k+1} = \tau_k-\tau_{k+1}$, the contraction factor telescopes, leading to the simplification announced in Section~\ref{problem_setup}:
    \[
        \mu^{(k+1)} = \left(\frac{\lambda + \tau_k-\eta_{k+1}}{\lambda+ \tau_k}\right)^2\mu^{(k)} + \eta_{k+1}
        = \left(\frac{\lambda+\tau_{k+1}}{\lambda+\tau_k}\right)^2\mu^{(k)} + \eta_{k+1}.
    \]
    Dividing by $(\lambda+\tau_{k+1})^2$ and setting
    \begin{equation*}
    \forall j\ge0,\quad\nu_j := \frac{\mu^{(j)}}{(\lambda + \tau_j)^2},
    \end{equation*}
    we are left with the much simpler recursion
    \begin{equation*}
    \forall j\ge0,\quad
    \nu_{j+1} = \nu_j + \frac{\eta_{j+1}}{(\lambda+\tau_{j+1})^2},
    \quad \nu_0 = \frac{\mu}{(\lambda+T)^2},
    \end{equation*}
    whence
    \[
    \nu_k = \frac{\mu}{(\lambda+T)^2}+ \sum_{j=1}^k \frac{\eta_j}{(\lambda+\tau_j)^2}
    \quad\text{and}\quad
    \mu^{(k)} = \left(\frac{\lambda+\tau_k}{\lambda+T}\right)^2\mu + (\lambda+\tau_k)^2\sum_{j=1}^k\frac{\eta_j}{(\lambda+\tau_j)^2}.
    \]
    Taking $k=N$, with $\tau_N=\delta$, gives the announced formula.
\end{proof}
\begin{remark}
\label{rem:diffusion_forme_close_vpr}
    Applying this lemma with $M = T\I$ and with $M = \Sigma+T\I$, both in $\mathcal{A}(\Sigma)$, gives for every $N\ge1$
    \begin{align*}
        \lambda^{(N)} &= \left(\frac{\lambda+\delta}{\lambda+T}\right)^2T+ (\lambda+\delta)^2\sum_{j=1}^N\frac{\eta_j}{(\lambda+\tau_j)^2}, \\
        \tilde{\lambda}^{(N)} &= \frac{(\lambda+\delta)^2}{\lambda+T}+ (\lambda+\delta)^2\sum_{j=1}^N\frac{\eta_j}{(\lambda+\tau_j)^2}.
    \end{align*}
\end{remark}

\subsubsection{Initialization error}
\begin{lemma}
\label{lem:diffusion_init_error}
    For every $N\ge1$ and every $T>\delta>0$, the initialization error is bounded uniformly in $N$:
    \[
    W_2(\P_N,\tilde\P_N) \le \frac{\sqrt{\Tr((\Sigma+\delta\I)^3\Sigma^2)}}{T^2}.
    \]
\end{lemma}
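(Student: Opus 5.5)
We start from the closed forms of Remark~\ref{rem:diffusion_forme_close_vpr}. The two eigenvalues $\lambda^{(N)}$ and $\tilde\lambda^{(N)}$ share the same sum term $(\lambda+\delta)^2\sum_j\eta_j/(\lambda+\tau_j)^2$. They differ only through the initialization term:
\[
\lambda^{(N)}-\tilde\lambda^{(N)} = (\lambda+\delta)^2\Big(\frac{T}{(\lambda+T)^2}-\frac{1}{\lambda+T}\Big) = -\frac{(\lambda+\delta)^2\lambda}{(\lambda+T)^2}.
\]
This difference does not depend on $N$ or on the grid, which explains the uniformity claimed in the statement. It remains to turn it into a bound on the square-root difference appearing in \eqref{eq:diffusion_W2s_generic_1}.

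As in the Langevin proofs, we write $\sqrt a-\sqrt b=(a-b)/(\sqrt a+\sqrt b)$ and bound the denominator from below by $\sqrt{\tilde\lambda^{(N)}}$, discarding $\sqrt{\lambda^{(N)}}\ge0$. The sum term is nonnegative, so $\tilde\lambda^{(N)}\ge(\lambda+\delta)^2/(\lambda+T)$, which gives
\[
\Big(\sqrt{\lambda^{(N)}}-\sqrt{\tilde\lambda^{(N)}}\Big)^2\le\frac{(\lambda+\delta)^4\lambda^2}{(\lambda+T)^4}\cdot\frac{\lambda+T}{(\lambda+\delta)^2}=\frac{(\lambda+\delta)^2\lambda^2}{(\lambda+T)^3}.
\]

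This lower bound only yields $(\lambda+\delta)^2\lambda^2/T^3$, whereas the target is $(\lambda+\delta)^3\lambda^2/T^4$. We therefore need a better lower bound on the denominator, of order $\sqrt{T/(\lambda+\delta)}$ rather than $\sqrt{(\lambda+\delta)^2/T}$. Such a bound must come from the sum term, and we expect this step to be the main obstacle. The natural first attempt is to compare the sum to an integral. For a decreasing grid, each term satisfies $\eta_j/(\lambda+\tau_j)^2\ge\int_{\tau_j}^{\tau_{j-1}}\dd t/(\lambda+t)^2$, since $\tau_j$ is the left endpoint and the integrand is decreasing. Summing over $j$,
\[
\sum_{j=1}^N\frac{\eta_j}{(\lambda+\tau_j)^2}\ge\frac1{\lambda+\delta}-\frac1{\lambda+T}.
\]
Adding the initialization term then gives
\[
\tilde\lambda^{(N)}\ge(\lambda+\delta)^2\Big(\frac{1}{\lambda+\delta}\Big)=\lambda+\delta,
\]
since the two contributions in $1/(\lambda+T)$ cancel exactly. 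In fact, the lower bound to use is $\tilde\lambda^{(N)}\ge\lambda+\delta$, not anything of order $T$.

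With this lower bound, each squared term is at most $(\lambda+\delta)^4\lambda^2/\big((\lambda+T)^4(\lambda+\delta)\big)=(\lambda+\delta)^3\lambda^2/(\lambda+T)^4\le(\lambda+\delta)^3\lambda^2/T^4$. Summing over $i$ gives exactly $\Tr((\Sigma+\delta\I)^3\Sigma^2)/T^4$, and taking the square root concludes the proof. The only delicate points are the direction of the Riemann-sum inequality for a decreasing grid and the exact cancellation of the $1/(\lambda+T)$ terms.
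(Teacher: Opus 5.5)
Your proof is correct and follows the paper's argument: the $N$-independent difference $\tilde\lambda^{(N)}-\lambda^{(N)}=\lambda(\lambda+\delta)^2/(\lambda+T)^2$, then $\tilde\lambda^{(N)}\ge\lambda+\delta$ from the left-endpoint Riemann sum of the decreasing integrand $(\lambda+\tau)^{-2}$, then $\sqrt a-\sqrt b=(a-b)/(\sqrt a+\sqrt b)$. The only difference is cosmetic: the paper gets the sign of the quadrature error from the exact telescoped identity $\Delta=\sum_j\eta_j^2/\big((\lambda+\tau_j)^2(\lambda+\tau_{j-1})\big)$, which it reuses for the discretization error, whereas you get it directly from the integral comparison.
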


\begin{proof}
Fix $N\ge1$ and $T>\delta>0$ throughout. By \eqref{eq:diffusion_W2s_generic_1},
\begin{equation}
\label{eq:diffusion_init_error_W2_generic}
W_2^2(\P_N,\tilde{\P}_N) = \sum_{i=1}^d\Big(\sqrt{\lambda_i^{(N)}} - \sqrt{\tilde{\lambda}_i^{(N)}}\Big)^2.
\end{equation}
Starting from the expression of $\tilde\lambda^{(N)}$ in Remark~\ref{rem:diffusion_forme_close_vpr}, and using $(\lambda+\delta)^2\cdot\frac1{\lambda+\delta} = \lambda+\delta$,
\begin{align}
    \tilde\lambda^{(N)} &= (\lambda+\delta)^2\Bigg(\sum_{j=1}^N\frac{\eta_j}{(\lambda+\tau_j)^2}+\frac1{\lambda+T}\Bigg) \notag \\
    &= \lambda+\delta + (\lambda+\delta)^2\Bigg(\sum_{j=1}^N\frac{\eta_j}{(\lambda+\tau_j)^2}-\bigg(\frac1{\lambda+\delta}-\frac1{\lambda+T}\bigg)\Bigg)
    = \lambda+\delta + (\lambda+\delta)^2\Delta, \label{eq:diffusion_initialization_error_comp_vpr}
\end{align}
where
\[
\Delta := \sum_{j=1}^N\frac{\eta_j}{(\lambda+\tau_j)^2}-\bigg(\frac1{\lambda+\delta}-\frac1{\lambda+T}\bigg)
= \sum_{j=1}^N\frac{\eta_j}{(\lambda+\tau_j)^2} - \int_\delta^T\frac{\dd\tau}{(\lambda+\tau)^2}.
\]
$\Delta$ is the quadrature error made when approximating $\int_\delta^T (\lambda+\tau)^{-2}\dd\tau$ by the Riemann sum $\sum_{j=1}^N\eta_j(\lambda+\tau_j)^{-2}$, in which the decreasing integrand is evaluated at the left endpoint of each interval $[\tau_{j},\tau_{j-1}]$; it is therefore positive. Explicitly, since $\tau_0=T$ and $\tau_N=\delta$,
\begin{align}
    \Delta &= \sum_{j=1}^N \frac{\eta_j}{(\lambda+\tau_j)^2} - \sum_{j=1}^N \left(\frac{1}{\lambda+\tau_j}-\frac{1}{\lambda+\tau_{j-1}}\right) \notag \\
    &= \sum_{j=1}^N \left(\frac{\eta_j}{(\lambda+\tau_j)^2} - \frac{\eta_j}{(\lambda+\tau_j)(\lambda+\tau_{j-1})}\right) \label{eq:diffusion_init_error_delta_int} \\
    &= \sum_{j=1}^N \frac{\eta_j^2}{(\lambda+\tau_j)^2(\lambda+\tau_{j-1})} > 0, \notag
\end{align}
where we used $\tau_{j-1}-\tau_j = \eta_j$ twice. With \eqref{eq:diffusion_initialization_error_comp_vpr}, this gives
\begin{equation}
\label{eq:diffusion_initialization_error_comp_vpr_bis}
\tilde\lambda^{(N)}>\lambda+\delta.
\end{equation}
Remark~\ref{rem:diffusion_forme_close_vpr} also gives the difference of the two eigenvalues,
\begin{equation}
\label{eq:diffusion_init_error_diff_vpr}
\tilde\lambda^{(N)}-\lambda^{(N)} = \left(\frac{\lambda+\delta}{\lambda+T}\right)^2\lambda,
\end{equation}
so that, by \eqref{eq:diffusion_initialization_error_comp_vpr_bis},
\[
\sqrt{\tilde\lambda^{(N)}}-\sqrt{\lambda^{(N)}} = \frac{\tilde\lambda^{(N)}-\lambda^{(N)}}{\sqrt{\tilde\lambda^{(N)}}+\sqrt{\lambda^{(N)}}}
\le \frac{\tilde\lambda^{(N)}-\lambda^{(N)}}{\sqrt{\lambda+\delta}}
=\frac{(\lambda+\delta)^{3/2}\lambda}{(\lambda+T)^2}
\le \frac{(\lambda+\delta)^{3/2}\lambda}{T^2}.
\]
Squaring, summing over $i$ and using \eqref{eq:diffusion_init_error_W2_generic} gives the bound of the lemma.
\end{proof}

\subsubsection{Discretization error}
\begin{lemma}
\label{lem:diffusion_discretization_error}
    For every $N\ge1$ and every $T>\delta>0$, with the geometric discretization $\tau_k = T(\delta/T)^{k/N}$, the discretization error satisfies
    \[
    W_2(\tilde\P_N,\bb{Q}_\delta) \le \frac{\sqrt{\Tr\Sigma+4d\delta}}{4N}\Big(\frac{T}{\delta}\Big)^{1/N}\log\frac{T}{\delta}.
    \]
\end{lemma}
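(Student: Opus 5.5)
We work direction by direction, starting from \eqref{eq:diffusion_W2s_generic_2}. From \eqref{eq:diffusion_initialization_error_comp_vpr} in the proof of Lemma~\ref{lem:diffusion_init_error} we have $\tilde\lambda^{(N)} = \lambda+\delta+(\lambda+\delta)^2\Delta$, with the quadrature error
\[
\Delta=\sum_{j=1}^N \frac{\eta_j^2}{(\lambda+\tau_j)^2(\lambda+\tau_{j-1})}>0 .
\]
Since $\tilde\lambda^{(N)}>\lambda+\delta$, we can bound the denominator from below by $2\sqrt{\lambda+\delta}$. This gives
\[
\Big(\sqrt{\tilde\lambda^{(N)}}-\sqrt{\lambda+\delta}\Big)^2\le \frac{(\lambda+\delta)^4\Delta^2}{4(\lambda+\delta)}=\frac{(\lambda+\delta)^3\Delta^2}{4}.
\]
It therefore suffices to show that, for the geometric grid,
\[
(\lambda+\delta)^{3/2}\Delta\le \frac{\sqrt{\lambda+4\delta}}{2N}\Big(\frac T\delta\Big)^{1/N}\log\frac T\delta
\]
(or some bound of this shape). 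Squaring and summing over $i$ then yields $\sqrt{\sum_i(\lambda_i+4\delta)}=\sqrt{\Tr\Sigma+4d\delta}$ together with the prefactor $\frac14$.

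To bound $\Delta$, write $q:=(T/\delta)^{1/N}>1$, so that $\tau_{j-1}=q\tau_j$ and $\eta_j=(q-1)\tau_j$. Each summand then becomes
\[
\frac{(q-1)^2\tau_j^2}{(\lambda+\tau_j)^2(\lambda+q\tau_j)}
\le (q-1)\,\frac{(q-1)\tau_j}{(\lambda+\tau_j)^2}\cdot\frac{\tau_j}{\lambda+\tau_j}.
\]
Here I used $\lambda+q\tau_j\ge\lambda+\tau_j$. A cleaner option is to keep one factor $\eta_j/(\lambda+\tau_j)$ and note that $(q-1)\tau_j/(\lambda+q\tau_j)\le (q-1)/q$. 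The sum $\sum_j \eta_j\,\tau_j/(\lambda+\tau_j)^3$ is then compared to an integral. Because $\tau_{j-1}=q\tau_j$, the Riemann sum is dominated by $q\int_\delta^T \frac{\tau\dd\tau}{(\lambda+\tau)^3}\cdot\frac1\tau$-type expressions. Equivalently, in the logarithmic variable $u=\log\tau$ the grid is uniform with mesh $h=\frac1N\log\frac T\delta$. We then use $q-1\le hq$, i.e.\ $e^h-1\le he^h$, which is the source of both the factor $(T/\delta)^{1/N}$ and the factor $\frac{1}{N}\log\frac T\delta$.

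The remaining integral has the form $\int \frac{\tau^2}{(\lambda+\tau)^3}\dd u$ with $\dd u=\dd\tau/\tau$, and
\[
\sup_{\tau\ge\delta}\frac{\tau^2}{(\lambda+\tau)^3}\cdot(\text{stuff})\lesssim \frac1{\lambda+\delta}\,\cdot .
\]
Following this route, $(\lambda+\delta)^{3/2}\Delta$ should come out as $\le \frac{h q}{2}\,\sqrt{\lambda+c\delta}$. The $4\delta$ presumably enters by controlling $(\lambda+\delta)^{3/2}\cdot\sup_j \tau_j/((\lambda+\tau_j)(\lambda+q\tau_j))$-type terms near $\tau=\delta$, where one needs $\lambda+\delta\le\lambda+4\delta$ slack after an AM--GM step.

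The main obstacle is the last step: obtaining exactly $\frac{1}{2}\sqrt{\lambda+4\delta}$ uniformly in $\lambda$. The plan is to write $\int_\delta^T \frac{\tau\,\dd\tau}{(\lambda+\tau)^3}\le\frac{1}{2(\lambda+\delta)}$, since its antiderivative is $-\frac{1}{\lambda+\tau}+\frac{\lambda}{2(\lambda+\tau)^2}$. After multiplying by $(\lambda+\delta)^{3/2}$ this leaves a $\sqrt{\lambda+\delta}$ factor, and the looser $4\delta$ absorbs the boundary term of the Riemann-sum comparison. If that comparison fails monotonicity (since $\tau/(\lambda+\tau)^3$ is not monotone), I would instead bound the sum termwise using $\frac{\tau_j^2}{(\lambda+\tau_j)^2(\lambda+\tau_{j-1})}\le\frac{1}{4}\cdot$ suitable constants via AM--GM, and sum the geometric factors directly.
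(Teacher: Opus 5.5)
Your reduction is correct: $W_2^2(\tilde\P_N,\bb{Q}_\delta)\le\frac14\sum_i(\lambda_i+\delta)^3\Delta_i^2$, so it suffices to show $(\lambda+\delta)^{3/2}\Delta\le\frac{q\log q}{2}\sqrt{\lambda+4\delta}$ with your $q=(T/\delta)^{1/N}$. The use of $q-1\le q\log q$ at the end is also right.

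\textbf{The gap: the bound on $\Delta$.} Neither of your concrete manipulations can reach the stated constant, because the lemma has essentially no slack.
\begin{itemize}
\item \emph{First step.} Replacing $\lambda+\tau_{j-1}$ by $\lambda+\tau_j$ already loses a factor $q$. Let $\lambda\to0$, which is allowed since the lemma is uniform in $\Sigma$ and $\delta$. The exact value is then $\Delta=\frac{(q-1)^2}{q}\sum_j\tau_j^{-1}=(q-1)\big(\frac1\delta-\frac1T\big)$, whereas your bound is $q$ times this. It can therefore give at best $(\lambda+\delta)^{3/2}\Delta\le q(q-1)(1-\delta/T)\sqrt\delta$. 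Since $q-1>\log q$, this exceeds the target $q\log q\,\sqrt\delta$ once $\delta/T$ is small. This same step is what leaves you with a left-endpoint Riemann sum of the non-monotone function $\tau/(\lambda+\tau)^3$.
\item \emph{The ``cleaner option.''} The bound $\frac{(q-1)\tau_j}{\lambda+q\tau_j}\le\frac{q-1}q$ leaves you with $\frac{q-1}{q}\sum_j\frac{\eta_j}{(\lambda+\tau_j)^2}\ge\frac{q-1}{q}\big(\frac1{\lambda+\delta}-\frac1{\lambda+T}\big)$. For $\delta\to0$, $T\to\infty$, $q\to1$, this is twice the true value $\Delta\sim\frac{q-1}{2\lambda}$. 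That is exactly the regime where the lemma is asymptotically tight, so the factor $2$ cannot be absorbed.
\item \emph{The integral bound.} The inequality $\int_\delta^T\frac{\tau\,\dd\tau}{(\lambda+\tau)^3}\le\frac1{2(\lambda+\delta)}$ is false for large $T$. Your own antiderivative gives $\frac{\lambda+2\delta}{2(\lambda+\delta)^2}-\frac{\lambda+2T}{2(\lambda+T)^2}$, whose limit as $T\to\infty$ exceeds $\frac1{2(\lambda+\delta)}$ by $\frac{\delta}{2(\lambda+\delta)^2}$.
\item \emph{The AM--GM fallback} is not specified enough to evaluate.
\end{itemize}

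\textbf{The missing idea.} Pair the factors so that the telescoping part stays exact, and use monotonicity only on a monotone function. Write each summand of $\Delta$ as
\[
\frac{\eta_j}{\lambda+\tau_j}\Big(\frac1{\lambda+\tau_j}-\frac1{\lambda+\tau_{j-1}}\Big)=(q-1)\,f(\tau_j)\int_{\tau_j}^{\tau_{j-1}}\frac{\dd\tau}{(\lambda+\tau)^2},\qquad f(\tau)=\frac{\tau}{\lambda+\tau}.
\]
Since $f$ is increasing and $\tau_j$ is the smaller endpoint, $f(\tau_j)\le f(\tau)$ on $[\tau_j,\tau_{j-1}]$. Hence
\[
\Delta\le(q-1)\int_\delta^T\frac{\tau\,\dd\tau}{(\lambda+\tau)^3}\le(q-1)\frac{\lambda+2\delta}{2(\lambda+\delta)^2},
\]
with no loss at all as $\lambda\to0$. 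The $4\delta$ then comes from $(\lambda+2\delta)^2\le(\lambda+\delta)(\lambda+4\delta)$, that is, from the lower limit of the integral rather than from a Riemann-sum boundary term. This yields $(\lambda+\delta)^{3/2}\Delta\le\frac{q-1}2\sqrt{\lambda+4\delta}$, and $q-1\le q\log q$ concludes.

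In your ``cleaner option'' you had already isolated $\eta_j/(\lambda+\tau_j)=(q-1)f(\tau_j)$. The error was to then bound the other factor instead of keeping it as the exact difference $\frac1{\lambda+\tau_j}-\frac1{\lambda+\tau_{j-1}}$.
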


\begin{proof}
    Fix $N\ge1$ and $T>\delta>0$. By \eqref{eq:diffusion_W2s_generic_2} and \eqref{eq:diffusion_initialization_error_comp_vpr_bis},
    \begin{align*}
    W_2^2(\tilde\P_N,\bb{Q}_\delta) &= \sum_{i=1}^d\Big(\sqrt{\tilde{\lambda}_i^{(N)}} - \sqrt{\li+\delta}\Big)^2
    = \sum_{i=1}^d \Bigg(\frac{\tilde\lambda_i^{(N)}-(\lambda_i+\delta)}{\sqrt{\tilde\lambda_i^{(N)}}+\sqrt{\lambda_i+\delta}}\Bigg)^2 \\
    &\le \frac14 \sum_{i=1}^d \frac{\big(\tilde\lambda_i^{(N)}-(\lambda_i+\delta)\big)^2}{\lambda_i+\delta},
    \end{align*}
    and with \eqref{eq:diffusion_initialization_error_comp_vpr},
    \begin{equation}
    \label{eq:diffusion_discr_error_W2_generique}
    W_2^2(\tilde\P_N,\bb{Q}_\delta) \le \frac14\sum_{i=1}^d (\li+\delta)^3\Delta_i^2.
    \end{equation}
    It thus remains to bound $\Delta$ for the geometric grid $\tau_k = T(\delta/T)^{k/N}$. Let $q:=(\delta/T)^{1/N}$ denote its ratio, so that $\tau_k = q\tau_{k-1}$ and
    \[
    \forall k\ge1,\quad\eta_k = (q^{-1}-1)\tau_k.
    \]
    Returning to \eqref{eq:diffusion_init_error_delta_int},
    \begin{align}
    \Delta &= (q^{-1}-1)\sum_{j=1}^N\frac{\tau_j}{\lambda+\tau_j}\left(\frac1{\lambda+\tau_j}-\frac1{\lambda+\tau_{j-1}}\right) \label{eq:diffusion_discr_error_delta_inter_0} \\
    &= (q^{-1}-1)\sum_{j=1}^N f(\tau_j)\big(G(\tau_j)-G(\tau_{j-1})\big), \label{eq:diffusion_discr_error_delta_inter}
    \end{align}
    where
    \[
    f : \tau\longmapsto\frac{\tau}{\lambda+\tau},\quad G : \tau\longmapsto\frac1{\lambda+\tau}.
    \]
    Since $\tau_j<\tau_{j-1}$ and $f$ is increasing,
    \[
    f(\tau_j)\big(G(\tau_j)-G(\tau_{j-1})\big) = f(\tau_j)\int_{\tau_{j}}^{\tau_{j-1}}\frac{\dd\tau}{(\lambda+\tau)^2}\le \int_{\tau_j}^{\tau_{j-1}}\frac{f(\tau)\dd\tau}{(\lambda+\tau)^2}=\int_{\tau_{j}}^{\tau_{j-1}}\frac{\tau\dd\tau}{(\lambda+\tau)^3},
    \]
    and summing over $j$ in \eqref{eq:diffusion_discr_error_delta_inter},
    \begin{equation}
    \label{eq:diffusion_discr_error_majo_delta_integrale}
    \Delta\le(q^{-1}-1)\int_\delta^T\frac{\tau\dd\tau}{(\lambda+\tau)^3}.
    \end{equation}
    One checks that $\tau\mapsto\frac\lambda{2(\lambda+\tau)^2}-\frac1{\lambda+\tau}$ is an antiderivative of the integrand, hence
    \begin{align*}
    \Delta\le(q^{-1}-1)\left[\frac\lambda{2(\lambda+\tau)^2}-\frac1{\lambda+\tau}\right]_\delta^T &= (q^{-1}-1)\left(\frac{\lambda+2\delta}{2(\lambda+\delta)^2} - \frac{\lambda+2T}{2(\lambda+T)^2}\right) \\
    &\le(q^{-1}-1)\frac{\lambda+2\delta}{2(\lambda+\delta)^2},
    \end{align*}
    and therefore
    \[
    (\lambda+\delta)^3\Delta^2\le\frac14(q^{-1}-1)^2\frac{(\lambda+2\delta)^2}{\lambda+\delta}\le\frac14(q^{-1}-1)^2(\lambda+4\delta),
    \]
    where the last inequality is $(\lambda+2\delta)^2 \le (\lambda+2\delta)^2+\lambda\delta = (\lambda+\delta)(\lambda+4\delta)$. Plugging this into \eqref{eq:diffusion_discr_error_W2_generique},
    \begin{equation}
    \label{eq:diffusion_discr_error_W2_final}
    W_2(\tilde\P_N,\bb{Q}_\delta) \le \frac14(q^{-1}-1)\sqrt{\Tr\Sigma+4d\delta}.
    \end{equation}
    Finally, the inequality $e^u-1\le ue^u$, valid for all $u\in\R$, applied to $u = \frac1N\log\frac T\delta$ gives
    \[
    q^{-1}-1 = \Big(\frac T\delta\Big)^{1/N}-1\le\frac1N\Big(\frac{T}\delta\Big)^{1/N}\log\frac T\delta,
    \]
    which together with \eqref{eq:diffusion_discr_error_W2_final} concludes the proof.
\end{proof}

\subsubsection{Early-stopping error}
\begin{lemma}
\label{lem:diffusion_es_error}
    For every $\delta>0$, the early-stopping error satisfies
    \[
    W_2(\bb{Q}_\delta,\P) \le \sqrt{d\delta}.
    \]
\end{lemma}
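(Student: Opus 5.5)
We reduce to eigenvalues through the identity \eqref{eq:diffusion_W2s_generic_3}, which Lemma~\ref{lem:W2_gaussian} provides since $\bb{Q}_\delta=\Nml(0,\Sigma+\delta\I)$ and $\P=\Nml(0,\Sigma)$ have covariances in $\mathcal{A}(\Sigma)$:
\[
W_2^2(\bb{Q}_\delta,\P)=\sum_{i=1}^d\Big(\sqrt{\li+\delta}-\sqrt{\li}\Big)^2.
\]
The whole argument then consists of a termwise bound.

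For each $i$, we use the subadditivity of the square root on $\R_+$: $\sqrt{\li+\delta}\le\sqrt{\li}+\sqrt{\delta}$. Since $\sqrt{\li+\delta}\ge\sqrt{\li}$, this gives
\[
0\le\sqrt{\li+\delta}-\sqrt{\li}\le\sqrt\delta.
\]
Equivalently, one can write the difference as $\delta/(\sqrt{\li+\delta}+\sqrt{\li})$ and bound the denominator from below by $\sqrt\delta$. Squaring, summing over the $d$ directions and taking the square root yields $W_2(\bb{Q}_\delta,\P)\le\sqrt{d\delta}$.

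There is no real obstacle. The only point to note is that the bound is deliberately uniform in $\li$: it does not use $\li>0$ at all, so it holds even for very small eigenvalues. This is what makes the choice $\delta\propto\Tr\Sigma/(dN^2)$ in Theorem~\ref{thm:diffusion_spectral_bound} yield a contribution of order $\sqrt{\Tr\Sigma}/N$, matching the constant $1$ appearing in that bound.
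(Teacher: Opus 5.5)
Your proposal is correct and follows essentially the same route as the paper: reduce to eigenvalues via \eqref{eq:diffusion_W2s_generic_3} and bound each term $(\sqrt{\li+\delta}-\sqrt{\li})^2$ by $\delta$. The paper gets the termwise bound by expanding the square and using $\sqrt{\li}\sqrt{\li+\delta}>\li$, which is equivalent to your subadditivity argument.
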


\begin{proof}
    By \eqref{eq:diffusion_W2s_generic_3}, it suffices to note that for every $\lambda>0$,
    \[
    (\sqrt{\lambda+\delta}-\sqrt{\lambda})^2 = 2\lambda + \delta -2\sqrt{\lambda}\sqrt{\lambda+\delta} <2\lambda+\delta-2\lambda = \delta.
    \]
\end{proof}

\begin{proof}[Proof of Proposition~\ref{prop:diffusion_param_bound}]
Combine Lemmas~\ref{lem:diffusion_init_error}, \ref{lem:diffusion_discretization_error} and~\ref{lem:diffusion_es_error} through the triangle inequality \eqref{eq:diffusion_decomp_W2}.
\end{proof}

\subsection{Choice of hyperparameters}
\label{app:diffusion_hyperparameters}

Thanks to Proposition~\ref{prop:diffusion_param_bound}, we can now prove the non-asymptotic bounds of Theorem~\ref{thm:diffusion_spectral_bound} (including the properties of $(a_N)$ stated there) and of Proposition~\ref{prop:diffusion_coarse_bound}.

\begin{proof}[Proof of Theorem~\ref{thm:diffusion_spectral_bound}, non-asymptotic bound]
    Let $N\ge1$ and recall the hyperparameters of Theorem~\ref{thm:diffusion_spectral_bound}:
    \[
    T = \left(\frac{\Tr(\Sigma^5)}{\Tr\Sigma}\right)^{1/4}\sqrt{N},\quad\delta = \frac{\Tr\Sigma}{dN^2}.
    \]
    We first check that $T>\delta$, as required by Proposition~\ref{prop:diffusion_param_bound}. By H\"older's inequality, $\Tr(\Sigma^5)\ge(\Tr\Sigma)^5/d^4$, hence
    \[
    T\ge\left(\frac{(\Tr\Sigma)^4}{d^4}\right)^{1/4}\sqrt{N} = \frac{\Tr\Sigma}d\sqrt{N},
    \]
    which exceeds $\delta = \frac{\Tr\Sigma}{d}N^{-2}$ as soon as $N\ge2$. For $N=1$, the inequality $T>\delta$ still holds unless H\"older's inequality is an equality, \textit{i.e.}, unless $\Sigma$ is a scalar matrix. In that isotropic case one has $T=\delta$, the algorithm performs a single step of size $\eta_1=0$, and $\P_1 = \Nml(0,T\I) = \P$, so that $W_2(\P_1,\P)=0$ and the bound holds trivially. This degenerate case is of no interest, but covering it allows Theorem~\ref{thm:diffusion_spectral_bound} to be stated for all $N\ge1$. We assume $T>\delta$ from now on.

    Proposition~\ref{prop:diffusion_param_bound} then gives
    \begin{equation}
    \label{eq:diffusion_majo_proposition_bis}
    W_2(\P_N,\P)\le\frac{\sqrt{\Tr((\Sigma+\delta\I)^3\Sigma^2)}}{T^2} + \frac{\sqrt{\Tr\Sigma+4d\delta}}{4N}\Big(\frac{T}{\delta}\Big)^{1/N}\log\frac{T}{\delta} + \sqrt{d\delta},
    \end{equation}
    and we bound the three terms in turn.

    \emph{Initialization.} With the value of $T$,
    \begin{equation}
    \label{eq:diffusion_majo_init_HP}
    \frac{\sqrt{\Tr((\Sigma+\delta\I)^3\Sigma^2)}}{T^2} = \frac{\sqrt{\Tr\Sigma}}{N} \times \sqrt{\frac{\Tr((\Sigma+\delta\I)^3\Sigma^2)}{\Tr(\Sigma^5)}}.
    \end{equation}
    Expanding the binomial and inserting the value of $\delta$,
    \[
    \Tr((\Sigma+\delta\I)^3\Sigma^2) = \sum_{j=0}^3 \binom{3}{j}\delta^j\Tr(\Sigma^{5-j}) = \sum_{j=0}^3 \binom{3}{j} \left(\frac{\Tr\Sigma}{dN^2}\right)^j \Tr(\Sigma^{5-j}).
    \]
    By H\"older's inequality again, $\Tr(\Sigma^k) \leq d^{1-k/5}\Tr(\Sigma^5)^{k/5}$ for $1\le k\le5$; applied to $k=1$ and $k=5-j$, this gives $(\Tr\Sigma/d)^j\le d^{-j/5}\Tr(\Sigma^5)^{j/5}$ and $\Tr(\Sigma^{5-j})\le d^{j/5}\Tr(\Sigma^5)^{1-j/5}$, so the powers of $d$ cancel and
    \[
    \Tr((\Sigma+\delta\I)^3\Sigma^2) \leq \sum_{j=0}^3 \binom{3}{j} \frac{1}{N^{2j}}\Tr(\Sigma^5) = \left(1+\frac{1}{N^2}\right)^3\Tr(\Sigma^5).
    \]
    Returning to \eqref{eq:diffusion_majo_init_HP},
    \begin{equation*}
    \frac{\sqrt{\Tr((\Sigma+\delta\I)^3\Sigma^2)}}{T^2} \le \bigg(1+\frac1{N^2}\bigg)^{3/2}\frac{\sqrt{\Tr\Sigma}}{N} \leq \sqrt8\,\frac{\sqrt{\Tr\Sigma}}{N}.
    \end{equation*}

    \emph{Discretization.} On the one hand, $\Tr\Sigma+4d\delta = (1+4N^{-2})\Tr\Sigma$. On the other hand,
    \[
    \frac T\delta = \left(\frac{\Tr(\Sigma^5)}{\Tr\Sigma}\right)^{1/4}\frac{d N^{5/2}}{\Tr\Sigma}\le\frac{d\lambda_{\max}}{\Tr\Sigma}N^{5/2} = \rho N^{5/2},
    \]
    where we used $\Tr(\Sigma^5)\le\lambda_{\max}^4\Tr\Sigma$, and where $\rho := d\lambda_{\max}/\Tr\Sigma\in[1,d]$ is the ratio between the largest eigenvalue of $\Sigma$ and the arithmetic mean of its spectrum: the more anisotropic $\Sigma$, the larger $\rho$. Since $x\mapsto x^{1/N}\log x$ is increasing on $[1,\infty)$ and $T/\delta>1$, the discretization term is at most
    \[
    \frac{(1+4N^{-2})^{1/2}\sqrt{\Tr\Sigma}}{4N}\,(\rho N^{5/2})^{1/N}\log(\rho N^{5/2}) = \frac58\,a_N\,\frac{\sqrt{\Tr\Sigma}}{N}\,\rho^{1/N}\log(\rho^{2/5}N),
    \]
    where we wrote $\log(\rho N^{5/2}) = \frac52\log(\rho^{2/5}N)$ and $a_N := (1+4N^{-2})^{1/2}N^{5/(2N)}$ as in the statement.

    \emph{Early stopping.} Immediately, $\sqrt{d\delta} = \sqrt{\Tr\Sigma}/N$.

    Summing the three bounds in \eqref{eq:diffusion_majo_proposition_bis} gives the bound of Theorem~\ref{thm:diffusion_spectral_bound},
    \[
    W_2(\P_N,\P) \le \frac{\sqrt{\Tr\Sigma}}{N}\Big(\frac58\,a_N\,\rho^{1/N}\log(\rho^{2/5}N)+1+\sqrt{8}\Big).
    \]

    \emph{Properties of $(a_N)$.} Both factors of $a_N$ tend to $1$, so $a_N\to1$. Moreover, $(1+4N^{-2})^{1/2}$ is decreasing in $N$, and $N^{1/N}$ is decreasing for $N\ge3$ (as the study of $x\mapsto x^{1/x}$ shows), so $(a_N)_{N\ge3}$ is decreasing and
    \begin{equation}
    \label{eq:diffusion_bound_aN}
    \sup_{N\ge1}a_N = \max\big(a_1,a_2,a_3\big) = \max\Big(\sqrt5,\ \sqrt2\times2^{5/4},\ \tfrac13\sqrt{13}\times3^{5/6}\Big) = 2^{7/4}.
    \end{equation}
    In particular $\frac58a_N\le5/2^{5/4}$, which yields a version of the bound with fixed numerical constants.
\end{proof}

We now turn to Proposition~\ref{prop:diffusion_coarse_bound}, which we restate for convenience.

\DiffusionRatesCoarse*

\begin{proof}
Let $N\ge1$ and set $T=\lambda_{\max}\sqrt N$ and $\delta = \lambda_{\max}/N^2$ as in the statement. The condition $T>\delta$ of Proposition~\ref{prop:diffusion_param_bound} holds as soon as $N\ge2$.
For $N=1$, we have $T=\delta$; as in the proof of Theorem~\ref{thm:diffusion_spectral_bound}, the algorithm performs a single step of size $\eta_1=0$, and $\P_1 = \Nml(0,\lambda_{\max}\I)$. Hence, $W_2(\P_1,\P)\le\sqrt{d\lambda_{\max}}$ and the bound of Proposition~\ref{prop:diffusion_coarse_bound} holds. As before, this degenerate case only serves to state the proposition for all $N\ge1$.

From now on, we assume $N\ge2$, so that $T>\delta$ and Proposition~\ref{prop:diffusion_param_bound} applies:
\begin{equation}
\label{eq:diffusion_majo_proposition_ter}
    W_2(\P_N,\P)\le\frac{\sqrt{\Tr((\Sigma+\delta\I)^3\Sigma^2)}}{T^2} + \frac{\sqrt{\Tr\Sigma+4d\delta}}{4N}\Big(\frac{T}{\delta}\Big)^{1/N}\log\frac{T}{\delta} + \sqrt{d\delta}.
\end{equation}
We bound the three terms in turn.

\emph{Initialization.} Bounding each eigenvalue $\lambda_i$ by $\lambda_{\max}$, we have
\[
\frac{\sqrt{\Tr((\Sigma+\delta\I)^3\Sigma^2)}}{T^2} \le \frac{\sqrt{(1+N^{-2})^3d\lambda_{\max}^5}}{\lambda_{\max}^2N} =(1+N^{-2})^{3/2}\frac{\sqrt{d\lambda_{\max}}}N.
\]
Since $N\ge2$, we get
\[
\frac{\sqrt{\Tr((\Sigma+\delta\I)^3\Sigma^2)}}{T^2} \le \Big(\frac54\Big)^{3/2}\frac{\sqrt{d\lambda_{\max}}}{N}.
\]

\emph{Discretization.} Since $T/\delta = N^{5/2}$, using $\Tr\Sigma\le d\lambda_{\max}$ gives
\begin{align*}
\frac{\sqrt{\Tr\Sigma+4d\delta}}{4N}\Big(\frac{T}{\delta}\Big)^{1/N}\log\frac{T}{\delta} &\le \frac{\sqrt{d\lambda_{\max}(1+4N^{-2})}}{4N}N^{5/(2N)}\log(N^{5/2})\\ &= \frac58(1+4N^{-2})^{1/2}N^{5/(2N)}\sqrt{d\lambda_{\max}}\frac{\log N}N \\
&=\frac58a_N\sqrt{d\lambda_{\max}}\frac{\log N}{N} \\
&\le \frac5{2^{5/4}}\sqrt{d\lambda_{\max}}\frac{\log N}N,
\end{align*}
where the last inequality is \eqref{eq:diffusion_bound_aN}.

\emph{Early stopping.} Immediately, $\sqrt{d\delta} = \sqrt{d\lambda_{\max}}/N$.

Summing the three bounds in \eqref{eq:diffusion_majo_proposition_ter} then yields the announced bound.
\end{proof}

The equivalent \eqref{eq:spectral_equivalent}, on which the asymptotic exactness claimed in Theorem~\ref{thm:diffusion_spectral_bound} rests, is the object of the next section.

\subsection{Asymptotic equivalent}
\label{app:diffusion_equivalent_asymptotique}
We prove the following proposition, of which the asymptotic equivalent of Theorem~\ref{thm:diffusion_spectral_bound} is a special case. It also covers the hyperparameters of Proposition~\ref{prop:diffusion_coarse_bound}, namely $A=B=\lambda_{\max}$, so that the asymptotic equivalent \eqref{eq:spectral_equivalent} holds in that case as well.

\begin{proposition}
\label{prop:diffusion_equiv_asym_general}
    Let $A,B>0$ be constants independent of $N$. With hyperparameters of the form
    \[
    T = A\sqrt{N},\quad\delta = \frac B{N^2},
    \]
    and the geometric discretization $\tau_k = T(\delta/T)^{k/N}$, the error of the diffusion algorithm satisfies
    \[
    W_2(\P_N,\P) \underset{N\to\infty}{\sim}\frac58\sqrt{\Tr\Sigma}\,\frac{\log N}{N}.
    \]
\end{proposition}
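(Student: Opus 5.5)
I will work directly with the exact eigenvalues of Remark~\ref{rem:diffusion_forme_close_vpr} rather than with the triangle inequality \eqref{eq:diffusion_decomp_W2}: write
\[
\lambda^{(N)}-\lambda = \big(\lambda^{(N)}-\tilde\lambda^{(N)}\big) + \big(\tilde\lambda^{(N)}-(\lambda+\delta)\big) + \delta
= -\Big(\tfrac{\lambda+\delta}{\lambda+T}\Big)^2\lambda + (\lambda+\delta)^2\Delta + \delta,
\]
using \eqref{eq:diffusion_init_error_diff_vpr} and \eqref{eq:diffusion_initialization_error_comp_vpr}. With $T=A\sqrt N$ and $\delta=B/N^2$, the first term is $O(1/N)$ (in fact $\lambda^3/(A^2N)+o(1/N)$) and the last is $O(N^{-2})$. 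So the whole question is an exact first-order expansion of the quadrature error $\Delta$, and I expect to show $(\lambda+\delta)^2\Delta\sim \frac{5}{4}\lambda\frac{\log N}{N}$, independently of $A,B$. Once this holds, $\lambda^{(N)}\to\lambda$, and as in \eqref{eq:ULA_W2_generic_equiv},
\[
W_2^2(\P_N,\P)\sim\frac14\sum_i\frac{(\lambda_i^{(N)}-\lambda_i)^2}{\lambda_i}\sim\frac14\cdot\frac{25}{16}\frac{\log^2N}{N^2}\sum_i\lambda_i,
\]
which gives $\frac58\sqrt{\Tr\Sigma}\log N/N$. The finite sum over $i$ lets me pass equivalents through term by term.

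For $\Delta$, I start from \eqref{eq:diffusion_discr_error_delta_inter_0}: with $q=(\delta/T)^{1/N}$,
\[
\Delta=(q^{-1}-1)\sum_{j=1}^N f(\tau_j)\big(G(\tau_j)-G(\tau_{j-1})\big).
\]
Since $\log(T/\delta)=\frac52\log N+O(1)$, we have $\frac1N\log(T/\delta)\to0$, hence $q^{-1}-1\sim\frac1N\log\frac T\delta\sim\frac52\frac{\log N}{N}$. I then need to show that the Riemann--Stieltjes-type sum converges to $\int_0^\infty \frac{\tau\,\dd\tau}{(\lambda+\tau)^3}=\frac1{2\lambda}$, the value of the antiderivative computed in Lemma~\ref{lem:diffusion_discretization_error} taken between $0$ and $\infty$. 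The upper bound is already \eqref{eq:diffusion_discr_error_majo_delta_integrale}, and its right-hand side tends to $(q^{-1}-1)/(2\lambda)$ since $\delta\to0$ and $T\to\infty$. For the matching lower bound, I use the monotonicity of $f$ the other way: $f(\tau_j)\ge f(\tau_{j-1})\,\frac{f(\tau_j)}{f(\tau_{j-1})}$, and $f(q\tau)/f(\tau)=q(\lambda+\tau)/(\lambda+q\tau)\ge q\to1$. This gives
\[
\Delta\ge q\,(q^{-1}-1)\int_\delta^T\frac{\tau\,\dd\tau}{(\lambda+\tau)^3}.
\]
Hence $\Delta\sim\frac52\frac{\log N}{N}\cdot\frac1{2\lambda}$, and therefore $(\lambda+\delta)^2\Delta\sim\frac54\lambda\frac{\log N}N$, as claimed.

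The main obstacle is this two-sided control of the sum. The mesh of the geometric grid is not uniformly small; the steps near $T$ have size $\eta_k\approx(q^{-1}-1)T\sim\log N/\sqrt N$, which is large. This is why I compare ratios $f(\tau_j)/f(\tau_{j-1})$ rather than increments. The ratio bound $q\to1$ is uniform over the whole grid, which should make the sandwich clean. I also need to check that the correction terms are $o(\log N/N)$. The initialization term is $\Theta(1/N)$, so it is negligible only by the factor $\log N$. This is exactly why the constants $A$ and $B$ do not affect the equivalent, and why the bound of Theorem~\ref{thm:diffusion_spectral_bound} is asymptotically exact only up to the $O(1/N)$ terms.
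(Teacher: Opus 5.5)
Your proposal is correct and follows the paper's proof essentially step for step. It uses the same split of $\lambda^{(N)}-\lambda$ into an $O(1/N)$ initialization part, $(\lambda+\delta)^2\Delta$, and $\delta$; the same upper bound on $\Delta$ from \eqref{eq:diffusion_discr_error_majo_delta_integrale}; and the same lower bound $\Delta\ge(1-q)\int_\delta^T\tau(\lambda+\tau)^{-3}\dd\tau$ obtained from $f(\tau_j)\ge qf(\tau_{j-1})$. One small correction to your aside, which does not affect the proof: the largest step $\eta_1=(1-q)T\sim\frac52A\log N/\sqrt N$ tends to $0$, so the mesh is in fact uniformly small, and the ratio argument is simply a clean way to handle the non-uniform grid on the growing interval $[\delta,T]$.
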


\begin{proof}
Let $T = A\sqrt N$, $\delta=BN^{-2}$ and $\tau_k = T(\delta/T)^{k/N}$ as in the statement, and take $N$ large enough that $T>\delta$. By Lemma~\ref{lem:W2_gaussian},
\begin{equation}
\label{eq:diffusion_equiv_W2_vpr}
W_2^2(\P_N,\P) = \sum_{i=1}^d \Big(\sqrt{\li^{(N)}}-\sqrt{\li}\Big)^2
= \sum_{i=1}^d \Bigg(\frac{\li^{(N)}-\li}{\sqrt{\li^{(N)}}+\sqrt{\li}}\Bigg)^2,
\end{equation}
so we need an equivalent of $\lambda^{(N)}-\lambda$, which we split as
\[
\lambda^{(N)}-\lambda = (\lambda^{(N)}-\tilde\lambda^{(N)})+(\tilde\lambda^{(N)}-(\lambda+\delta))+\delta.
\]
By \eqref{eq:diffusion_init_error_diff_vpr}, the first term is $-\big(\frac{\lambda+\delta}{\lambda+T}\big)^2\lambda = O(T^{-2}) = O(N^{-1})$, and the third is $\delta = O(N^{-2})$. Hence
\begin{equation}
\label{eq:diffusion_equiv_diff_vpr_1}
\lambda^{(N)}-\lambda = \tilde\lambda^{(N)}-(\lambda+\delta) + O\left(\frac1N\right),
\end{equation}
and by \eqref{eq:diffusion_initialization_error_comp_vpr} the remaining term is
\begin{equation}
\label{eq:diffusion_equiv_diff_vpr_2}
\tilde\lambda^{(N)}-(\lambda+\delta) = (\lambda+\delta)^2\Delta.
\end{equation}
Everything thus reduces to an equivalent of the quadrature error $\Delta$, which we obtain by squeezing it between two quantities with the same asymptotics.

\emph{Upper bound.} By \eqref{eq:diffusion_discr_error_majo_delta_integrale}, since the integrand is positive,
\begin{align*}
\Delta &\le (q^{-1}-1)\int_\delta^T \frac{\tau\dd\tau}{(\lambda+\tau)^3}\le(q^{-1}-1)\int_0^\infty \frac{\tau\dd\tau}{(\lambda+\tau)^3} \\
&= (q^{-1}-1)\left[\frac\lambda{2(\lambda+\tau)^2} - \frac1{\lambda+\tau}\right]_0^\infty = \frac{q^{-1}-1}{2\lambda},
\end{align*}
and, since $\frac1N\log\frac{AN^{5/2}}{B}\to0$,
\begin{equation}
\label{eq:diffusion_DL}
q^{-1}-1 = \left(\frac {AN^{5/2}}B\right)^{1/N}-1 = \exp\left(\frac1N\log\frac{AN^{5/2}}B\right)-1\sim\frac1N\log\frac{AN^{5/2}}B\sim\frac52\frac{\log N}N.
\end{equation}

\emph{Lower bound.} Since $q\le1$ and $\tau_j = q\tau_{j-1}$, for every $j\ge1$,
\[
f(\tau_j) = \frac{q\tau_{j-1}}{\lambda+q\tau_{j-1}}\ge\frac{q\tau_{j-1}}{\lambda+\tau_{j-1}} = qf(\tau_{j-1}),
\]
where $f:\tau\mapsto\frac{\tau}{\lambda+\tau}$ is the increasing function of Section~\ref{app:diffusion_param_bound}. Returning to \eqref{eq:diffusion_discr_error_delta_inter_0} and using $(q^{-1}-1)q = 1-q$,
\begin{align*}
\Delta &\ge (1-q)\sum_{j=1}^Nf(\tau_{j-1})\left(\frac1{\lambda+\tau_j}-\frac1{\lambda+\tau_{j-1}}\right)
= (1-q)\sum_{j=1}^Nf(\tau_{j-1})\int_{\tau_{j}}^{\tau_{j-1}} \frac{\dd\tau}{(\lambda+\tau)^2} \\
&\ge (1-q)\sum_{j=1}^N\int_{\tau_j}^{\tau_{j-1}} \frac{f(\tau)\dd\tau}{(\lambda+\tau)^2}
=(1-q)\int_\delta^T \frac{\tau\dd\tau}{(\lambda+\tau)^3},
\end{align*}
the second inequality using $f(\tau)\le f(\tau_{j-1})$ on $[\tau_j,\tau_{j-1}]$. As $\delta\to0$ and $T\to\infty$, the integrability of $\tau\mapsto\tau(\lambda+\tau)^{-3}$ gives
\[
\int_\delta^T \frac{\tau\dd\tau}{(\lambda+\tau)^3} \underset{N\to\infty}{\longrightarrow} \int_0^\infty \frac{\tau\dd\tau}{(\lambda+\tau)^3} = \frac1{2\lambda},
\]
while $q\to1$ and \eqref{eq:diffusion_DL} give $1-q = q(q^{-1}-1)\sim\frac52\frac{\log N}N$.

\emph{Conclusion.} We have shown
\[
\underbrace{(1-q)\int_\delta^T \frac{\tau\dd\tau}{(\lambda+\tau)^3}}_{\sim\frac5{4\lambda}\frac{\log N}N} \le \Delta \le \underbrace{\frac{q^{-1}-1}{2\lambda}}_{\sim\frac5{4\lambda}\frac{\log N}N},
\quad\text{hence}\quad
\Delta \underset{N\to\infty}{\sim} \frac{5}{4\lambda}\frac{\log N}N.
\]
Plugging this into \eqref{eq:diffusion_equiv_diff_vpr_2} and then \eqref{eq:diffusion_equiv_diff_vpr_1}, and using $\delta\to0$,
\[
\lambda^{(N)}-\lambda\sim\tilde\lambda^{(N)}-(\lambda+\delta) \sim \frac{5\lambda}4\frac{\log N}N.
\]
In particular $\lambda^{(N)}\to\lambda$, so the denominators in \eqref{eq:diffusion_equiv_W2_vpr} tend to $2\sqrt\lambda$ and
\[
W_2^2(\P_N,\P) \sim \sum_{i=1}^d \frac{(\li^{(N)}-\li)^2}{4\li} \sim \bigg(\frac58\frac{\log N}N\bigg)^2\sum_{i=1}^d\li = \bigg(\frac58\frac{\log N}N\bigg)^2\Tr\Sigma,
\]
which is the announced equivalent.
\end{proof}

\begin{proof}[Proof of Theorem~\ref{thm:diffusion_spectral_bound}, asymptotic equivalent and exactness]
The hyperparameters of Theorem~\ref{thm:diffusion_spectral_bound} are of the form of Proposition~\ref{prop:diffusion_equiv_asym_general} with $A = (\Tr(\Sigma^5)/\Tr\Sigma)^{1/4}$ and $B = \Tr\Sigma/d$, which gives the equivalent \eqref{eq:spectral_equivalent}. Dividing the non-asymptotic bound by $\frac58\sqrt{\Tr\Sigma}\frac{\log N}{N}$, the ratio of the bound to the error is asymptotically
\[
a_N\,\rho^{1/N}\,\frac{\log(\rho^{2/5}N)}{\log N}+\frac{8(1+\sqrt8)}{5\log N}
= a_N\,\rho^{1/N}\bigg(1+\frac{2\log\rho}{5\log N}\bigg)+\frac{8(1+\sqrt8)}{5\log N}
\underset{N\to\infty}{\longrightarrow}1
\]
for fixed $\Sigma$, since $a_N\to1$, $\rho^{1/N}\to1$ and $\rho$ does not depend on $N$. The bound is therefore asymptotically exact, which completes the proof of Theorem~\ref{thm:diffusion_spectral_bound}.
\end{proof}

\section{Proof of Corollary~\ref{cor:complexity} (\texorpdfstring{$\bm{\eps}$}{epsilon}-complexity)}
\label{app:cor_complexity_proof}
This section proves the following corollary, stated in Section~\ref{results_diffusion}.

\complexity*

The three $\eps$-complexities follow from the parametric bounds of Propositions~\ref{prop:ULA_uniform_bound}, \ref{prop:ULD_uniform_bound} and~\ref{prop:diffusion_param_bound} respectively.

\begin{proof}
Let $\eps\le\sqrt{\Tr\Sigma}$. In each case we choose the hyperparameters and the budget $N$ so that the error terms are all at most a fixed fraction of $\eps$.

\paragraph{ULA.}
By Proposition~\ref{prop:ULA_uniform_bound}, for every $0<\eta\le\lambda_{\min}$ and every $N\ge0$,
\begin{equation}
\label{eq:complexity_ULA}
    W_2(\P_N,\P)\le \sqrt{2\Tr\Sigma}\,e^{-2\eta N/\lambda_{\max}}+\frac12\sqrt{\Tr(\Sigma^{-1})}\,\eta,
\end{equation}
and we make each term at most $\eps/2$. For the discretization term,
\begin{equation}
\label{eq:complexite_ULA_discr}
\frac12\sqrt{\Tr(\Sigma^{-1})}\,\eta\le\frac\eps2
\iff
\eta\le\frac\eps{\sqrt{\Tr(\Sigma^{-1})}}.
\end{equation}
We take
\begin{equation}
\label{eq:complexite_ULA_eta}
\eta=\min\bigg(\lambda_{\min},\,\frac\eps{\sqrt{\Tr(\Sigma^{-1})}}\bigg),
\end{equation}
which is admissible for Proposition~\ref{prop:ULA_uniform_bound} and saturates \eqref{eq:complexite_ULA_discr} whenever possible: it is the largest admissible step-size, hence the one that makes the exponential factor decay fastest. For the early-stopping term,
\[
\sqrt{2\Tr\Sigma}\,e^{-2\eta N/\lambda_{\max}}\le\frac\eps2
\iff
N\ge\frac{\lambda_{\max}}{2\eta}\log\frac{\sqrt{8\Tr\Sigma}}{\eps},
\]
and with \eqref{eq:complexite_ULA_eta},
\[
\frac{\lambda_{\max}}{2\eta}
=\frac{\lambda_{\max}}{2}\max\bigg(\frac{1}{\lambda_{\min}},\frac{\sqrt {\Tr(\Sigma^{-1})}}{\eps}\bigg)
=\frac\kappa2\max\bigg(1,\frac{\lambda_{\min}\sqrt{\Tr(\Sigma^{-1})}}{\eps}\bigg).
\]
This is the announced condition
\[
N\ge\frac\kappa2\max\bigg(1,\frac{\lambda_{\min}\sqrt{\Tr(\Sigma^{-1})}}{\eps}\bigg)\log\frac{\sqrt{8\Tr\Sigma}}{\eps}.
\]

\paragraph{ULD.}
By the first bound of Proposition~\ref{prop:ULD_uniform_bound}, for every $\gamma\ge\sqrt{8/\lambda_{\min}}$, every $\eta\le1/\gamma$ and every $N\ge0$,
\[
W_2(\P_N,\P)\leq8\sqrt{\Tr\Sigma}\,e^{-2\eta N/(\lambda_{\max}\gamma)}+\frac67\sqrt{\Tr(\Sigma^{-1})}\,\frac\eta\gamma.
\]
This bound depends on $(\eta,\gamma)$ only through $\phi:=\eta/\gamma$, and the constraints translate into $\phi\le1/\gamma^2\le\lambda_{\min}/8$; conversely, every $\phi\in(0,\lambda_{\min}/8]$ is reached by an admissible pair, for instance $\gamma=\sqrt{8/\lambda_{\min}}$ and $\eta=\phi\gamma$. We therefore work with the bound
\[
W_2(\P_N,\P)\leq8\sqrt{\Tr\Sigma}\,e^{-2\phi N/\lambda_{\max}}+\frac67\sqrt{\Tr(\Sigma^{-1})}\,\phi,\quad\phi\le\frac{\lambda_{\min}}8.
\]
Up to some numerical constants, this bound has the same form as in \eqref{eq:complexity_ULA}. We proceed as before and obtain the condition of Corollary~\ref{cor:complexity}, namely
\[
N\ge4\kappa\max\bigg(1,\frac3{14}\frac{\lambda_{\min}\sqrt{\Tr(\Sigma^{-1})}}{\eps}\bigg)\log\frac{16\sqrt{\Tr\Sigma}}\eps,
\]
with
\[
\eta = \min\bigg(\sqrt{\frac{\lambda_{\min}}8}, \frac{7\sqrt2}6\frac\eps{\sqrt{\lambda_{\min}\Tr(\Sigma^{-1})}}\bigg), \quad \gamma = \sqrt\frac8{\lambda_{\min}}.
\]

\paragraph{Diffusion.}
We show the more general statement of Remark~\ref{rem:variation_complexity}, namely that, with
\[
\delta = \eps^2/(9d),\quad T = \sqrt{3/\eps}\Tr\big((\Sigma+\delta\I)^3\Sigma^2\big)^{1/4}
\]
and the geometric grid $\tau_k = T(\delta/T)^{k/N}$, an error $\eps$ is achieved by taking, for any $r\ge1$,
\[
N\ge r\max\bigg(1,\frac{3e^{1/r}}{4r}\frac{\sqrt{\Tr\Sigma+4\eps^2/9}}\eps\bigg)\log\frac T\delta.
\]
By Proposition~\ref{prop:diffusion_param_bound}, for every $N\ge1$ and every $T>\delta>0$,
\begin{equation*}
W_2(\P_N,\P)\le\frac{\sqrt{\Tr\big((\Sigma+\delta\I)^3\Sigma^2\big)}}{T^2}
+\frac{\sqrt{\Tr\Sigma+4d\delta}}{4N}\Big(\frac T\delta\Big)^{1/N}\log\frac T\delta
+\sqrt{d\delta},
\end{equation*}
and we make each of the three terms at most $\eps/3$.

\emph{Early stopping.} We have
\[
\sqrt{d\delta}\le\frac\eps3 \Longleftrightarrow \delta\le\frac{\eps^2}{9d}.
\]
We saturate this constraint by taking
\begin{equation}
\label{eq:complexite_diffusion_delta}
\delta = \frac{\eps^2}{9d}.
\end{equation}

\emph{Initialization.} We have
\[
\frac{\Tr\big((\Sigma+\delta\I)^3\Sigma^2\big)^{1/2}}{T^2} \le \frac\eps3 \Longleftrightarrow T\ge\sqrt\frac3\eps\Tr\big((\Sigma+\delta\I)^3\Sigma^2\big)^{1/4},
\]
so we take
\begin{equation}
\label{eq:complexite_diffusion_T}
T=\sqrt\frac3\eps\Tr\big((\Sigma+\delta\I)^3\Sigma^2\big)^{1/4}.
\end{equation}
This choice is compatible with the constraint $T>\delta$. Indeed, with \eqref{eq:complexite_diffusion_delta} and \eqref{eq:complexite_diffusion_T},
\begin{equation}
\label{eq:complexity_diffusion_ratio}
\frac T\delta = \Big(\frac3\eps\Big)^{5/2}d\Tr\big((\Sigma+\delta\I)^3\Sigma^2\big)^{1/4} > \Big(\frac3\eps\Big)^{5/2}d\Tr(\Sigma^5)^{1/4}.
\end{equation}
But, by Hölder's inequality, $\Tr(\Sigma^5)\ge(\Tr\Sigma)^5/d^4$, so \eqref{eq:complexity_diffusion_ratio} yields
\begin{equation}
\label{eq:complexity_diffusion_ratio_bis}
\frac T\delta > 3^{5/2}\left(\frac{\Tr\Sigma}{\eps^2}\right)^{5/4}>1,
\end{equation}
since we assumed $\eps\le\sqrt{\Tr\Sigma}$.

\emph{Discretization.} It remains to ensure
\[
E:=\frac{\sqrt{\Tr\Sigma+4d\delta}}{4N}\Big(\frac T\delta\Big)^{1/N}\log\frac T\delta\le\frac\eps3.
\]
The factor $(T/\delta)^{1/N}$ is what prevents solving $E\le\eps/3$ in closed form; but it tends to 1 as $N$ grows, and it is under control as soon as $N$ exceeds a few multiples of $\log(T/\delta)$. Precisely, for any $r\ge1$,
\begin{equation}
\label{eq:complexity_diffusion_floor}
    \Big(\frac T\delta\Big)^{1/N}\le e^{1/r} \Longleftrightarrow N\ge r\log\frac T\delta.
\end{equation}
Under \eqref{eq:complexity_diffusion_floor} for a given $r\ge1$,
\[
E\le \frac{e^{1/r}}4\frac{\sqrt{\Tr\Sigma+4d\delta}}N\log\frac T\delta,
\]
and
\[
\frac{e^{1/r}}4\frac{\sqrt{\Tr\Sigma+4d\delta}}N\log\frac T\delta \le \frac\eps3 \Longleftrightarrow N\ge \frac{3e^{1/r}}4\frac{\sqrt{\Tr\Sigma+4d\delta}}\eps\log\frac T\delta.
\]
Both requirements are met as soon as
\begin{equation}
\label{eq:complexity_diffusion_max}
N\ge\max\bigg(r\log\frac T\delta,\frac{3e^{1/r}}4\frac{\sqrt{\Tr\Sigma+4d\delta}}\eps\log\frac T\delta\bigg) = r\max\bigg(1,\frac{3e^{1/r}}{4r}\frac{\sqrt{\Tr\Sigma+4\eps^2/9}}\eps\bigg)\log\frac T\delta,
\end{equation}
which is the budget of Remark~\ref{rem:variation_complexity}. The parameter $r$ tunes a trade-off: a larger $r$ brings the constant $3e^{1/r}/4$ down towards $3/4$, at the price of a higher floor $r\log(T/\delta)$.

Taking $r=1$, since $\sqrt{\Tr\Sigma + 4\eps^2/9}\ge \frac23\eps$, the second argument of the maximum in \eqref{eq:complexity_diffusion_max} is at least $\frac{3e}4\times\frac23=\frac e2>1$. The maximum is therefore always attained by the second argument, and \eqref{eq:complexity_diffusion_max} reduces to
\begin{equation}
\label{eq:complexity_diffusion_N1}
N\ge\frac{3e}4\frac{\sqrt{\Tr\Sigma+4\eps^2/9}}{\eps}\log \frac T\delta,
\end{equation}
the budget of Corollary~\ref{cor:complexity}.
\end{proof}

\begin{remark}[Order of magnitude of $\log(T/\delta)$]
    We can make the logarithmic factor in \eqref{eq:complexity_diffusion_max} explicit. By \eqref{eq:complexity_diffusion_ratio_bis}, we have the lower bound
    \[
    \log\frac T\delta \ge \frac52\log\frac{3\sqrt{\Tr\Sigma}}\eps.
    \]
    For the upper bound, with \eqref{eq:complexite_diffusion_delta} we have
    \[
    \delta \le \frac{\Tr\Sigma}{9d}\le\frac19\lambda_{\max},
    \]
    whence
    \[
    \Tr((\Sigma+\delta\I)^3\Sigma^2)\le(\lambda_{\max}+\delta)^3\lambda_{\max}\Tr\Sigma\le\Big(\frac{10}9\Big)^3\lambda_{\max}^4\Tr\Sigma.
    \]
    With $\rho=d\lambda_{\max}/\Tr\Sigma\in[1,d]$ as in Theorem~\ref{thm:diffusion_spectral_bound}, \eqref{eq:complexity_diffusion_ratio} yields
    \[
    \frac T\delta\le\Big(\frac{10}9\Big)^{3/4}\Big(\frac3\eps\Big)^{5/2}d\lambda_{\max}(\Tr\Sigma)^{1/4} = 3^{5/2}\Big(\frac{10}9\Big)^{3/4}\rho\,\Big(\frac{\sqrt{\Tr\Sigma}}\eps\Big)^{5/2}.
    \]
    Since $\log\big(3^{5/2}(10/9)^{3/4}\big)\approx2.83<3$, we finally obtain
    \begin{equation}
    \label{eq:complexity_diffusion_magnitude_log}
    \frac52\log\frac{3\sqrt{\Tr\Sigma}}\eps\ \le\ \log\frac T\delta\ \le\ \frac52\log\frac{\sqrt{\Tr\Sigma}}\eps+\log\rho+3.
    \end{equation}
In particular, $\log(T/\delta)=\Theta(\log(\sqrt{\Tr\Sigma}/\eps))$ for fixed $\Sigma$. Combined with $\sqrt{\Tr\Sigma+4\eps^2/9}\le\frac{\sqrt{13}}3\sqrt{\Tr\Sigma}$, equation~\eqref{eq:complexity_diffusion_N1} and the upper bound in \eqref{eq:complexity_diffusion_magnitude_log} lead to the fully explicit budget condition
\[
N\ge \frac{\sqrt{13}e}{4}\,\frac{\sqrt{\Tr\Sigma}}\eps\bigg(\frac52\log\frac{\sqrt{\Tr\Sigma}}\eps+\log\rho+3\bigg),
\]
of order $\tilde{O}(\sqrt{\Tr\Sigma}/\eps)$ with no dependence on $\kappa$.
\end{remark}

\section{Proof of Theorem~\ref{thm:implicit_bias} (score learning)}

This section proves the following theorem, stated in Section~\ref{learning}.

\LearningScore*

The proof is a direct computation: we first write both losses in closed form and derive their gradients, then compute the iterates $A_K$ and $A^\sigma_K$ explicitly, and finally invert these expressions.

\subsection{Losses and gradients}
We treat the two losses simultaneously by introducing, for $t\ge0$,
\[
L^{(t)}(A) = \frac1m\sum_{i=1}^m\E_\omega[\nabla\cdot s_A(x_i+\omega)+\frac12\Norm{s_A(x_i+\omega)}^2],
\]
with $\omega \sim \Nml(0,t\I)$. Hence, $L = L^{(0)}$ and $L_\sigma=L^{(\sigma^2)}$. We first compute the losses and their gradients:
\begin{lemma}
\label{lem:grad_loss}
    For all $t\ge0$ and $A\in \R^{d\times d}$, we have
    \[
    L^{(t)}(A) = \frac12\Tr(A\hat\Sigma A^\top)-\Tr A + \frac t2\Norm{A}_F^2,
    \]
    with gradient
    \[
    \nabla L^{(t)}(A) = A(\hat\Sigma+t\I)-\I.
    \]
\end{lemma}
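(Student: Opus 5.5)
The computation is direct, and I will do it first for $t=0$ and then add the effect of the noise. For a single sample $x$, the model $s_A(x)=-Ax$ has divergence $\nabla\cdot s_A(x)=-\Tr A$, which does not depend on $x$. Its squared norm is $\Norm{s_A(x)}^2=x^\top A^\top A x=\Tr(Ax x^\top A^\top)$. Averaging over the dataset and using linearity of the trace gives
\[
L(A)=-\Tr A+\tfrac12\Tr(A\hat\Sigma A^\top).
\]

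For general $t\ge0$, I replace $x_i$ by $x_i+\omega$ with $\omega\sim\Nml(0,t\I)$ independent of the data. The divergence term is still $-\Tr A$. For the quadratic term, I expand
\[
\E_\omega\big[(x_i+\omega)(x_i+\omega)^\top\big]=x_ix_i^\top+t\I,
\]
since $\omega$ is centered, so the cross terms vanish. This gives
\[
\E_\omega\Norm{A(x_i+\omega)}^2=\Tr(Ax_ix_i^\top A^\top)+t\Tr(AA^\top).
\]
Since $\Tr(AA^\top)=\Norm{A}_F^2$, averaging over $i$ yields the claimed closed form
\[
L^{(t)}(A)=\tfrac12\Tr(A\hat\Sigma A^\top)-\Tr A+\tfrac t2\Norm{A}_F^2,
\]
which is the same as $\tfrac12\Tr\big(A(\hat\Sigma+t\I)A^\top\big)-\Tr A$. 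Interchanging expectation and trace is harmless here, because everything is a finite-dimensional quadratic in Gaussian variables.

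For the gradient, I use the Frobenius inner product $\langle X,Y\rangle=\Tr(X^\top Y)$. The relevant matrix-calculus identities are
\[
\nabla_A\Tr A=\I,\qquad \nabla_A\tfrac12\Tr(AMA^\top)=AM \quad\text{for symmetric } M.
\]
To check the second one, I perturb $A$ to $A+H$ and keep the first-order term: it is $\tfrac12\big(\Tr(HMA^\top)+\Tr(AMH^\top)\big)=\Tr(H^\top AM)$. Applying this with $M=\hat\Sigma+t\I$, which is symmetric, gives
\[
\nabla L^{(t)}(A)=A(\hat\Sigma+t\I)-\I.
\]
There is no real obstacle in this lemma. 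The only points needing care are that the cross terms in $\omega$ vanish and that the transpose convention in the gradient is consistent with the symmetry of $\hat\Sigma+t\I$.
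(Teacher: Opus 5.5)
Your proof is correct and follows essentially the same route as the paper. Like the paper, you expand the expected loss directly, note that the cross terms in $\omega$ vanish, and then take matrix gradients using the symmetry of $\hat\Sigma+t\I$. The only cosmetic difference is that you bundle the ridge term into $M=\hat\Sigma+t\I$ rather than differentiating $\tfrac12\Norm{A}_F^2$ separately.
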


\begin{proof}
    Fix $t\ge0$, $A\in\R^{d\times d}$. For any $x\in\R^d$, we have $s_A(x) = -Ax$, hence
    \begin{align*}
    L^{(t)}(A) &= \frac1m\sum_{i=1}^m\E_\omega[-\Tr A+\frac12\Norm{A(x_i+\omega)}^2] \\
    &= -\Tr A + \frac1{2m}\sum_{i=1}^m\E_\omega[\Norm{Ax_i}^2+2x_i^\top A^\top A\omega + \Norm{A\omega}^2] \\
    &= -\Tr A+\frac12\E_\omega[\omega^\top A^\top A\omega]+\frac1{2m}\sum_{i=1}^m\Tr(Ax_ix_i^\top A^\top)+\frac1m \bigg(\sum_{i=1}^mx_i^\top A^\top A\bigg)\E\omega \\
    &= -\Tr A + \frac12\Tr(t\I AA^\top)+\frac12\Tr(A\hat\Sigma A^\top) \\
    &= \frac12\Tr(A\hat\Sigma A^\top) - \Tr A + \frac t2\Norm{A}_F^2.
    \end{align*}
    Denoting
   \[
   f:A\longmapsto\frac12\Tr(A\hat\Sigma A^\top),\quad g:A\longmapsto -\Tr A,\quad h:A\longmapsto \frac 12\Norm{A}_F^2,
   \]
   we compute, using the symmetry of $\hat\Sigma$,
   \[
   \nabla f(A) = A\hat\Sigma,\quad\nabla g(A) = -\I,\quad\nabla h(A)=A.
   \]
   The gradient of $L^{(t)}$ then follows:
   \[
   \nabla L^{(t)}(A) = A(\hat\Sigma+t\I)-\I.
   \]
    
\end{proof}
Note that in particular, we retrieve the fact that
\[
\forall A,\quad L_\sigma(A) = L(A)+\frac{\sigma^2}2\Norm{A}_F^2,
\]
so the loss $L_\sigma$ is indeed a ridge-type regularization of $L$, as claimed in Section~\ref{learning}.

\subsection{Closed form of the iterates}
Let $\hat\Sigma_t = \hat\Sigma+t\I$ and denote by $(\hat\lambda_i)_{1\le i\le d}$ the eigenvalues of $\hat\Sigma$, with an orthonormal eigenbasis $(u_i)$; $\hat\Sigma_t$ has eigenvalues $\hat\lambda_i+t$ in the same basis. Moreover, denote by $\hat\Sigma^+$ the Moore--Penrose inverse of $\hat\Sigma$, \textit{i.e.}, the matrix defined by
\[
\hat\Sigma^+ = \sum_{\substack{i=1\\ \hat\lambda_i \ne 0}}^d \tfrac1{\hat\lambda_i}u_iu_i^\top.
\]

\begin{proposition}
\label{prop:learning_iterates}
    Let $\alpha>0$ be the gradient step-size. For every $K\ge0$, the iterate of gradient descent on $L^{(t)}$ is
    \[
    A^{(t)}_K = \alpha\sum_{j=0}^{K-1}(\I-\alpha\hat\Sigma_t)^j = \sum_{i=1}^d \psi_K(\hat\lambda_i+t)u_iu_i^\top,
    \]
    with
    \[
    \psi_K : x\longmapsto
    \begin{cases}
        \frac{1-(1-\alpha x)^K}x &\text{if } x\ne 0, \\
        K\alpha &\text{otherwise}.
    \end{cases}
    \]
    Hence, for $t>0$ we have
    \begin{equation}
    \label{eq:learning_iterate_A_K^t}
    A_K^{(t)} = \hat\Sigma_t^{-1}\big(\I-(\I-\alpha\hat\Sigma_t)^K\big),
    \end{equation}
    while
    \begin{equation}
    \label{eq:learning_iterate_A_K}
    A_K^{(0)} = \hat\Sigma^+\big(\I-(\I-\alpha\hat\Sigma)^K\big)+K\alpha\Pi_{\Ker\,\hat\Sigma},
    \end{equation}
    with $\hat\Sigma^+ = \hat\Sigma^{-1}$ and $\Pi_{\Ker\,\hat\Sigma} = 0$ when $\hat\Sigma$ is invertible.
\end{proposition}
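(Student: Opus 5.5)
We prove the closed form by induction on $K$, using the gradient from Lemma~\ref{lem:grad_loss}. The gradient-descent update on $L^{(t)}$ reads
\[
A^{(t)}_{K+1} = A^{(t)}_K - \alpha\big(A^{(t)}_K\hat\Sigma_t - \I\big) = A^{(t)}_K(\I-\alpha\hat\Sigma_t) + \alpha\I .
\]
With $A^{(t)}_0=0$, the induction hypothesis $A^{(t)}_K=\alpha\sum_{j=0}^{K-1}(\I-\alpha\hat\Sigma_t)^j$ gives
\[
A^{(t)}_{K+1}=\alpha\sum_{j=1}^{K}(\I-\alpha\hat\Sigma_t)^j+\alpha\I=\alpha\sum_{j=0}^{K}(\I-\alpha\hat\Sigma_t)^j,
\]
since all these matrices are polynomials in $\hat\Sigma_t$ and hence commute. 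The base case $K=0$ is the empty sum.

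Next we diagonalize in the eigenbasis $(u_i)$. Since $\hat\Sigma_t=\sum_i(\hat\lambda_i+t)u_iu_i^\top$, every power satisfies $(\I-\alpha\hat\Sigma_t)^j=\sum_i(1-\alpha(\hat\lambda_i+t))^ju_iu_i^\top$. It then suffices to evaluate the scalar geometric sum $\alpha\sum_{j=0}^{K-1}(1-\alpha x)^j$:
\begin{itemize}
\item if $x\neq0$, the ratio $1-\alpha x\neq1$ and the sum equals $\alpha\frac{1-(1-\alpha x)^K}{\alpha x}=\psi_K(x)$;
\item if $x=0$, every term equals $1$ and the sum is $K\alpha=\psi_K(0)$.
\end{itemize}
This gives $A^{(t)}_K=\sum_i\psi_K(\hat\lambda_i+t)u_iu_i^\top$.

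For the two matrix forms, we reassemble these spectral sums.
\begin{itemize}
\item \emph{Case $t>0$.} Every $\hat\lambda_i+t>0$ because $\hat\Sigma\succeq0$, so all terms fall in the first branch of $\psi_K$. Recognizing $\sum_i(\hat\lambda_i+t)^{-1}\big(1-(1-\alpha(\hat\lambda_i+t))^K\big)u_iu_i^\top$ as $\hat\Sigma_t^{-1}\big(\I-(\I-\alpha\hat\Sigma_t)^K\big)$ gives \eqref{eq:learning_iterate_A_K^t}.
\item \emph{Case $t=0$.} We split the indices according to whether $\hat\lambda_i\neq0$ or $\hat\lambda_i=0$. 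The first group gives $\sum_{\hat\lambda_i\neq0}\hat\lambda_i^{-1}\big(1-(1-\alpha\hat\lambda_i)^K\big)u_iu_i^\top$, which equals $\hat\Sigma^+\big(\I-(\I-\alpha\hat\Sigma)^K\big)$: $\hat\Sigma^+$ is diagonal in the same basis and vanishes on $\Ker\,\hat\Sigma$, so the kernel directions contribute nothing to this product. The second group gives $K\alpha\sum_{\hat\lambda_i=0}u_iu_i^\top=K\alpha\Pi_{\Ker\,\hat\Sigma}$. Together these yield \eqref{eq:learning_iterate_A_K}.
\end{itemize}
When $\hat\Sigma$ is invertible, the kernel is trivial, so $\hat\Sigma^+=\hat\Sigma^{-1}$ and the projector vanishes.

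No step is a real obstacle. The only point needing care is the bookkeeping at $t=0$: the kernel directions have to be routed to the $K\alpha$ branch, and one must check that the $\hat\Sigma^+$ term contributes nothing there.
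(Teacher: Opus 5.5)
Your proof is correct and follows the paper's route. You unroll the gradient-descent recursion $A^{(t)}_{j+1}=A^{(t)}_j(\I-\alpha\hat\Sigma_t)+\alpha\I$ into a geometric matrix sum, diagonalize in the eigenbasis $(u_i)$, and evaluate the scalar geometric sum in its two cases, while being more explicit than the paper about the induction and about why, at $t=0$, the kernel directions enter only through $K\alpha\Pi_{\Ker\,\hat\Sigma}$.
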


\begin{proof}
    Let $K\ge0$. The process $(A^{(t)}_j)_{0\le j\le K}$ is defined by the gradient descent on the loss $L^{(t)}$ with step-size $\alpha$, initialized at $A^{(t)}_0=0$:
    \[
    \begin{cases}
        \forall j,\quad A^{(t)}_{j+1} = A^{(t)}_j-\alpha\nabla L^{(t)}(A^{(t)}_j), \\
        A^{(t)}_0 = 0.
    \end{cases}
    \]
    Therefore, by Lemma~\ref{lem:grad_loss}, we have
    \[
    \forall j,\quad A^{(t)}_{j+1} = A_j^{(t)}(\I-\alpha\hat\Sigma_t) + \alpha\I,
    \]
    thus
    \begin{align*}
    A_K^{(t)} &= \alpha\sum_{j=0}^{K-1} (\I-\alpha\hat\Sigma_t)^j \\
    &=\sum_{i=1}^d\bigg(\alpha\sum_{j=0}^{K-1}\big(1-\alpha(\hat\lambda_i+t)\big)^ju_iu_i^\top\bigg).
    \end{align*}
    The equality
    \[
    A_K^{(t)} = \sum_{i=1}^d \psi_K(\hat\lambda_i+t)u_iu_i^\top
    \]
    is then a simple consequence of the identity
    \[
    \sum_{j=0}^{K-1} x^j =
    \begin{cases}
        \frac{1-x^K}{1-x} &\text{if } x\ne1, \\
        K &\text{otherwise},
    \end{cases}
    \]
    and the relations \eqref{eq:learning_iterate_A_K^t} and \eqref{eq:learning_iterate_A_K} follow.
\end{proof}

\subsection{Inverses of the iterates}
We can now prove Theorem~\ref{thm:implicit_bias}.
\begin{proof}[Proof of Theorem~\ref{thm:implicit_bias}]
Fix $K\ge1$. By Proposition~\ref{prop:learning_iterates}, we have for any $t\ge0$, $\alpha>0$:
\begin{equation*}
A^{(t)}_K = \sum_{i=1}^d\psi_K(\hat\lambda_i+t)u_iu_i^\top.
\end{equation*}
In particular, for every $\alpha<\frac2{\hat\lambda_{\max}+t}$, the eigenvalues $\psi_K(\hat\lambda_i+t)$ of $A_K^{(t)}$ are all positive, thus non-zero, and $A_K^{(t)}$ is invertible with
\begin{equation}
    \label{eq:learning_iterate_A_K_inv}
(A_K^{(t)})^{-1} = \sum_{i=1}^d\frac{u_iu_i^\top}{\psi_K(\hat\lambda_i+t)}.
\end{equation}

\paragraph{Loss $L$ ($t=0$).}
We have
\[
\psi_K(\hat\lambda_i) =
\begin{cases}
    \frac{1-(1-\alpha\hat\lambda_i)^K}{\hat\lambda_i} &\text{if }\hat\lambda_i\ne0, \\
    K\alpha &\text{otherwise},
\end{cases}
\]
equation \eqref{eq:learning_iterate_A_K_inv} therefore yields for $t=0$:
\begin{align}
    A_K^{-1} &= \sum_{\substack{i=1\\ \hat\lambda_i \ne 0}}^d \frac{u_iu_i^\top}{\psi_K(\hat\lambda_i)} + \sum_{\substack{i=1 \\ \hat\lambda_i = 0}}^d \frac{u_iu_i^\top}{\psi_K(\hat\lambda_i)} \notag \\
    &= \sum_{\substack{i=1\\ \hat\lambda_i \ne 0}}^d\frac{\hat\lambda_i}{1-(1-\alpha\hat\lambda_i)^K}u_iu_i^\top + \sum_{\substack{i=1 \\ \hat\lambda_i = 0}}^d \frac1{K\alpha}u_iu_i^\top \notag \\
    &= \sum_{\substack{i=1\\ \hat\lambda_i \ne 0}}^d\hat\lambda_iu_iu_i^\top + \sum_{\substack{i=1\\ \hat\lambda_i \ne 0}}^d\frac{\hat\lambda_i(1-\alpha\hat\lambda_i)^K}{1-(1-\alpha\hat\lambda_i)^K}u_iu_i^\top +\frac1{K\alpha}\Pi_{\Ker\,\hat\Sigma} \notag \\
    &= \hat\Sigma+\frac1{K\alpha}\Pi_{\Ker\,\hat\Sigma} + \sum_{\substack{i=1\\ \hat\lambda_i \ne 0}}^d\frac{\hat\lambda_i(1-\alpha\hat\lambda_i)^K}{1-(1-\alpha\hat\lambda_i)^K}u_iu_i^\top. \label{eq:learning_iterate_A_K_inv_bis}
\end{align}
Denote by $R_K$ the last sum. It is a symmetric matrix, diagonal in the basis $(u_i)$, whose eigenvalues are the numbers
\[
r_i:=\frac{\hat\lambda_i(1-\alpha\hat\lambda_i)^K}{1-(1-\alpha\hat\lambda_i)^K}, \quad \hat\lambda_i\ne0.
\]
Since $x\mapsto\abs{1-\alpha x}$ is convex, its maximum over the interval $[\hat\lambda_{\min}^+,\hat\lambda_{\max}]$ is attained at one of the endpoints: $\abs{1-\alpha\hat\lambda_i}\le \max\big(\abs{1-\alpha\hat\lambda_{\min}^+},\abs{1-\alpha\hat\lambda_{\max}}\big)=\rho_1$ for every $i$ with $\hat\lambda_i\ne0$. The assumption $\alpha<2/\hat\lambda_{\max}$ gives $\rho_1<1$; therefore
\[
\abs{r_i}\le\frac{\hat\lambda_{i}\rho_1^K}{1-\rho_1^K}\le\frac{\hat\lambda_i}{1-\rho_1}\rho_1^K.
\]
Hence, denoting $\Norm{\cdot}_2$ the spectral norm, we have
\[
\Norm{R_K}_2\le \frac{\hat\lambda_{\max}}{1-\rho_1}\rho_1^K = O(\rho_1^K),
\]
which, along with \eqref{eq:learning_iterate_A_K_inv_bis}, leads to the first claim of Theorem~\ref{thm:implicit_bias}.

\paragraph{Loss $L_\sigma$ ($t=\sigma^2$).}
The argument is the same, without any kernel to single out; the inverse formula \eqref{eq:learning_iterate_A_K_inv} gives
\begin{align}
(A_K^\sigma)^{-1} &= \sum_{i=1}^d \frac{\hat\lambda_i+\sigma^2}{1-(1-\alpha(\hat\lambda_i+\sigma^2))^K}u_iu_i^\top \notag \\
&= \sum_{i=1}^d \bigg(\hat\lambda_i+\sigma^2+\frac{(\hat\lambda_i+\sigma^2)(1-\alpha(\hat\lambda_i+\sigma^2))^K}{1-(1-\alpha(\hat\lambda_i+\sigma^2))^K}\bigg)u_iu_i^\top \notag \\ 
&= \hat\Sigma+\sigma^2\I + R^\sigma_K, \label{eq:learning_iterate_A_K_sigma_inv}
\end{align}
where
\[
R^\sigma_K = \sum_{i=1}^d\frac{(\hat\lambda_i+\sigma^2)(1-\alpha(\hat\lambda_i+\sigma^2))^K}{1-(1-\alpha(\hat\lambda_i+\sigma^2))^K}u_iu_i^\top.
\]
Under the assumption $\alpha<2/(\hat\lambda_{\max}+\sigma^2)$,
\[
\Norm{R_K^\sigma}_2\le\frac{\hat\lambda_{\max}+\sigma^2}{1-\rho_2}\rho_2^K = O(\rho_2^K),
\]
with $\rho_2=\max\big(\abs{1-\alpha(\hat\lambda_{\min}+\sigma^2)},\abs{1-\alpha(\hat\lambda_{\max}+\sigma^2)}\big)$. With \eqref{eq:learning_iterate_A_K_sigma_inv}, this gives the second claim of Theorem~\ref{thm:implicit_bias}.
\end{proof}

\section*{LLM usage}
\addcontentsline{toc}{section}{LLM usage}
The appendix, initially written by the authors in another language, was translated into English with the help of LLMs. LLMs were also used to assist in checking the proofs and in writing the code that generates the figures. All LLM-assisted content was reviewed and verified by the authors.

\section*{Code availability}
\addcontentsline{toc}{section}{Code availability}
The code used to generate the figures is available at
\begin{center}
    \url{https://github.com/aperbost/diffusion-vs-Langevin}.
\end{center}
The Wasserstein distances are computed in closed form from the expressions of the eigenvalues of the covariance matrices given in the Appendices~\ref{app:ULA_closed_form}, \ref{app:ULD_closed_form} and~\ref{app:diffusion_closed_form}, so no stochastic simulation is involved.

\end{document}